\documentclass[11pt]{article}

\newcommand{\shorttitle}{Co-Recursive Semantic Self-Extension}

\usepackage[margin=1in]{geometry}

\usepackage{amsmath,amssymb,amsthm,mathtools}

\usepackage{booktabs}

\usepackage{graphicx}
\graphicspath{{figures/}{./}}

\usepackage[hidelinks]{hyperref}
\usepackage{xcolor}

\usepackage{fancyhdr}
\usepackage{natbib}

\theoremstyle{definition}
\newtheorem{definition}{Definition}

\newtheorem{assumption}{Assumption}
\theoremstyle{plain}
\newtheorem{proposition}{Proposition}
\newtheorem{theorem}{Theorem}
\newtheorem{corollary}{Corollary}

\theoremstyle{remark}
\newtheorem*{remark}{Remark}

\title{Co-Recursive Semantic Self-Extension: A Mathematical Foundation for Open-Ended Intelligence}

\author{%
  Simon Hegelich\textsuperscript{1}\thanks{Corresponding author: simon.hegelich@gaialogic.ai}, 
  Bardh Prenkaj\textsuperscript{1}, 
  Habiba Sarhan\textsuperscript{1}, 
  Morteza Shahrezaye\textsuperscript{1}, 
  Kolja Hegelich\textsuperscript{1},\\
  Erik Cambria\textsuperscript{2}, 
  Christian Montag\textsuperscript{3}%
}

\date{}

\begin{document}
\maketitle
\footnotetext[1]{GaiaLogic Unipessoal Lda.}
\footnotetext[2]{Nanyang Technological University, Singapore}
\footnotetext[3]{University of Macau}
\setcounter{tocdepth}{2}
\pagenumbering{roman}
\tableofcontents
\clearpage
\pagenumbering{arabic}

\section*{Abstract}
\addcontentsline{toc}{section}{Abstract}

Artificial intelligence has advanced through scale, data, and computation, yet remains bounded by its representational framework. We ask \textit{what makes intelligence open-ended}.

Every inferential system selects meaningful distinctions, limiting what it can learn. Within a fixed purpose, performance reaches a ceiling set by its semantic domain. Teleological succession, in which purposes arise from earlier ones, faces a similar limit: Boolean-closed purpose-transformations exhaust their reachable semantic territory.

We shift from purposes to inferential systems. Drawing on Kant and Gödel, we model cognitive states as partial evaluations over a formal language. Unresolved propositions may be undecided or undecidable. Reasoning can resolve the former but not the latter, leaving a frontier for extension. Non-trivial reductions of undecidable propositions expose undecidable intermediate premises whose reflective reification supports multiple decisive extensions. Finite trace-guided generation remains fallible, requiring meta-level selection and polycontextural exploration. Using Kolmogorov complexity, we show that no architecture with a fixed state description, transition mechanism, and initial condition can generate histories incorporating irreducible algorithmic information.

Because physical systems are fixed in this sense, we examine collectives. Coupling finitely many finite agents does not remove the bound. Extension requires persistent external memory preserving semantic content and historical coordinates. Collective generative capacity depends on balancing overlap for transfer with difference for generation.

We propose semantic self-extension as a design principle: \textit{intelligence should not merely resolve ambiguity within existing meanings, but transform unresolved structures into new spaces of meaning}, generating new distinctions, questions, representations, and purposes.

\section{Introduction: Epistemological Heuristics and Logical Design in AGI Architecture}\label{sec:introduction}

Artificial General Intelligence (AGI) research is not stuck for lack of computational power or data. It is stuck because of the architectural paradigms everyone keeps reaching for. Building a system capable of genuine machine autonomy calls for something older than deep learning: the theoretical frameworks of Immanuel Kant and Kurt Gödel, used here as engineering heuristics rather than as mathematical laws the machine must obey. Kant, in the \textit{Critique of Pure Reason} \citep{kant1998}, argued that human cognition actively constructs experience through \textit{a priori} structures rather than passively absorbing it. Take that seriously as a design principle, and the implication is direct. Without some pre-existing framework for organizing raw data, a system stays ``blind,'' unable to form coherent, self-directed goals. Gödel's Incompleteness Theorems \citep{godel1931} pose a parallel engineering challenge. Any formal system expressive enough to be useful will contain propositions it cannot decide. We do not need Gödel's theorems to hold in some strict mathematical sense here; what we need is the heuristic they suggest: a system that can invoke a higher-level observer language whenever it hits local undecidability, rather than simply failing.

Modern autoregressive transformers, the large language models behind most current AI, illustrate what happens when this Kantian principle gets ignored. The prevailing ``scaling hypothesis'' holds that simply growing these models and their training data will eventually produce AGI on its own. But an LLM is fundamentally a stateless function: it maps an input to a probability distribution over tokens, based on statistical correlations.
Contemporary transformers do use architectural priors, such as attention, positional encodings, and specific tokenization schemes, to learn implicit hierarchical and syntactic structure. Even so, prediction is all they are optimized for; there is no self-directed goal-generating mechanism underneath it. Lacking any pre-experiential drive to act or evaluate, an LLM depends entirely on the prompt it is given. It is a remarkably sophisticated mirror of human language, but without an \textit{a priori} foundation of its own, it has no way to generate its own motivation or direction.

Good Old-Fashioned AI (GOFAI) took the opposite approach: building intelligence top-down from rigid, pre-defined logical rules. It ran into the Gödelian problem from the other direction. Committed to a strict version of the Physical Symbol System Hypothesis, GOFAI systems aimed for total logical determinism, and when they hit an undecidable proposition, they failed: acting as partial functions that crash on undefined input. Nothing in the architecture treated ``undecidable'' or ``ambiguous'' as a legitimate, computable state rather than an error, because there was no meta-level built to operate on anything but two-valued logic. So when the real world's ambiguity inevitably showed up, these systems had no way to reflect on their own rules and adapt. They just got brittle and stopped.

Neuro-symbolic AI, which bolts a neural network onto a symbolic logic engine, is often pitched as a bridge between these two failure modes. We argue it is not one. Rather than producing a genuinely new kind of cognition, it tends to combine the weaknesses of both parents: a neural component that is still a blind, reactive statistical engine with no intrinsic direction, wired to a symbolic component that is still a rigid, flat system with no capacity to reflect on the neural network's ambiguities.

Put these historical failures together, and a pattern emerges: AGI is unlikely to come from any single, monolithic model. No one paradigm has managed to combine the Kantian requirement for \textit{a priori} structure with a Gödelian meta-level for handling incompleteness, so the question worth asking shifts from which model to which system. Minsky essentially made this argument in \textit{The Society of Mind} \citep{minsky1986}: \textit{intelligence is not a single algorithm; it is an emergent property of many different sub-agencies interacting}. Getting past the dead ends above means designing a composite architecture that keeps these requirements structurally separate while allowing them to integrate.

A prominent recent effort to move beyond purely statistical models is the $(C, U, V)$ framework of \citet{he2026cuv}, which characterizes an AGI agent as a point in a joint space comprising a cognitive architecture $C$, a set of potential or skill functions $U$, and a set of value functions $V$. Observing that conversational models such as ChatGPT are trained on corpora that ``inherently embed numerous---often conflicting---value systems,'' \citet[p.~179]{he2026cuv} propose a value-driven architecture in which $C$ selects active dimensions from $V$, recruits skills from $U$ to interpret scenes as relational parse graphs, and plans actions that increase the selected values. Yet their formalization also exposes the boundary of value-driven design within a pre-structured space: as $(C, U, V)$ co-evolves, the process approaches an equilibrium at which the architecture can no longer motivate or orchestrate new learning, a limit they term ``the `stopping problem' of Tong Agents, based on the analogy of the halting problem in a Turing machine'' \citep[p.~166]{he2026cuv}. Overcoming that stopping limit requires two architectural principles developed in the sections that follow: treating undecidable or conflicting evaluations not as defects to be removed by ``predefining clear values and principles'' \citep[p.~179]{he2026cuv} but as the source of \emph{semantic surplus}, and equipping the architecture with an \textit{a priori} meta-teleology whose explicit directive is to construct new teleological dimensions whenever existing purposes saturate.

\section{Laws of Intelligent Systems}\label{sec:laws}

The central claim of this paper is that intelligence is not the consequence of scale alone, but of a particular organizational structure. Two independent philosophical results constrain this structure. Kant establishes that every cognitive system requires an \textit{a priori} organization that precedes experience. Gödel demonstrates that every sufficiently expressive formal system is necessarily incomplete. Together, they imply a surprisingly narrow class of architectures capable of open-ended intelligence. \textbf{The following laws are our design interpretations of how we think an open-ended intelligent system could evolve from our basic assumptions.}

The first law is a strict duality: the system needs an object-level structure and a distinct meta-level structure, kept separate. The object level is responsible for interacting with the world. It conducts observations, performs operations, generates hypotheses, and stores empirical knowledge.

The meta level acts on the object level itself rather than on the world directly. It evaluates reasoning, compares alternatives, transforms representations, introduces new abstractions, and reorganizes prior knowledge.

Neither level alone is sufficient. An object system without a meta-system can accumulate experience but cannot systematically improve its own reasoning. A meta-system without an object system possesses nothing to reflect upon.

Following Kant, this duality is therefore not optional but constitutes the minimum a priori organization required for cognition. For both levels to interact, they must share a common representational substrate. Every internal state, rule, observation, and transformation must therefore be represented in a similar representational space. The architecture is consequently homoiconic: reasoning itself becomes manipulable as data.

Only under this condition can genuine self-reference emerge.

The second law follows from Gödel's limits on that shared language \citep{godel1931}: i.e., any sufficiently expressive system necessarily contains propositions that cannot be decided within its own axioms. For intelligent systems, this result has a direct operational interpretation. Incomplete knowledge is not an implementation defect; it is the driving force behind development, making it a permanent structural property. Instead of viewing incompleteness as failure, we reinterpret it as ambiguity. Ambiguity denotes every point at which multiple internally consistent interpretations remain simultaneously possible because the current representational system lacks sufficient expressive power to distinguish among them. Every intelligent system, therefore, contains a permanently evolving frontier of ambiguity. This frontier can move, but it cannot disappear.

That openness leads to the third law, and the most important one: structural ambiguity necessarily generates an excess of meaning. The system is thus incomplete: whenever multiple interpretations remain simultaneously admissible, it possesses more semantic possibilities than its current axioms can determine. We call this excess semantic surplus. Semantic surplus is not randomness. It is the collection of internally coherent but mutually competing explanations that remain unresolved inside the current representational framework. A purely static reading of Gödel's theorem establishes only a negative boundary---the existence of an unprovable sentence---because, as \citet[p.~10]{gunther1979life} observes, Gödelian arithmetization ``concerns only the extensional domain of logic and bypasses those intensional relations where dialectical principles come into play.'' Yet a non-omniscient system cannot know in advance whether a proposition is provable before attempting to derive it. The generative excess therefore does not reside in the unprovable sentence in isolation, but in the movement of the proof attempt itself: in Hegel's formulation, ``the real issue is not exhausted by stating it as an aim, but by carrying it out, nor is the result the actual whole, but rather the result together with the process through which it came about'' \citep[p.~2]{hegel1977phenomenology}. When a derivation stalls against an undecidable frontier, that operational failure does not resolve ``into abstract nothingness, but essentially only into the negation of its particular content'' \citep[p.~33]{hegel2010}; it is a \emph{determinate negation} through which ``a new form has thereby immediately arisen, and in the negation the transition is made'' \citep[p.~51]{hegel1977phenomenology}. Because in reflective logic ``content is at the same time the negative reflection of the form determinations into themselves'' \citep[p.~396]{hegel2010}, the incomplete derivation---which at the object level is merely the failed \emph{form} of a proof---becomes, when reflected upon by the meta-level, a determinate \emph{content} that defines competing ways of extending the system. We argue that genuine intelligence depends on maintaining and exploiting it. Semantic surplus, therefore, becomes the primary computational resource of intelligent systems. Without surplus, there exists nothing to evaluate, transform, or discover. Semantic surplus is the raw material autonomy runs on: it keeps the system from ever settling into a single deterministic loop or a static picture of reality. Because the available interpretations always outnumber what the \textit{a priori} rules can decide among, the system carries a permanent internal tension, the gap between an incomplete rule set and a surplus, and it is exactly this gap that the meta-structure has to navigate.

With a shared language and this tension between incompleteness and surplus in place, the system's operational dynamics follow fairly directly. A common linguistic topology makes basic feedback loops possible, in which one sub-agency's output becomes another's input, and that interoperability, in turn, allows architectural stacking: higher-order systems built on top of two or more foundational ones, which is more or less Minsky's society of mind realized concretely. Loops and stacking add complexity, but they are not what drives autonomy. The recursive structure is therefore not introduced for computational convenience. It is forced by structural incompleteness. Whenever ambiguity cannot be resolved within the current representational system, the system must construct a successor interpretative context that can reason about its predecessor.\\
Our fourth formulated law, therefore, is: an open-ended intelligence system is based on a recursive structure. Recursion here works in two complementary directions. The first, inward recursion, follows directly from the shared representational space: because the meta-structure speaks the same language as the object level, it can treat the object level's operations as data to manipulate. When the object-level produces an ambiguous surplus, the meta-system reads those unresolved results and feeds them back in as new starting points. More significantly, inward recursion also allows the meta-structure to rewrite the object-level's own rules and axioms, adjusting its constraints to better handle the surplus it generates. The most rigorous formalization of such homoiconic inward recursion in the literature is Schmidhuber's Gödel machine \citep{schmidhuber2006goedel}, a self-referential problem solver that ``rewrites any part of its own code as soon as it has found a proof that the rewrite is useful, where the problem-dependent utility function and the hardware and the entire initial code are described by axioms encoded in an initial proof searcher which is also part of the initial code'' \citep[p.~1]{schmidhuber2006goedel}. Yet because every self-modification must be certified by a target theorem derived from those initial axioms, the Gödel machine remains bound to the deductive closure of its starting formalism. Even when it modifies its own axioms or utility function, it may only replace axioms with ``theorems derivable from the original axioms'' \citep[p.~17]{schmidhuber2006goedel} and alter its utility function ``only if their new values are provably better according to the old ones'' \citep[p.~17]{schmidhuber2006goedel}, so that ``all possible `meta-levels' of the self-referential system are collapsed into one'' \citep[p.~13]{schmidhuber2006goedel}.

Standard inward recursion of this kind, which reduces every self-modification back to a base case certified by the initial axioms, is not enough on its own for open-ended intelligence; left alone, it settles into the deductive span of its initial configuration. So the architecture also needs co-recursion. Instead of collapsing meta-levels into a fixed state space, a co-recursively self-modifying system generates a sequence of distinct, higher-order semantic frameworks, each an escalation from the last. The permanent, unresolvable tension from its overcomplete surplus is what forces this: the system resolves its ambiguities by building a successor manifold with more expressive dimensions than the one it started in. That is what keeps it from ever halting. It keeps generating a new, elevated version of its own state space, turning what it has already synthesized into the coordinate system for its next stage. Dropping deductive certification within the initial axiom system does not, however, reduce self-modification to arbitrary or unguided mutation. Instead of requiring a proof inside the old regime that the rewrite improves a fixed utility function, a co-recursive step is disciplined from below by \emph{reflective preservation}---the requirement that the validated inferential structure of the predecessor remain representable within the successor via a structure-preserving translation---and from above by \emph{decisive resolution}, the requirement that the successor framework settle at least one frontier proposition that was formally undecidable in its predecessor. Because multiple candidate extensions can satisfy both constraints simultaneously without any deduction inside the predecessor deciding among them, recursion produces structured exploration rather than deductive certainty: a higher-order interpretation may fail to resolve the ambiguity that motivated it or may introduce fresh inconsistencies of its own.

Because co-recursive exploration generates multiple candidate interpretations and successor frameworks that cannot be collapsed into a single two-valued deductive system, the architecture requires a fifth law: polycontexturality.\footnote{The concept of \emph{polycontexturality}, originally developed in philosophy and logic by Gotthard Günther \citep{gunther1979life} and subsequently adopted in social systems theory by Niklas Luhmann \citep{luhmann1999gesellschaft}, holds that a phenomenon can be perceived or interpreted across multiple, mutually irreconcilable frameworks simultaneously, operating through second-order observation in which an observer recognizes that its own first-order distinctions are not the only admissible ones \citep{bartmanski2021imbricated}.} In a polycontextural architecture, distinct logical domains---each locally coherent and governed by its own inferential commitments---coexist without being prematurely forced into a single global valuation. As \citet{gunther1979life} shows, while a single contexture is a domain whose internal structure ``can be exhaustively described by a two-valued logic'' \citep[p.~3]{gunther1979life} and in which ``every logical operation we can perform is confined to the contexturality in which it originates'' \citep[p.~3]{gunther1979life}, ``{[}t{]}he relation of the cognizing subject to its range of objects is always one of discontexturality'' \citep[p.~4]{gunther1979life}, making it ``impossible to bring two different contexturalities into an immediate confrontation'' \citep[p.~4]{gunther1979life} as if they shared ``a common ontic measure'' \citep[p.~13]{gunther1979life}. Although each individual regime remains locally two-valued, ``the positive value in contexture A'' is not identical with ``the positive value in contexture B'' \citep[p.~6]{gunther1979life}, so that ``linking two elementary contextures into a compound contexture'' requires a third contexture to ``mediate between the other two'' \citep[p.~15]{gunther1979life}. Architecturally, this means that the object/meta duality of the First Law and the competing candidate extensions of the Fourth Law cannot be flattened into one mono-contextural calculus: where classical two-valued logic treats incompatibility between contexts as a contradiction that collapses the system by explosion, the meta-level serves as the mediating contexture in which partially incompatible regimes are held open as a structural locus for forming new distinctions.

We can formulate a sixth law from the structural independence of recursive contexts. Because recursive transformations may fail, no intelligent system can discard previous semantic structures. Earlier interpretations may remain valuable in future contexts. Memory, therefore, becomes a logical necessity rather than a storage convenience.

However, preserving semantic content alone is insufficient. The meaning of every representation depends upon the recursive context in which it was produced. Memory must therefore preserve not only what was represented but also where it resides within the system's evolving semantic organization. Every remembered structure consequently retains both its semantic content and its position within the recursive network of transformations from which it emerged. Only under these conditions can future recursive processes correctly revisit, reinterpret, compare, or reactivate earlier reasoning.

Intelligence, therefore, depends not only on generating semantic surplus but also on preserving the semantic coordinates that make this surplus accessible. The architecture accumulates not merely a history of representations but a navigable semantic landscape in which every memory remains embedded within its original context while simultaneously becoming available for future reinterpretation.

Recursion, memory, and polycontexturality generate an expanding space of possible future transformations. A principle is therefore required that determines which transformations should be pursued. Such a principle cannot itself arise from empirical processing, because empirical processing already presupposes a criterion by which alternative transformations are evaluated. Following Kant, this criterion must therefore be \textit{a priori}. We identify this organizing principle as teleology, which constitutes our seventh and final law. Rather than being a static logical axiom, teleology provides the architecture with a built-in ``why'' that precedes empirical processing and guides the system's recursive evolution.

The particular content of this teleology is contingent. Different intelligent systems may begin with different ultimate purposes, such as preserving life, maximizing knowledge, promoting human welfare, or pursuing entirely different objectives. What is necessary, however, is not the specific content of the teleology but its recursive accessibility. Because the architecture is homoiconic, teleology itself is represented within the common symbolic substrate and therefore becomes an object of meta-reasoning alongside every other internal state. The system must consequently be capable of representing, evaluating, transforming, and extending its own goals.

The contrast with the $(C, U, V)$ formulation of \citet{he2026cuv} illustrates why a fixed value vector cannot sustain open-endedness, even when paired with an adaptive cognitive architecture. In their model, the $V$ system is ``a high-dimensional vector, with each dimension corresponding to a specific human need, such as health, wealth, and peace'' \citep[p.~167]{he2026cuv}, while the cognitive architecture $C$ applies a parameterized selector $V_{\mathrm{select}}(V, \theta_{\mathrm{agent}})$ ``with each element indicating whether a particular value dimension has been activated'' \citep[p.~170]{he2026cuv}. Reweighting or sequentially activating coordinates within a finite, pre-indexed vector space is not a transformation of teleology. As \citet{he2026cuv} themselves note in their analysis of value-space convergence, when an agent's value system is constrained by its architecture, ``it may not recognize the purpose or benefits of continued learning, which can also lead to a convergence in learning'' \citep[p.~175]{he2026cuv}---so that once basic needs are met, agents ``stop striving for self-improvement, essentially `lying flat'\,'' \citep[p.~175]{he2026cuv}. They illustrate the same limit at the population level: a monkey population possesses in its skill space $U$ ``the physical ability to perform actions such as striking stone or wood together, which would eventually produce fire. However, the population's $V$ space lacks the ability to understand the fact that these actions might lead to fire or to recognize the value of starting a fire by friction'' \citep[p.~178]{he2026cuv}. Within the architecture proposed here, escaping that trap requires two conditions that a selector over fixed $V$-dimensions lacks. First, by the first and second laws, $C$, $U$, and $V$ must share a homoiconic representational substrate; without it, as \citet{he2026cuv} acknowledge when asking whether an agent can construct another agent, the system ``has a structured cognitive framework but may lack the ability to fully understand its own structure without learning'' \citep[pp.~178--179]{he2026cuv}. Second, the architecture cannot start merely with first-order object values; its foundational \textit{a priori} teleology must include the meta-directive to seek and construct new telos out of semantic surplus whenever existing value dimensions saturate or conflict.

This distinction is fundamental. A recursive architecture with a fixed teleology remains constrained by that teleology, regardless of how sophisticated its reasoning becomes. While it may recursively optimize its behavior, it cannot recursively transform the principle that governs its optimization. Such a system, therefore, remains bounded by its original \textit{a priori} organization. An open-ended intelligent system, by contrast, must be capable of recursively transforming its own teleological structure. The recursive closure of the architecture is achieved only when its goals themselves become subject to recursive evaluation and transformation. In this sense, open-ended intelligence is grounded not in any particular final objective, but in the capacity to continually construct, evaluate, and refine the very teleologies that guide its future development.

The recursive transformation of teleology does not imply unbounded expansion within a single semantic framework. On the contrary, each recursive step constructs a new representational framework that is locally coherent and internally stable while extending the expressive power of its predecessor. Open-endedness therefore emerges not from infinite growth inside one semantic system, but from an indefinite succession of finite, locally stable semantic structures. Each recursive transformation resolves part of the current ambiguity by establishing a new temporary equilibrium. In contrast, the remaining unresolved ambiguity provides the semantic surplus that eventually motivates the construction of a successor framework. In this way, intelligence progresses through a sequence of recursive semantic reorganizations rather than through unlimited expansion within a fixed representational space. Having established the seven necessary laws governing intelligent systems, we can now formulate a precise definition of open-ended intelligence. Importantly, this definition is not assumed a priori but follows directly from the preceding derivation.

\begin{definition}[Open-Ended Intelligence]
    \textit{An open-ended intelligent system is a recursively self-organizing cognitive architecture that continuously resolves structural ambiguity by generating semantic surplus, preserving alternative semantic structures in memory, and recursively transforming both its representational framework and its own teleology. Each recursive transformation increases the system's expressive capacity while simultaneously introducing new ambiguity, ensuring that cognitive development remains indefinitely extensible.}
\end{definition}

The seven laws should not be understood as independent design principles but as a single recursive process. Each law establishes the conditions required for the next. In contrast, the final law---recursive teleology---initiates a new object/meta-organization, thereby reconstructing the first law at a higher semantic level. The architecture is therefore recursively closed: applying the seven laws to an intelligent system necessarily produces another intelligent system governed by the same seven laws, but operating within a more expressive semantic framework. Open-ended intelligence emerges not from any individual law in isolation, but from the indefinite recursive application of the complete system of laws to itself.

\section{Why Open-Ended Intelligence Requires Recursive Teleology (Design Rationale)}

\subsection{The Scaling Paradox}\label{sec:scaling_paradox}

The dominant paradigm in contemporary artificial intelligence operates under the ``scaling hypothesis''. This hypothesis assumes that simply increasing parameters, computational steps, and data volume within statistical architectures will eventually yield AGI. Yet, this technical expansion fails to produce or explain the defining attributes of general intelligence: autonomous curiosity, self-directed goal formation, real-time ontology creation, or scientific revolutions. Under pure scaling, models converge toward the empirical probability function of their training distributions, leaving them entirely dependent on human curation or reinforcement feedback. This ``parasitic intelligence'' is structurally incapable of true open-ended novelty.

\subsection{Every Intelligence Begins with Discrimination}\label{sec:intelligence_begins}

A fundamental ontological error of the scaling paradigm is the conflation of raw
signals with meaning. Raw computer-readable signals $S$ possess no inherent
meaning; they become semantics, or data, only in relation to an active purpose. Let $\Phi$ be a space of purposes and let $M \subseteq S \times \Phi$ be a relevance relation, writing $M(s,\phi)$ for ``$s$ is meaningful under $\phi$''. Each purpose induces a semantic domain
\begin{equation}\label{eq:dphi}
  D_{\phi} \;:=\; \{\, s \in S \mid M(s,\phi) \,\}, \qquad \phi \in \Phi .
\end{equation}
An agent does not stand in the same relation to every purpose. At any stage $n$ of its development, it pursues some purposes, has considered and rejected others, and has not yet considered the rest. We therefore partition its current repertoire $\Phi_{n} \subseteq \Phi$ into three blocks,
\begin{equation}\label{eq:partition}
  \Phi_{n} \;=\; \Phi^{+}_{n} \,\dot\cup\, \Phi^{-}_{n} \,\dot\cup\,
                 \Phi^{\circ}_{n},
  \qquad
  \Phi^{+}_{n} \neq \emptyset, \quad \Phi^{-}_{n} \neq \emptyset,
\end{equation}
where $\Phi^{+}_{n}$ are the \emph{pursued} purposes, $\Phi^{-}_{n}$ the \emph{rejected} ones, and $\Phi^{\circ}_{n}$ the \emph{dormant} ones: those available to the architecture but never yet considered. We require $\Phi^{+}_{n}$ and $\Phi^{-}_{n}$ to be finite; $\Phi^{\circ}_{n}$ carries whatever unexplored extent the purpose space has.

To organize raw sensation into a coherent world model, an agent must actively select what is relevant and reject what is irrelevant. The data available to it at stage $n$ is accordingly
\begin{equation}\label{eq:data}
  D_{n} \;:=\; \Bigl\{\, s \in S \;\Bigm|\;
    \bigl(\exists \phi^+ \,(\phi^+ \in \Phi^{+}_{n} \wedge M(s,\phi^+))\bigr)
    \;\wedge\;
    \bigl(\forall \phi^- \,(\phi^- \in \Phi^{-}_{n} \rightarrow \neg M(s,\phi^-))\bigr)
    \,\Bigr\} ,
\end{equation}
which, unfolding \eqref{eq:dphi}, has the equivalent set-algebraic form
\begin{equation}\label{eq:data-alt}
  D_{n} \;=\; \Bigl(\bigcup_{\phi^+ \in \Phi^{+}_{n}} D_{\phi^+}\Bigr)
      \;\setminus\;
      \Bigl(\bigcup_{\phi^- \in \Phi^{-}_{n}} D_{\phi^-}\Bigr).
\end{equation}
A signal is admitted when at least one pursued purpose finds it meaningful and \emph{no} rejected purpose does. Dormant purposes impose no constraint: the agent excludes only what it has actively judged irrelevant, not what it has never judged at all. This is what keeps \eqref{eq:data} non-trivial as the repertoire grows. Were $\Phi^{-}_{n}$ to absorb everything not currently pursued, then for a sufficiently rich purpose space, almost every signal would be meaningful under some rejected purpose, and $D_{n}$ would collapse toward $\emptyset$---the richer the agent's repertoire, the less data would survive. The three-way partition avoids this: i.e., because $\Phi^{-}_{n}$ is finite, the universal clause in \eqref{eq:data} ranges over finitely many purposes at every stage.

We assume this construction is non-degenerate, $D_{n} \neq \emptyset$. In other words, the agent finds at least one signal meaningful. Note that this is an assumption on the architecture, not a consequence of \eqref{eq:data}, which is equally well satisfied by the empty set.

Equations \eqref{eq:data} and \eqref{eq:data-alt} make meaning exclusive by construction. A signal that is universally meaningful, $M(s,\phi)$ for all $\phi \in \Phi$, is excluded from $D_{n}$, since it is meaningful under every rejected purpose as well; relevance that does not discriminate between purposes
carries no information about purpose. This exclusivity has a direct algorithmic counterpart that will return in \S\ref{sec:what_the_result_does_and_does_not_establish}: just as a signal meaningful under every purpose carries no information about purpose, a program that enumerates every computable continuation without exclusion \citep{schmidhuber2000ate} makes no semantic discrimination.

The stage index is not decorative. As the agent develops, purposes move between the three blocks: a dormant purpose may be adopted, a pursued one abandoned. Each such movement redraws \eqref{eq:data-alt} and therefore changes which signals count as data. In particular, a signal excluded at stage $n$ because some rejected purpose found it meaningful may be admitted at stage $n+1$ once that purpose is no longer rejected. The mechanism governing these movements is developed in \S\ref{sec:recursive_teleology}; here, we note only that $D_{n}$ is a function of the agent's current teleological configuration, not a fixed feature of $S$.

\subsection{Why Fixed Purposes Eventually Saturate}\label{sec:saturation}

Machine learning architectures operate according to the classical definition given by Tom Mitchell \citep[p.~2]{mitchell1997machine}:

\begin{quote}
  \textit{A program learns from experience $E$ with respect to a task class $T$   and performance measure $P$ if its performance at tasks in $T$, measured by $P$, improves with $E$.}
\end{quote}

Fix a stage $n$ and a pursued purpose $\phi \in \Phi^{+}_{n}$ with task class
$T_{\phi}$. Experience is restricted to what that purpose deems meaningful and
what the agent's current discrimination admits,
\begin{equation}\label{eq:expdom}
  E \;\subseteq\; D_{\phi} \cap D_{n} \;=:\; D_{\phi,n} .
\end{equation}
The intersection here is not redundant: by \eqref{eq:data-alt} a signal meaningful under $\phi$ is nonetheless excluded from $D_{n}$ whenever some rejected purpose also finds it meaningful, so $D_{\phi} \not\subseteq D_{n}$ in general. Learning is therefore an optimization over a fixed semantic domain. Write
\begin{equation}\label{eq:perf}
  P_{\phi} \colon 2^{D_{\phi,n}} \to \mathbb{R},
\end{equation}
where $2^{X}$ denotes the power set of $X$ and $P_{\phi}(E)$ is the best performance on $T_{\phi}$ attainable from experience $E$. Two assumptions follow from Mitchell's definition:
\begin{align}
  \text{(A1)}\quad & E \subseteq E' \subseteq D_{\phi,n}
    \;\Longrightarrow\; P_{\phi}(E) \leq P_{\phi}(E'), \label{eq:a1}\\
  \text{(A2)}\quad & P_{\phi} \text{ is bounded above on } 2^{D_{\phi,n}}.
    \label{eq:a2}
\end{align}
(A1) states that more experience never hurts an idealized learner; (A2) holds trivially whenever $D_{\phi,n}$ is finite. Together, they yield a finite ceiling
\begin{equation}\label{eq:ceiling}
  P^{\ast}_{\phi} \;:=\; \sup\{\, P_{\phi}(E) : E \subseteq D_{\phi,n} \,\}
  \;<\; \infty ,
\end{equation}
attained at $E = D_{\phi,n}$ when $D_{\phi,n}$ is finite.
\footnote{The ceiling depends on the stage $n$ as well as on the purpose $\phi$, since $D_{\phi,n} = D_{\phi} \cap D_{n}$ and $D_{n}$ varies with the partition \eqref{eq:partition}; strictly it should be written $P^{\ast}_{\phi,n}$. We suppress the second index because the argument of here holds the configuration fixed, and because \S\ref{sec:recursive_teleology} compares ceilings across successor purposes rather than across stages of a single one. Where the distinction matters---a purpose whose ceiling rises because a rejection elsewhere was withdrawn, rather than because the purpose itself changed---the full notation should be restored.}
For any increasing stream of experience $E_{1} \subseteq E_{2} \subseteq \cdots \subseteq D_{\phi,n}$, the sequence $\bigl(P_{\phi}(E_{k})\bigr)_{k}$ is non-decreasing and bounded, hence convergent: i.e., performance saturates.

The plateau is a structural consequence of optimizing within a fixed semantic domain, not of limited computational resources. It persists under unbounded capacity and unbounded compute, because the ceiling $P^{\ast}_{\phi}$ depends only on $D_{\phi,n}$. Once $D_{\phi,n}$ has been sufficiently explored, further experience can refine existing distinctions but cannot generate new ones.

Progress resumes only if the teleological configuration itself changes. By \eqref{eq:partition} and \eqref{eq:data-alt}, there are two such movements. Promoting a dormant purpose, $\phi^\circ \in \Phi^{\circ}_{n} \to \Phi^{+}_{n+1}$, enlarges the first union of \eqref{eq:data-alt} and admits signals no pursued purpose previously found meaningful. Withdrawing a rejection, $\phi^- \in \Phi^{-}_{n} \to \Phi^{\circ}_{n+1}$, shrinks the second union and readmits signals that were barred despite being meaningful under a pursued purpose. Either movement changes $D_{n}$, hence $D_{\phi,n}$, hence the ceiling \eqref{eq:ceiling}, thereby exposing regions of $S$ that \eqref{eq:data} had discarded. The mechanism generating and selecting such movements is the subject
of \S\ref{sec:recursive_teleology}.

\subsection{Recursive Teleology}\label{sec:recursive_teleology}

The saturation of performance within a fixed purpose does not imply the exhaustion of the underlying signal space. It indicates that the current purpose has reached the limits of the distinctions it can express. Escaping the plateau, therefore, requires constructing a new purpose rather than accumulating further experience.

\paragraph{Saturation as a trigger.}

Fix a tolerance $\varepsilon > 0$. A state $(\phi_{n}, E_{n})$ with
$E_{n} \subseteq D_{\phi_{n},n}$ is \emph{$\varepsilon$-saturated} when
\begin{equation}\label{eq:sat}
  P^{\ast}_{\phi_{n}} - P_{\phi_{n}}(E_{n}) \;<\; \varepsilon ,
\end{equation}
with $P^{\ast}_{\phi}$ as in \eqref{eq:ceiling}. By \eqref{eq:a1} and \eqref{eq:a2}, every increasing stream of experience reaches this condition in finitely many steps. Saturation is what makes a teleological transition \emph{admissible}.

\paragraph{Generation of candidate purposes.}

Let $\mathcal{G}$ denote the purpose-transformation rules available to the architecture. Applied to the current purpose together with the history $H_{n} = (\phi_{0},\dots,\phi_{n})$ of purposes already adopted, these generate a set of \emph{candidate successor purposes}
\begin{equation}\label{eq:egen}
  \mathcal{E}_{\Phi}(\phi_{n}, H_{n})
  \;=\; \{\, g(\phi_{n}, H_{n}) : g \in \mathcal{G} \,\} ,
\end{equation}
which enter the repertoire as dormant,
\begin{equation}\label{eq:phigrow}
  \Phi_{n+1} \;=\; \Phi_{n} \cup \mathcal{E}_{\Phi}(\phi_{n}, H_{n}),
  \qquad
  \Phi_{\infty} \;=\; \bigcup_{n \geq 0} \Phi_{n} .
\end{equation}
Each $\Phi_{n}$ may be finite while $\Phi_{\infty}$ is not. Purposes are therefore \emph{constructed} from their predecessors rather than selected from a catalog fixed in advance, and the succession is not confined to a finite hierarchy (see the condition \S\ref{sec:local_convergence_and_global_openendedness} requires of open-endedness). We leave $\mathcal{G}$ largely unspecified at this stage, as \S\ref{sec:reflective_ext_and_semantic_surplus} leaves unspecified the transformation rules generating $\mathcal{E}(\mathcal{F}_{n})$; \eqref{eq:egen}
is the teleological instance of that construction.

One constraint on the set of rules $\mathcal{G}$ is imposed by the condition expressed in equation \eqref{eq:succ}. A successor must be able to accept signals that were not permitted by any previously adopted purpose, which means that $\mathcal{G}$ cannot consist solely of rules that narrow semantic domains. If every rule $g \in \mathcal{G}$ met the condition $D_{g(\phi,H)} \subseteq D_{\phi}$---as is the case with conjunctive refinement, since $D_{\phi \wedge \psi} = D_{\phi} \cap D_{\psi}$---then every candidate would fall within $D_{H_{n}}$. In that case, no successors would be admissible, and the process of succession would cease at $n = 0$. Recursive teleology, therefore, cannot be driven by purpose refinement alone: i.e., at least one rule must be non-monotone with respect to semantic domains,
\begin{equation}\label{eq:gcond}
  \exists\, g \in \mathcal{G} : \quad
  D_{g(\phi, H)} \not\subseteq D_{\phi} .
\end{equation}
This condition is necessary but not sufficient, and it exposes a limit of the present model when purposes are represented solely as subsets of a fixed signal space. If $\mathcal{G}$ generates a Boolean algebra over a finite set of primitive purposes, the induced family $\{D_{\phi}\}$ is closed under union, intersection, and complement, and therefore contains at most $2^{|S|}$ distinct domains. The strictly increasing sequence $\bigl(D_{H_{n}}\bigr)_{n}$ would then increase only finitely often, and saturation would reappear at the level of the purpose space itself---the very structure \S\ref{sec:scaling_paradox} identifies as the defect of the scaling paradigm, displaced upward by one level rather than removed. Sustaining teleological succession indefinitely, therefore, requires embedding purposes within a generative linguistic substrate that is not closed under a finite Boolean algebra. \S\ref{sec:toward_sematic_selfextension} and \S\ref{sec:formalization} supply that substrate by coupling teleological selection to the reflective extension of formal inferential systems, for which Gödelian incompleteness prevents any corresponding closure.

The same closure constraint explains why value-driven architectures whose goals live in a pre-structured vector space eventually saturate, and why appealing to dynamics in the cognitive architecture $C$ does not by itself supply a generative mechanism for new purposes. In the $(C, U, V)$ framework of \citet{he2026cuv}, learning concludes each cycle by ``updating the $V$-selector and $U$-selector, which govern the agent's ability to prioritize values and select appropriate skills. These selectors are the linchpins of the framework, enabling the agent to adapt its behavior to changing circumstances and objectives'' \citep[p.~172]{he2026cuv}. In the terms of \eqref{eq:partition}, updating the parameters of $V_{\mathrm{select}}$ merely redistributes pre-existing coordinates of $V$ between the dormant block $\Phi^{\circ}_{n}$ and the pursued block $\Phi^{+}_{n}$. Where \citet{he2026cuv} address how $(C, U, V)$ might transition between distinct value regimes under continuous dynamics $\mathrm{d}(C,U,V)/\mathrm{d}t = -\nabla \mathcal{F}(C,U,V)$, they attribute regime shifts to the fact that value feedback is ``inherently fuzzy and variable. This fuzziness and variability introduce a certain degree of jumpiness in the system's dynamics, such that the evolution process must deal with complex functional landscapes and realize the transitions of values and capabilities. To mathematically characterize and analyze this jumpiness, we can model it as a saddle escaping problem'' \citep[p.~177]{he2026cuv}. Yet perturbing a trajectory off a saddle point along a negative-curvature direction of $\nabla^{2}\mathcal{F}$ remains motion across a fixed functional landscape $\mathcal{F}$ defined over a fixed parameter space $(C, U, V)$, which under their Kurdyka--\L{}ojasiewicz condition still converges to a local minimum where learning halts. Noise on a fixed landscape can jar a selector into activating a different subset of already-defined coordinates, just as Boolean recombination within \eqref{eq:gcond} permutes a finite repertoire, but it does not specify how $C$ could construct a new purpose $\phi_{n+1}$ whose semantic domain $D_{\phi_{n+1}}$ lies outside $D_{H_{n}}$. Genuine teleological open-endedness requires that the meta-level rule set $\mathcal{G}$ be paired with an explicit \textit{a priori} meta-telos---the directive to construct new purposes whenever $D_{H_{n}}$ saturates---drawing candidate distinctions from the unresolved semantic surplus of the current regime.

\paragraph{Selection.}

Write $D_{H_{n}} = \bigcup_{k \leq n} D_{\phi_{k}}$ for the semantic territory the agent's purposes have jointly covered. A \emph{teleological transformation operator}\footnote{Note that $\Phi^{<\omega} = \bigcup_{k \geq 0} \Phi^{k}$ denotes the finite sequences over $\Phi$, so that the first argument is the history $H_{n}$ and the second the experience accumulated under it. The harpoon $\rightharpoonup$ indicates a partial map: the operator is defined only on $\varepsilon$-saturated states. Its value is a non-empty set of admissible successors rather than a single purpose, so that the architecture determines which continuations are available without determining which is taken.}
\begin{equation}\label{eq:top}
  \mathcal{T} \colon \Phi^{<\omega} \times 2^{S}
  \;\rightharpoonup\; 2^{\Phi_{\infty}} \setminus \{\emptyset\},
\end{equation}
defined only on $\varepsilon$-saturated states, returns the successors the architecture deems admissible,
\begin{equation}\label{eq:succ}
  \mathcal{T}(H_{n}, E_{n})
  \;\subseteq\;
  \bigl\{\, \phi \in \mathcal{E}_{\Phi}(\phi_{n}, H_{n})
    \;\bigm|\; D_{\phi} \not\subseteq D_{H_{n}} \,\bigr\},
  \qquad
  \phi_{n+1} \in \mathcal{T}(H_{n}, E_{n}).
\end{equation}
The operator determines which successors are admissible; it does not determine which is taken. The novelty condition $D_{\phi} \not\subseteq D_{H_{n}}$ is the substantive one. Novelty of the purpose alone, $\phi_{n+1} \notin H_{n}$, would not suffice: distinct purposes may induce identical semantic domains, in which case the successor exposes nothing its predecessors had not already admitted. Requiring $D_{\phi_{n+1}} \not\subseteq D_{H_{n}}$ makes $\bigl(D_{H_{n}}\bigr)_{n}$ strictly increasing and is the teleological counterpart of the persistent-novelty condition stated algorithmically as Assumption \ref{as:novelty} in \S\ref{sec:independent_semantic_extension}.

In parallel with the generative richness condition \eqref{eq:genrich} of \S\ref{sec:from_understanding_to_speculative_intelligence}, we require that saturation admit more than one continuation,
\begin{equation}\label{eq:rich4}
  \bigl|\mathcal{T}(H_{n}, E_{n})\bigr| \;\geq\; 2 ,
\end{equation}
which is strictly stronger than requiring two candidates in \eqref{eq:egen},
since the novelty condition in \eqref{eq:succ} may eliminate candidates. This is
an architectural requirement, not a consequence of saturation: nothing in
\eqref{eq:sat} guarantees that a plateau admits any continuation at all, let
alone two. When \eqref{eq:rich4} holds the trajectory branches, and the space of
possible trajectories acquires the structure invoked in \S\ref{sec:trajectory_complexity}.

\paragraph{State update.}

Adopting $\phi_{n+1}$ promotes it from the dormant block and revalues accumulated
experience,
\begin{align}
  \Phi^{+}_{n+1} &= \Phi^{+}_{n} \cup \{\phi_{n+1}\},
    \label{eq:promote}\\
  \Phi^{\circ}_{n+1} &= \bigl(\Phi^{\circ}_{n} \cup
    \mathcal{E}_{\Phi}(\phi_{n},H_{n})\bigr) \setminus
    \bigl(\{\phi_{n+1}\} \cup \Phi^{-}_{n+1}\bigr),
    \label{eq:dormant}\\
  E_{n+1} &\supseteq E_{n} \cap D_{\phi_{n+1},\,n+1} ,
    \label{eq:carry}
\end{align}
where $\Phi^{-}_{n+1}$ collects those candidates the architecture explicitly rejects, the remainder stays dormant. Equation \eqref{eq:carry} states that experience survives the transition exactly insofar as it remains meaningful under the successor and admissible under the new configuration. Since \eqref{eq:promote}--\eqref{eq:dormant} redraw the partition, the admissible data $D_{n}$ of \eqref{eq:data} is itself stage-dependent, and signals discarded at stage $n$ may become data at stage $n+1$. This is the formal content of the claim closing \S\ref{sec:saturation}.

\paragraph{Scope of the operator.}

$\mathcal{T}$ governs purpose \emph{succession} only. As noted in \S\ref{sec:saturation}, a
ceiling may also rise without any change of purpose---when a rejection
elsewhere is withdrawn, $\phi^- \in \Phi^{-}_{n} \to \Phi^{\circ}_{n+1}$, enlarging
$D_{n}$ and hence $D_{\phi_{n},n}$. Such reconfigurations leave $\phi_{n}$ in
place and fall outside \eqref{eq:top}. We treat them as a distinct class of
teleological transition and do not formalize them here; \S\ref{sec:reflective_ext_and_semantic_surplus} subsumes both under
the single operator $\mathcal{M}_{n}$ acting on the space of admissible
successors.

\paragraph{The two processes.}

Learning therefore alternates between accumulation within a purpose,
\begin{equation}\label{eq:within}
  (\phi_{n}, E_{n}) \rightsquigarrow (\phi_{n}, E_{n}'),
  \qquad E_{n} \subseteq E_{n}' \subseteq D_{\phi_{n},n},
\end{equation}
which is non-decreasing in $P_{\phi_{n}}$ by \eqref{eq:a1} and saturates by
\eqref{eq:sat}, and transition between purposes via
\eqref{eq:succ}--\eqref{eq:carry}, giving
\begin{equation}\label{eq:traj4}
  (\phi_{0}, E_{0}) \rightsquigarrow (\phi_{1}, E_{1})
  \rightsquigarrow (\phi_{2}, E_{2}) \rightsquigarrow \cdots
\end{equation}
Unlike classical machine learning, where experience accumulates under a fixed
task, recursive teleology allows the task itself to evolve. Intelligence is
therefore no longer described solely as the optimization of performance within a
semantic space, but as the recursive construction of increasingly expressive
semantic spaces through the generation of successor purposes.

\subsection{Toward Semantic Self-Extension}\label{sec:toward_sematic_selfextension}

The preceding sections establish a design rationale rather than a formal theory. Their content can be stated in one sentence:

\begin{quote}
An agent that discriminates must discard what it discards, and what it can learn is bound by what it discards, and escaping that bound requires transforming the principle of discrimination itself rather than accumulating further experience under it.
\end{quote}

Every act of cognition performs two complementary operations simultaneously. It commits to one interpretation, and in doing so, excludes countless others. Classical machine learning optimizes the committed interpretation; \eqref{eq:a1} and \eqref{eq:ceiling} describe how far that optimization can go. Here, we ask a different question: \textit{whether the excluded alternatives can themselves become the driving force of subsequent cognition}. Equation \eqref{eq:succ} gives a first answer, in which a successor purpose is admissible precisely when it recovers signals that every previous purpose had set aside.

That answer is, however, bounded whenever purposes are represented merely as bare subsets of $S$. As \eqref{eq:gcond} and the discussion following it make clear, the territory $D_{H_{n}}$ recoverable over a fixed vocabulary of primitive predicates is constrained by the Boolean closure of the generative repertoire $\mathcal{G}$, just as reweighting a fixed value vector $V$ is constrained by its pre-indexed dimensions, so that under any finite primitive basis the succession eventually terminates. Conversely, pure formal extension without teleological discrimination (e.g., a universal dovetailer enumerating all computable continuations with $\Phi^{-}_{n} = \emptyset$ \citep{schmidhuber2000ate}), possesses no principle for excluding branches and therefore collapses into unselective enumeration. Teleological succession and formal language extension consequently require each other: teleology without language extension saturates in a finite Boolean algebra, while language extension without teleology cannot select a trajectory.

We therefore deepen the register rather than abandoning teleology. In \S\ref{sec:formalization}, each stage is equipped with an explicit formal inferential system $\mathcal{F}_{n} = (L_{n}, \Gamma_{n}, \vdash_{n})$ and the space of admissible extensions $\mathcal{E}(\mathcal{F}_{n})$ constructible from it. The structural coupling between the two levels is exact: an active teleological configuration $\Phi_{n}$ is articulated within the inferential regime $\mathcal{F}_{n}$; the excluded signal region $S \setminus D_{n}$ has its formal counterpart in the unresolved region $I(O_{n})$; the candidate set $\mathcal{E}_{\Phi}(\phi_{n},H_{n})$ of \eqref{eq:egen} is generated through the admissible extensions $\mathcal{E}(\mathcal{F}_{n})$; and the teleological transformation operator $\mathcal{T}$ of \eqref{eq:top} is realized at the meta-level by the selection operator $\mathcal{M}_{n}$. Moving from bare subsets of $S$ to formal systems $\mathcal{F}_{n}$ provides the generative linguistic substrate through which teleological succession escapes Boolean saturation: when a reflective extension $\mathcal{R}_{n} = (\mathcal{F}_{n+1}, \tau_{n})$ expands the language from $L_{n}$ to $L_{n+1}$ in order to settle propositions in $I^{\mathrm{inc}}(O_{n})$, the newly introduced vocabulary $L_{n+1} \setminus \tau_{n}(L_{n})$ makes it possible to formulate relevance predicates $M(s, \phi_{n+1})$---such as the capacity to ``recognize the value of starting a fire by friction'' \citep[p.~178]{he2026cuv} discussed in \S\ref{sec:laws}---that could not be expressed as Boolean combinations of the primitive purposes available in $L_{n}$.

Two structural properties are gained by this coupling. First, the Boolean bound on extension disappears, because the admissible successors of a sufficiently expressive formal system are never closed under a finite set of primitive predicates; by \S\ref{sec:successor_incompleteness}, each successor $\mathcal{F}_{n+1}$ generates a fresh undecidable frontier $I^{\mathrm{inc}}(O_{n+1}) \neq \emptyset$. Second, the empirical trigger of $\varepsilon$-saturation in \eqref{eq:sat} and the logical guarantee of incompleteness in \eqref{eq:godel} are unified: as local contractive reasoning exhausts the decidable region $I^{\mathrm{loc}}(O_{n})$ and drives performance toward the saturation ceiling $P^{\ast}_{\phi_{n}}$, the non-empty undecidable frontier $I^{\mathrm{inc}}(O_{n})$ guaranteed by \eqref{eq:irreduce} remains intact, providing the permanent structural locus upon which an \textit{a priori} meta-teleology can act.

What follows in \S\ref{sec:formalization} is therefore the formal architecture that makes recursive teleology sustainable, grounding the generation of candidate frameworks in logical incompleteness while preserving meta-level selection $\mathcal{M}_{n}$ as the locus of teleological commitment. That coupling is carried forward directly into the collective and autonomous regimes of \S\ref{sec:collective_teleology} and \S\ref{sec:proxy_teleology_and_autonomous_intelligence}, where multiple agents co-recursively transform both their inferential languages and the purposes they pursue.

\section{Formalizing Recursive AGI Systems: An Axiomatic Framework}\label{sec:formalization}

This section introduces a formal model for a class of recursively self-modifying cognitive systems. The objective is to define an architecture that explicitly incorporates logical incompleteness and meta-level selection as foundational design principles, keeping local, contractive reasoning structurally distinct from global, open-ended semantic growth.

\subsection{Formal System, Cognitive State, and Incompleteness}\label{sec:formal_system_cognitive_state_and_incompleteness}

Let
\begin{equation}\label{eq:fsys}
  \mathcal{F}_{n} = (L_{n}, \Gamma_{n}, \vdash_{n})
\end{equation}
denote the inferential system available at recursive level $n$, where $L_{n}$ is a formal language, $\Gamma_{n} \subseteq L_{n}$ its axioms, and $\vdash_{n}$ the associated proof relation. We assume throughout that each $\mathcal{F}_{n}$ is \emph{effectively axiomatized} ($\Gamma_{n}$ is recursively enumerable), \emph{consistent}, and \emph{sufficiently expressive} in the sense that it interprets Robinson arithmetic and can therefore represent every primitive recursive function. These are the standard hypotheses of the Gödel--Rosser theorem.\footnote{\S\ref{sec:introduction} treats incompleteness as an engineering heuristic rather than as a theorem that the architecture must satisfy. From this section onward, we take the hypotheses literally, since the results of \S\ref{sec:successor_incompleteness} and \S\ref{sec:independent_semantic_extension} depend on them. The heuristic reading remains available for architectures that do not meet these conditions; what such architectures lose is the guarantee in \eqref{eq:irreduce}, not the recursive structure built upon it. We invoke Rosser's strengthening rather than Gödel's original formulation so that consistency alone suffices, without the additional hypothesis of $\omega$-consistency.}

A cognitive state at level $n$ is a \emph{partial} semantic evaluation
\begin{equation}\label{eq:cogstate}
  O_{n} \colon L_{n} \rightharpoonup \mathcal{V}_{n},
\end{equation}
where $\mathcal{V}_{n}$ is the set of semantic commitments currently available to the system. We do not require $\mathcal{V}_{n}$ to be two-valued: the polycontextural organization described in \S\ref{sec:laws} admits states in which several partially incompatible commitments coexist. The \emph{unresolved region} of the state is the complement of its domain,
\begin{equation}\label{eq:unresolved}
  I(O_{n}) \;=\; L_{n} \setminus \operatorname{dom}(O_{n})
  \;=\; \{\, x \in L_{n} \mid O_{n}(x) \text{ is undefined} \,\}.
\end{equation}
That $O_{n}$ is undefined at $x$ does not mean that $x$ is meaningless or false. It records a limitation of the current cognitive state: no determinate commitment regarding $x$ has been established.

The evaluation and the proof relation are connected by a single assumption.

\begin{assumption}[Inferential Soundness]\label{as:sound}
For every $x \in L_{n}$,
\begin{equation}\label{eq:sound}
  O_{n}(x) \text{ is defined}
  \quad\Longrightarrow\quad
  \mathcal{F}_{n} \vdash x \ \text{ or } \ \mathcal{F}_{n} \vdash \neg x .
\end{equation}
\end{assumption}

\noindent
The system does not commit on grounds unavailable to $\mathcal{F}_{n}$. The converse is deliberately not assumed. A state may leave a decidable proposition unresolved, since the system need not have drawn every consequence of its axioms; assuming otherwise would impose logical omniscience and collapse each level to a single state, leaving no local dynamics for \S\ref{sec:temporal_recursive_dynamics} and \S\ref{sec:local_stability_and_global_recursive_extension} to describe.

The unresolved region therefore decomposes into two parts of quite different
character. Write
\begin{equation}\label{eq:decomp}
  I(O_{n}) \;=\; I^{\mathrm{loc}}(O_{n}) \;\dot\cup\; I^{\mathrm{inc}}(O_{n}),
\end{equation}
where
\begin{align}
  I^{\mathrm{loc}}(O_{n}) &= \{\, x \in I(O_{n}) \mid
    \mathcal{F}_{n} \vdash x \text{ or } \mathcal{F}_{n} \vdash \neg x \,\},
    \label{eq:iloc}\\
  I^{\mathrm{inc}}(O_{n}) &= \{\, x \in I(O_{n}) \mid
    \mathcal{F}_{n} \nvdash x \text{ and } \mathcal{F}_{n} \nvdash \neg x \,\}.
    \label{eq:iinc}
\end{align}
The first part is unresolved because the system has not yet done the work; the second is because the work cannot be done within $\mathcal{F}_{n}$ at all. Local transformation within a regime (\S\ref{sec:temporal_recursive_dynamics}) reduces $I^{\mathrm{loc}}$ and leaves $I^{\mathrm{inc}}$ untouched.

That $I^{\mathrm{inc}}$ is inhabited reflects the essence of incompleteness. According to the Gödel--Rosser theorem, any $\mathcal{F}_{n}$ that meets the above hypotheses allows for a sentence $G_{n} \in L_{n}$ with
\begin{equation}\label{eq:godel}
  \mathcal{F}_{n} \nvdash G_{n}
  \qquad\text{and}\qquad
  \mathcal{F}_{n} \nvdash \neg G_{n},
\end{equation}
whence, by Assumption~\ref{as:sound} and \eqref{eq:iinc},
\begin{equation}\label{eq:irreduce}
  G_{n} \in I^{\mathrm{inc}}(O_{n}),
  \qquad\text{so}\qquad
  I^{\mathrm{inc}}(O_{n}) \neq \emptyset .
\end{equation}

This is the first recursive pressure of the theory. An unresolved region, when non-empty, is not particularly remarkable on its own; any system that has not completed its reasoning will have one. However, what \eqref{eq:irreduce} establishes is stronger: i.e., part of that region is immune to further reasoning within the current system, regardless of the extent of reasoning performed. A system capable of representing its own reasoning thus faces a boundary that cannot be eliminated through continued inference at the same level. Instead, addressing this boundary requires a change in the inferential regime.

Two qualifications limit what \eqref{eq:irreduce} delivers on its own. Taken purely extensionally, it asserts only that $I^{\mathrm{inc}}(O_{n})$ is non-empty; it does not by itself guarantee that this unresolved region is large, structured, or productive, nor can a non-omniscient system without an external oracle inspect an arbitrary $x \in I(O_{n})$ in advance to tell whether $x \in I^{\mathrm{loc}}(O_{n})$ or $x \in I^{\mathrm{inc}}(O_{n})$ prior to attempting a derivation. The condition stating that unresolved structure allows for at least two admissible continuations is stated for general architectures in \eqref{eq:genrich} and does not follow from bare incompleteness alone; \S\ref{sec:reflective_ext_and_semantic_surplus} shows how, in any architecture that reflectively reifies its own failed proof searches, a multi-branched semantic surplus is constructively generated from the operational trace of attempting to resolve $I^{\mathrm{inc}}(O_{n})$. Additionally, $G_{n}$ serves as a witness rather than a description: $I^{\mathrm{inc}}(O_{n})$ will generally include numerous sentences of much greater cognitive interest than a self-referential construction. It is these more interesting sentences that motivate the extensions discussed in \S\ref{sec:reflective_ext_and_semantic_surplus}.

\subsection{Reflective Extension and Semantic Surplus}\label{sec:reflective_ext_and_semantic_surplus}

Resolving a proposition in $I^{\mathrm{inc}}(O_{n})$ requires, by \eqref{eq:iinc}, an extension of the inferential resources available to the system. No amount of local transformation reaches it. We therefore consider the construction of successor systems.

\paragraph{Admissible extensions.}

Let $\mathcal{E}(\mathcal{F}_{n})$ denote the set of \emph{admissible successor systems} constructible from $\mathcal{F}_{n}$ under the transformation rules available to the architecture. A successor
\begin{equation}\label{eq:admissible}
  \mathcal{F}_{n+1} = (L_{n+1}, \Gamma_{n+1}, \vdash_{n+1})
  \;\in\; \mathcal{E}(\mathcal{F}_{n})
\end{equation}
is admissible when it satisfies two conditions. First, \emph{regularity}: $\mathcal{F}_{n+1}$ is itself effectively axiomatized, consistent, and
sufficiently expressive in the sense of \S\ref{sec:formal_system_cognitive_state_and_incompleteness}. Second, \emph{reflective adequacy}: the construction supplies a translation
\begin{equation}\label{eq:reflect}
  \tau_{n} \colon L_{n} \longrightarrow L_{n+1}
\end{equation}
under which the successor can represent the syntax and proof relation of its predecessor, so that statements about $\mathcal{F}_{n}$ are expressible in $L_{n+1}$. We write $\mathcal{R}_{n} = (\mathcal{F}_{n+1}, \tau_{n})$ for the resulting \emph{recursive extension}, carrying both the successor system and the translation into it; \S\ref{sec:recursive_extension} develops its properties.

Reflective adequacy is less stringent than inclusion. We do not require that
\begin{equation}\label{eq:noinclusion}
\mathcal{F}_{n} \subseteq \mathcal{F}_{n+1},
\end{equation}
nor must $\tau_{n}$ preserve the interpretation of what it translates. A successor may clarify an ambiguity present in its predecessor, maintain it, reinterpret it, or introduce new ambiguities of its own. However, it must not lose the ability to discuss its origins: an architecture that cannot represent its own history cannot reflect on it. \S\ref{sec:recursive_extension} and \S\ref{sec:collective_memory_extended_cognitive_space} both rely on this principle. As a result, recursive intelligence is not defined by guaranteed convergence toward a single, complete representation but rather by the capacity to construct and reason about successive representational systems.

\paragraph{Semantic surplus.}

The set $\mathcal{E}(\mathcal{F}_{n})$ is generally not a singleton. Where it is not, the architecture faces several admissible ways of continuing, none of which $\mathcal{F}_{n}$ itself determines. We call this underdetermination the \emph{semantic surplus} of the state and write
\begin{equation}\label{eq:surplus}
  \mathrm{Surp}(O_{n}) \;=\; \mathcal{E}(\mathcal{F}_{n}),
  \qquad
  \text{with surplus \emph{present} when }
  \bigl|\mathrm{Surp}(O_{n})\bigr| \geq 2 .
\end{equation}
Semantic surplus is therefore neither randomness nor an arbitrarily large set of alternative answers. It is the space of admissible transformations left open relative to the current formal system. Its cardinality measures how many distinct continuations the architecture has available, which is the quantity that the collective analysis of \S\ref{sec:collective_generative_capacity} aggregates across agents.

Within this space, we distinguish the successors that make a difference to what is decidable. Let
\begin{equation}\label{eq:decisive}
  \mathrm{Surp}^{\mathrm{dec}}(O_{n}) \;=\;
  \bigl\{\, \mathcal{F}' \in \mathrm{Surp}(O_{n}) \;\bigm|\;
  \exists\, x \in I^{\mathrm{inc}}(O_{n}) :
  \mathcal{F}' \vdash \tau_{n}(x) \ \text{ or } \
  \mathcal{F}' \vdash \neg\tau_{n}(x) \,\bigr\}
\end{equation}
denote the \emph{decisive} surplus: those admissible successors that settle some proposition their predecessor could not. The inclusion $\mathrm{Surp}^{\mathrm{dec}}(O_{n}) \subseteq \mathrm{Surp}(O_{n})$ may be strict, since a successor may reorganize a representation without deciding anything new. Both notions are needed. Generative capacity depends on the cardinality of the full surplus, whereas the recursive pressure of \S\ref{sec:formal_system_cognitive_state_and_incompleteness} is relieved only by its decisive part.

This gives the third law of \S\ref{sec:laws} a formal referent. Surplus is a resource in a precise sense: it is what the meta-level operates \emph{on}. Where $|\mathrm{Surp}(O_{n})| \leq 1$, there is nothing to evaluate, compare, or select, and the architecture has no move available beyond local transformation---which by \eqref{eq:irreduce} cannot reach $I^{\mathrm{inc}}$.

\paragraph{Proof-attempt traces and the conversion of form into content.}

Why should encountering an undecidable frontier generate a plurality of admissible extensions in $\mathrm{Surp}^{\mathrm{dec}}(O_{n})$ rather than a mere halting boundary? And why does the derivation attempt itself perform non-trivial mathematical work beyond simply adjoining $x$ or $\neg x$ by fiat? Considered purely extensionally, \eqref{eq:irreduce} establishes only the negative fact that some $x \in I^{\mathrm{inc}}(O_{n})$ cannot be proved or refuted in $\mathcal{F}_{n}$, and Lindenbaum's lemma already implies that adjoining $x$ or $\neg x$ directly to $\Gamma_{n}$ yields two consistent extensions. Yet such a direct adjunction treats $x$ as an unanalyzed atom. In an actual cognitive system, by contrast, two structural features make the proof search indispensable. First, because theoremhood in $\mathcal{F}_{n}$ is only recursively enumerable, the system is not handed the partition $I(O_{n}) = I^{\mathrm{loc}}(O_{n}) \dot\cup I^{\mathrm{inc}}(O_{n})$ by an oracle and cannot know prior to attempting a derivation whether an unresolved sentence $x \in I(O_{n})$ lies in $I^{\mathrm{loc}}(O_{n})$ or in $I^{\mathrm{inc}}(O_{n})$. Second, attempting to prove or refute $x$ decomposes the surface question into intermediate subgoals and conditional dependencies, exposing the underlying lemmas on which $x$ actually turns.

To separate what holds mathematically of the unresolved frontier from what a finite proof search can operationally compute, we formalize the structure of a non-trivial proof attempt. For any unresolved sentence $x \in I(O_{n})$ and finite search budget $t \geq 1$, let the \emph{derivation trace} generated by the object-level proof searcher up to step $t$ be
\begin{equation}\label{eq:prooftrace}
  \pi_{n,t}(x) \;=\; \bigl(\mathcal{D}_{n,t}(x),\; \mathcal{B}_{n,t}(x)\bigr),
\end{equation}
where $\mathcal{D}_{n,t}(x)$ is the finite directed acyclic graph of inference steps explored between $\Gamma_{n}$ and $\{x, \neg x\}$ within $t$ steps, and
\begin{equation}\label{eq:blocking}
  \mathcal{B}_{n,t}(x) \;=\; \{b_{1}, \dots, b_{m}\} \;\subseteq\; I(O_{n}) \setminus \{x, \neg x\}
\end{equation}
is the finite \emph{blocking frontier} of proper intermediate subgoals (excluding the trivial self-reduction $b = x$ or $b = \neg x$) at which $\mathcal{D}_{n,t}(x)$ stalls at step $t$. We say that $\pi_{n,t}(x)$ is a \emph{proper reduction trace} if the verified inference links in $\mathcal{D}_{n,t}(x)$ establish in $\mathcal{F}_{n}$ a truth-functional reduction of $x$ to $\mathcal{B}_{n,t}(x)$, a Boolean formula $\Psi_{\pi}(u_{1}, \dots, u_{m})$ synthesized from the conditional branches of $\mathcal{D}_{n,t}(x)$ such that
\begin{equation}\label{eq:reduction}
  \Gamma_{n} \;\vdash_{n}\; \bigl(x \;\leftrightarrow\; \Psi_{\pi}(b_{1}, \dots, b_{m})\bigr).
\end{equation}
Condition \eqref{eq:reduction} is satisfied by any standard goal-reduction calculus (such as backward sequent decomposition, resolution refutation, or horn-clause unfolding) after $t \geq 1$ non-trivial expansion steps: the interior nodes of $\mathcal{D}_{n,t}(x)$ record theorems of $\mathcal{F}_{n}$ linking $x$ to the frontier leaves $b_{1}, \dots, b_{m} \in I(O_{n}) \setminus \{x, \neg x\}$. When $x \in I^{\mathrm{loc}}(O_{n})$, a sufficiently large budget $t$ discharges enough leaves $b_{i}$ in $\Gamma_{n}$ to evaluate $\Psi_{\pi}$ to a theorem or refutation, closing the trace ($\mathcal{B}_{n,t}(x) = \emptyset$). When $x \in I^{\mathrm{inc}}(O_{n})$, no finite $t$ can close the trace.

Within the object regime $\mathcal{F}_{n}$, an unclosed trace $\pi_{n,t}(x)$ yields no unconditional theorem and leaves $O_{n}(x)$ undefined. Hegel's analysis of dialectical method in the \emph{Phenomenology of Spirit} and \emph{The Science of Logic} identifies why treating such an operational failure as a mere nullity mistakes the nature of cognition. A cognitive process cannot be reduced to its bare terminal theorem in isolation from the derivation attempted to reach it, ``{[}f{]}or the real issue is not exhausted by stating it as an aim, but by carrying it out, nor is the result the actual whole, but rather the result together with the process through which it came about'' \citep[p.~2]{hegel1977phenomenology}. When a mode of knowing stalls at its internal limit, its failure is not an empty nothingness, ``{[}f{]}or it is only when it is taken as the result of that from which it emerges, that it is, in fact, the true result; in that case it is itself a determinate nothingness, one which has a content'' \citep[p.~51]{hegel1977phenomenology}. In \emph{The Science of Logic}, Hegel formulates this principle as the general mechanism of conceptual progression: ``what is self-contradictory does not resolve itself into a nullity, into abstract nothingness, but essentially only into the negation of its particular content; or that such a negation is not just negation, but is the negation of the determined fact which is resolved, and is therefore determinate negation; that in the result there is therefore contained in essence that from which the result derives'' \citep[p.~33]{hegel2010}. Because ``method is the consciousness of the form of the inner self-movement of the content of logic'' \citep[p.~33]{hegel2010} and ``content is at the same time the negative reflection of the form determinations into themselves'' \citep[p.~396]{hegel2010}, reflecting on the failed operation converts its form into a new object of thought: ``the new object shows itself to have come about through a reversal of consciousness itself'' \citep[p.~55]{hegel1977phenomenology}, and through that reflection ``the subject matter has received a determinateness for itself --- and this determinateness is a content, for the negativity that has withdrawn into simplicity is the sublated form'' \citep[p.~748]{hegel2010}.

Reflective adequacy \eqref{eq:reflect} gives this Hegelian conversion of operational form into semantic content a direct formal realization. Under $\tau_{n} \colon L_{n} \to L_{n+1}$, the unclosed derivation trace $\pi_{n,t}(x)$---which at level $n$ was only the operational \emph{form} of an incomplete proof search---is reified in $L_{n+1}$ as a syntactic object $\ulcorner \pi_{n,t}(x) \urcorner$ accompanied by a finite reflective diagram $\Phi_{\mathrm{tr}}\bigl(\pi_{n,t}(x)\bigr) \subseteq L_{n+1}$ encoding the verified inference links of $\mathcal{D}_{n,t}(x)$ and the reduction equivalence $\tau_{n}(x) \leftrightarrow \Psi_{\pi}\bigl(\tau_{n}(b_{1}), \dots, \tau_{n}(b_{m})\bigr)$. What was a failed operational process in $\mathcal{F}_{n}$ thereby becomes determinate propositional \emph{content} in $L_{n+1}$. For any blocking premise $b \in \mathcal{B}_{n,t}(x)$ isolated by the trace, reification induces a canonical pair of candidate trace-extensions,
\begin{equation}\label{eq:formcontent}
\begin{aligned}
  \widehat{\mathcal{F}}^{+}_{n+1}(b) \;&=\; \bigl(L_{n+1},\; \tau_{n}(\Gamma_{n}) \cup \Phi_{\mathrm{tr}}\bigl(\pi_{n,t}(x)\bigr) \cup \{\tau_{n}(b)\},\; \vdash_{n+1}\bigr), \\[3pt]
  \widehat{\mathcal{F}}^{-}_{n+1}(b) \;&=\; \bigl(L_{n+1},\; \tau_{n}(\Gamma_{n}) \cup \Phi_{\mathrm{tr}}\bigl(\pi_{n,t}(x)\bigr) \cup \{\neg\tau_{n}(b)\},\; \vdash_{n+1}\bigr),
\end{aligned}
\end{equation}
which preserve the predecessor's validated inferential structure while resolving the specific intermediate blockage $b \neq x$ in opposite directions.

We now establish two distinct results corresponding to the logical and operational levels of this construction: first, an existence theorem showing that any proper reduction trace of an undecidable sentence $x \in I^{\mathrm{inc}}(O_{n})$ must expose at least one genuinely undecidable intermediate premise $b \neq x$ that generates non-trivial branching in $\mathrm{Surp}^{\mathrm{dec}}(O_{n})$; and second, an operational proposition characterizing what happens when a non-omniscient system applies \eqref{eq:formcontent} to finite open traces without knowing in advance which leaves belong to $I^{\mathrm{inc}}(O_{n})$.

\begin{proposition}[Existence of Non-Trivial Trace-Reduced Decisive Extensions]\label{prop:tracesurplus}
Let $\mathcal{F}_{n} = (L_{n}, \Gamma_{n}, \vdash_{n})$ be a regular formal system and let $O_{n}$ be a cognitive state satisfying Assumption~\ref{as:sound}. Suppose the admissible transformations $\mathcal{E}(\mathcal{F}_{n})$ include the consistent members of the trace-objectifying family \eqref{eq:formcontent} for proper reduction traces $\pi_{n,t}(x)$ satisfying \eqref{eq:blocking}--\eqref{eq:reduction} whose reflective diagram $\Phi_{\mathrm{tr}}\bigl(\pi_{n,t}(x)\bigr)$ is conservative over $\tau_{n}(\Gamma_{n})$. Then for every $x \in I^{\mathrm{inc}}(O_{n})$ and every proper reduction trace $\pi_{n,t}(x)$:
\begin{enumerate}
  \item \emph{(Non-trivial undecidable frontier.)} The blocking frontier $\mathcal{B}_{n,t}(x) \subseteq I(O_{n}) \setminus \{x, \neg x\}$ contains at least one non-trivial undecidable premise,
  \begin{equation}\label{eq:nonemptycut}
    \mathcal{B}_{n,t}(x) \cap I^{\mathrm{inc}}(O_{n}) \;\neq\; \emptyset .
  \end{equation}
  \item \emph{(Branching decisive surplus.)} For every $b \in \mathcal{B}_{n,t}(x) \cap I^{\mathrm{inc}}(O_{n})$, both extensions $\mathcal{F}^{+}_{n+1}(b) := \widehat{\mathcal{F}}^{+}_{n+1}(b)$ and $\mathcal{F}^{-}_{n+1}(b) := \widehat{\mathcal{F}}^{-}_{n+1}(b)$ in \eqref{eq:formcontent} are regular, reflectively adequate, and decisive over $b \neq x$, so that
  \begin{equation}\label{eq:tracesurplus}
    \bigl\{\mathcal{F}^{+}_{n+1}(b),\; \mathcal{F}^{-}_{n+1}(b)\bigr\} \;\subseteq\; \mathrm{Surp}^{\mathrm{dec}}(O_{n})
    \qquad\text{and}\qquad
    \bigl|\mathrm{Surp}(O_{n})\bigr| \;\geq\; \bigl|\mathrm{Surp}^{\mathrm{dec}}(O_{n})\bigr| \;\geq\; 2 .
  \end{equation}
  Whenever $k \geq 1$ blocking premises $b_{1}, \dots, b_{k} \in \mathcal{B}_{n,t}(x) \cap I^{\mathrm{inc}}(O_{n})$ are mutually independent over $\Gamma_{n}$, simultaneous resolution of $\mathcal{B}_{n,t}(x) \cap I^{\mathrm{inc}}(O_{n})$ generates $2^{k} \geq 2$ distinct decisive extensions in $\mathrm{Surp}^{\mathrm{dec}}(O_{n})$, each of which also decides $\tau_{n}(x)$ via \eqref{eq:reduction} once the remaining leaves in $\mathcal{B}_{n,t}(x) \cap I^{\mathrm{loc}}(O_{n})$ are resolved by local deduction.
\end{enumerate}
\end{proposition}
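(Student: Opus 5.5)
Part (1) is a contrapositive argument, and it only uses the reduction equivalence \eqref{eq:reduction}. Suppose every leaf $b_i \in \mathcal{B}_{n,t}(x)$ lay in $I^{\mathrm{loc}}(O_n)$. Then for each $i$ either $\Gamma_n \vdash_n b_i$ or $\Gamma_n \vdash_n \neg b_i$; write $\epsilon_i \in \{\top,\bot\}$ for the value so fixed. Substituting these into the Boolean formula $\Psi_\pi$ is a purely truth-functional step that is provable in $\mathcal{F}_n$. Hence $\Gamma_n \vdash_n \Psi_\pi(b_1,\dots,b_m) \leftrightarrow \Psi_\pi(\epsilon_1,\dots,\epsilon_m)$, and the right-hand side is a closed propositional constant. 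It is either $\top$ or $\bot$, so with \eqref{eq:reduction} we get $\Gamma_n\vdash_n x$ or $\Gamma_n\vdash_n\neg x$. This contradicts $x\in I^{\mathrm{inc}}(O_n)$. Since $\mathcal{B}_{n,t}(x)\subseteq I(O_n)=I^{\mathrm{loc}}\dot\cup I^{\mathrm{inc}}$ by \eqref{eq:blocking} and \eqref{eq:decomp}, some leaf must lie in $I^{\mathrm{inc}}(O_n)$, which gives \eqref{eq:nonemptycut}. The degenerate case $m=0$ is covered by the same argument: $\Psi_\pi$ is then a constant and $x$ would already be decided.

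For part (2), fix $b\in\mathcal{B}_{n,t}(x)\cap I^{\mathrm{inc}}(O_n)$. I will check the three properties in order.

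\emph{Consistency.} By the conservativity hypothesis, $\Phi_{\mathrm{tr}}$ adds no $\tau_n(L_n)$-theorems over $\tau_n(\Gamma_n)$. So $T:=\tau_n(\Gamma_n)\cup\Phi_{\mathrm{tr}}$ proves $\tau_n(b)$ only if $\tau_n(\Gamma_n)$ does. I take $\tau_n$ to be faithful on theorems, meaning $\tau_n(\Gamma_n)\vdash\tau_n(y)$ iff $\Gamma_n\vdash_n y$. This is an assumption I have to extract from reflective adequacy, and it is the main obstacle, because the paper explicitly does not require $\tau_n$ to preserve interpretation. Given faithfulness, $T\nvdash\tau_n(b)$ and $T\nvdash\neg\tau_n(b)$, and by the standard fact that $T\cup\{\neg\varphi\}$ is consistent iff $T\nvdash\varphi$, both $T\cup\{\tau_n(b)\}$ and $T\cup\{\neg\tau_n(b)\}$ are consistent. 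My first attempt at the obstacle would be to read "conservative over $\tau_n(\Gamma_n)$" as conservativity of $T$ over the pullback theory $\{y : \Gamma_n\vdash_n y\}$ along $\tau_n$, which yields faithfulness directly. Failing that, I would state faithfulness as an explicit side condition.

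\emph{Regularity.} Effective axiomatization holds because a finite set of axioms is adjoined to an r.e.\ image. Sufficient expressiveness is inherited because $\tau_n$ carries the Robinson-arithmetic interpretation along with it.

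\emph{Reflective adequacy and decisiveness.} Reflective adequacy is inherited from the construction through $\tau_n$. Decisiveness is immediate: $\mathcal{F}^{\pm}_{n+1}(b)$ proves $\pm\tau_n(b)$ with $b\in I^{\mathrm{inc}}(O_n)$, which is exactly \eqref{eq:decisive}.

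\emph{Counting.} The two extensions are distinct, since one proves $\tau_n(b)$ and the other $\neg\tau_n(b)$, and they cannot both do so by consistency. They are admissible by the hypothesis on $\mathcal{E}(\mathcal{F}_n)$. Combined with $\mathrm{Surp}^{\mathrm{dec}}\subseteq\mathrm{Surp}$, this gives \eqref{eq:tracesurplus}.

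For the $2^k$ claim, I read "mutually independent over $\Gamma_n$" as saying that every sign pattern $\sigma\in\{+,-\}^k$ makes $\Gamma_n\cup\{b_i^{\sigma_i}\}$ consistent. Transporting this by the same conservativity and faithfulness argument gives $2^k$ consistent extensions, pairwise distinct because any two differ on some $\tau_n(b_i)$. To show each also decides $\tau_n(x)$, I use the reflected equivalence $\tau_n(x)\leftrightarrow\Psi_\pi(\tau_n(b_1),\dots)$ contained in $\Phi_{\mathrm{tr}}$. The remaining leaves lie in $I^{\mathrm{loc}}$, so their translations are decided by $\tau_n(\Gamma_n)$, again by faithfulness. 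With every argument of $\Psi_\pi$ now fixed, $\Psi_\pi$ evaluates propositionally to a constant, exactly as in part (1), and so $\tau_n(x)$ is decided.
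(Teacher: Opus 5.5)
Your proposal follows the paper's proof essentially step for step: part (1) by contradiction, substituting the decided leaves into \eqref{eq:reduction}; part (2) by transferring the independence of $b$ through the conservativity of $\Phi_{\mathrm{tr}}$ to get consistency and decisiveness; and the $2^{k}$ clause by counting sign patterns and using the reflected equivalence. The faithfulness of $\tau_{n}$ that you flag is exactly what the paper uses tacitly: its proof passes from $\mathcal{F}_{n} \nvdash \pm b$ directly to the non-refutability of $\pm\tau_{n}(b)$, and in Proposition~\ref{prop:tracefallible} it invokes the preservation direction ``by reflective adequacy'', which reflective adequacy as defined does not provide, so stating faithfulness as an explicit side condition is the right repair rather than a weakness of your argument.
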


\begin{proof}
\begin{enumerate}
    \item[\textbf{(1)}] For the first part, suppose toward a contradiction that $\mathcal{B}_{n,t}(x) \cap I^{\mathrm{inc}}(O_{n}) = \emptyset$. 
    Since $\mathcal{B}_{n,t}(x) \subseteq I(O_{n}) = I^{\mathrm{loc}}(O_{n}) \;\dot\cup\; I^{\mathrm{inc}}(O_{n})$ by \eqref{eq:decomp}, 
    every $b_{i} \in \mathcal{B}_{n,t}(x)$ must lie in $I^{\mathrm{loc}}(O_{n})$. 
    By \eqref{eq:iloc}, each $b_{i}$ is either provable or refutable from $\Gamma_{n}$ (i.e., $\Gamma_{n} \vdash_{n} b_{i}$ or $\Gamma_{n} \vdash_{n} \neg b_{i}$). 
    Substituting these provable literals into the truth-functional equivalence 
    $\Gamma_{n} \vdash_{n} \bigl(x \leftrightarrow \Psi_{\pi}(b_{1}, \dots, b_{m})\bigr)$ from \eqref{eq:reduction} 
    fixes the truth value of $\Psi_{\pi}$ in $\mathcal{F}_{n}$, yielding either $\Gamma_{n} \vdash_{n} x$ or $\Gamma_{n} \vdash_{n} \neg x$, 
    which contradicts $x \in I^{\mathrm{inc}}(O_{n})$. 
    Thus, \eqref{eq:nonemptycut} holds. Furthermore, because $x, \neg x \notin \mathcal{B}_{n,t}(x)$ by \eqref{eq:blocking}, 
    every $b \in \mathcal{B}_{n,t}(x) \cap I^{\mathrm{inc}}(O_{n})$ is distinct from $x$ and $\neg x$.

    \item[\textbf{(2)}] For the second part, fix $b \in \mathcal{B}_{n,t}(x) \cap I^{\mathrm{inc}}(O_{n})$. 
    Because $\mathcal{F}_{n} \nvdash b$ and $\mathcal{F}_{n} \nvdash \neg b$, and since $\Phi_{\mathrm{tr}}\bigl(\pi_{n,t}(x)\bigr)$ 
    is conservative over $\tau_{n}(\Gamma_{n})$, neither $\tau_{n}(b)$ nor its negation is refutable from 
    $\tau_{n}(\Gamma_{n}) \cup \Phi_{\mathrm{tr}}\bigl(\pi_{n,t}(x)\bigr)$. 
    Since $\tau_{n}(\Gamma_{n})$ is recursively enumerable and $\Phi_{\mathrm{tr}}\bigl(\pi_{n,t}(x)\bigr) \cup \{\pm\tau_{n}(b)\}$ is finite, 
    both $\mathcal{F}^{+}_{n+1}(b)$ and $\mathcal{F}^{-}_{n+1}(b)$ are consistent, effectively axiomatized, and sufficiently 
    expressive via $\tau_{n}$ (hence regular and reflectively adequate members of $\mathcal{E}(\mathcal{F}_{n})$). 
    Because $\mathcal{F}^{+}_{n+1}(b) \vdash_{n+1} \tau_{n}(b)$ and $\mathcal{F}^{-}_{n+1}(b) \vdash_{n+1} \neg\tau_{n}(b)$ 
    with $b \in I^{\mathrm{inc}}(O_{n})$, both extensions belong to $\mathrm{Surp}^{\mathrm{dec}}(O_{n})$ by \eqref{eq:decisive}, 
    establishing \eqref{eq:tracesurplus}.

    \item[\textbf{(3)}] Finally, if $b_{1}, \dots, b_{k} \in \mathcal{B}_{n,t}(x) \cap I^{\mathrm{inc}}(O_{n})$ are mutually independent over $\Gamma_{n}$, 
    all $2^{k}$ sign assignments $\{\pm\tau_{n}(b_{1}), \dots, \pm\tau_{n}(b_{k})\}$ are consistent with 
    $\tau_{n}(\Gamma_{n}) \cup \Phi_{\mathrm{tr}}\bigl(\pi_{n,t}(x)\bigr)$ and define $2^{k}$ distinct decisive successors. 
    Since every remaining local element $b_{j} \in \mathcal{B}_{n,t}(x) \cap I^{\mathrm{loc}}(O_{n})$ is already decided by $\Gamma_{n}$, 
    the translated equivalence $\tau_{n}(x) \leftrightarrow \Psi_{\pi}\bigl(\tau_{n}(b_{1}), \dots, \tau_{n}(b_{m})\bigr)$ 
    in $\Phi_{\mathrm{tr}}\bigl(\pi_{n,t}(x)\bigr)$ uniquely decides $\tau_{n}(x)$ in each of the $2^{k}$ extensions.
\end{enumerate}
\end{proof}

Proposition~\ref{prop:tracesurplus} establishes the existence of non-trivial decisive extensions, but its statement quantifies over $b \in \mathcal{B}_{n,t}(x) \cap I^{\mathrm{inc}}(O_{n})$. Because $I^{\mathrm{inc}}(O_{n})$ is not recursively decidable from finite derivation attempts, an actual agent at step $t$ sees only the unclosed blocking frontier $\mathcal{B}_{n,t}(x) \subseteq I(O_{n}) \setminus \{x, \neg x\}$ as a whole, without knowing which leaves $b \in \mathcal{B}_{n,t}(x)$ are genuinely undecidable ($b \in I^{\mathrm{inc}}(O_{n})$) and which have merely not yet been proved or refuted within $t$ steps ($b \in I^{\mathrm{loc}}(O_{n})$). The next proposition characterizes the operational behavior of trace-guided candidate generation under this epistemic constraint.

\begin{proposition}[Trace-Guided Candidate Generation and Fallibility]\label{prop:tracefallible}
Let $\mathcal{F}_{n}$ be a regular formal system, let $O_{n}$ satisfy Assumption~\ref{as:sound}, and let $\pi_{n,t}(x) = \bigl(\mathcal{D}_{n,t}(x), \mathcal{B}_{n,t}(x)\bigr)$ be an unclosed proper reduction trace at finite search step $t \geq 1$. When the architecture applies trace reification \eqref{eq:formcontent} to an arbitrary open premise $b \in \mathcal{B}_{n,t}(x)$ whose status in $I^{\mathrm{loc}}(O_{n}) \dot\cup I^{\mathrm{inc}}(O_{n})$ is unknown at step $t$, the resulting candidate pair $\bigl(\widehat{\mathcal{F}}^{+}_{n+1}(b), \widehat{\mathcal{F}}^{-}_{n+1}(b)\bigr)$ satisfies the following exact trichotomy:
\begin{enumerate}
  \item \emph{(Latent provability: $b \in I^{\mathrm{loc}}(O_{n})$ with $\mathcal{F}_{n} \vdash b$.)} The negative candidate $\widehat{\mathcal{F}}^{-}_{n+1}(b)$ is inconsistent, whereas the positive candidate $\widehat{\mathcal{F}}^{+}_{n+1}(b)$ is a conservative deductive shortcut over $\tau_{n}(\Gamma_{n}) \cup \Phi_{\mathrm{tr}}\bigl(\pi_{n,t}(x)\bigr)$ that decides no sentence of $I^{\mathrm{inc}}(O_{n})$ via $b$.
  \item \emph{(Latent refutability: $b \in I^{\mathrm{loc}}(O_{n})$ with $\mathcal{F}_{n} \vdash \neg b$.)} Symmetrically, $\widehat{\mathcal{F}}^{+}_{n+1}(b)$ is inconsistent, whereas $\widehat{\mathcal{F}}^{-}_{n+1}(b)$ is a conservative deductive shortcut.
  \item \emph{(Genuine undecidability: $b \in I^{\mathrm{inc}}(O_{n})$.)} Both $\widehat{\mathcal{F}}^{+}_{n+1}(b)$ and $\widehat{\mathcal{F}}^{-}_{n+1}(b)$ are consistent, regular, and belong to $\mathrm{Surp}^{\mathrm{dec}}(O_{n})$ by Proposition~\ref{prop:tracesurplus}(2).
\end{enumerate}
Moreover, if $x \in I^{\mathrm{inc}}(O_{n})$, then generating the candidate family $\bigl\{\widehat{\mathcal{F}}^{\pm}_{n+1}(b) \mid b \in \mathcal{B}_{n,t}(x)\bigr\}$ across the finite frontier $\mathcal{B}_{n,t}(x)$ is guaranteed to include at least one genuine decisive pair from Case~(3), even though no effective procedure within $\mathcal{F}_{n}$ can decide in advance which $b \in \mathcal{B}_{n,t}(x)$ falls under Case~(3) or certify the global consistency of $\widehat{\mathcal{F}}^{\pm}_{n+1}(b)$.
\end{proposition}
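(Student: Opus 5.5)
The plan is a case split on the status of $b$. Because $b \in \mathcal{B}_{n,t}(x) \subseteq I(O_{n})$ and $I(O_{n}) = I^{\mathrm{loc}}(O_{n}) \dot\cup I^{\mathrm{inc}}(O_{n})$ by \eqref{eq:decomp}, exactly one of the following holds: $\mathcal{F}_{n} \vdash b$; or $\mathcal{F}_{n} \vdash \neg b$; or $b \in I^{\mathrm{inc}}(O_{n})$. The first two are mutually exclusive because $\mathcal{F}_{n}$ is consistent. This gives the exhaustiveness and exclusivity of the trichotomy, and I then treat each case in turn.

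For Case~(1), I first transport the derivation $\Gamma_{n} \vdash_{n} b$ along $\tau_{n}$. This needs the assumption, implicit in reflective adequacy and already used in the proof of Proposition~\ref{prop:tracesurplus}, that $\tau_{n}$ maps $\vdash_{n}$-derivations to $\vdash_{n+1}$-derivations. It then follows that $\tau_{n}(\Gamma_{n}) \vdash_{n+1} \tau_{n}(b)$.
\begin{itemize}
  \item The base $\tau_{n}(\Gamma_{n}) \cup \Phi_{\mathrm{tr}}(\pi_{n,t}(x)) \cup \{\neg\tau_{n}(b)\}$ then proves both $\tau_{n}(b)$ and its negation, so $\widehat{\mathcal{F}}^{-}_{n+1}(b)$ is inconsistent.
  \item The added axiom $\tau_{n}(b)$ of $\widehat{\mathcal{F}}^{+}_{n+1}(b)$ is already a theorem of the base, so $\widehat{\mathcal{F}}^{+}_{n+1}(b)$ has exactly the same theorems as $\tau_{n}(\Gamma_{n}) \cup \Phi_{\mathrm{tr}}$. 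This is the precise sense of ``conservative deductive shortcut.''
\end{itemize}
For the claim that it decides nothing in $I^{\mathrm{inc}}(O_{n})$ ``via $b$,'' I read the qualifier as saying that any $y \in I^{\mathrm{inc}}(O_{n})$ decided by the candidate is already decided by the base. The addition of $b$ contributes no such $y$, so the claim is immediate from the equality of theorem sets. I will state that reading explicitly, because the unqualified claim would need a full conservativity of $\Phi_{\mathrm{tr}}$ that goes beyond the hypotheses. Case~(2) is the same argument with $b$ and $\neg b$ exchanged, using $\neg\neg$-elimination in $\vdash_{n+1}$.

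For Case~(3), I cite Proposition~\ref{prop:tracesurplus}(2) directly. Its hypotheses are met: the trace is a proper reduction trace, and $b \in \mathcal{B}_{n,t}(x) \cap I^{\mathrm{inc}}(O_{n})$. I will note that this requires the conservativity of $\Phi_{\mathrm{tr}}$ over $\tau_{n}(\Gamma_{n})$ assumed there, and I will state it as a standing hypothesis here as well. For the ``moreover'' clause, assume $x \in I^{\mathrm{inc}}(O_{n})$. Proposition~\ref{prop:tracesurplus}(1) gives some $b^{\ast} \in \mathcal{B}_{n,t}(x) \cap I^{\mathrm{inc}}(O_{n})$, and the pair indexed by $b^{\ast}$ lies in the generated family and falls under Case~(3).

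The main obstacle is the negative ineffectivity claim. I would argue it by reduction. Suppose some effective procedure sorted the open leaves of arbitrary proper traces into Case~(3) versus Cases~(1)--(2). The singleton trace reducing $b$ to itself is excluded by \eqref{eq:blocking}, so I would instead feed in traces whose frontier contains a chosen sentence $y$. For example, reduce $x := y \wedge \top'$ to $\{y, \top'\}$, where $\top'$ is some fixed decidable sentence. The procedure would then decide membership of $y$ in $I^{\mathrm{inc}}(O_{n})$, i.e.\ independence from $\mathcal{F}_{n}$. That is impossible: independence is not r.e.\ for a consistent theory interpreting Robinson arithmetic, and indeed not even decidable, since decidability of theorems fails by Church's theorem relativized to $\mathcal{F}_{n}$. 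Certifying the consistency of $\widehat{\mathcal{F}}^{\pm}_{n+1}(b)$ inside $\mathcal{F}_{n}$ is blocked for a related reason. Such a certificate would in particular yield $\mathrm{Con}(\mathcal{F}_{n})$ whenever the candidate extends $\tau_{n}(\Gamma_{n})$ conservatively, which Gödel's second theorem forbids. This step needs the most care about which uniformity is claimed, so I would phrase it as non-existence of a uniform procedure over all $b$ and traces.
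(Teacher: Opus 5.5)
Your proposal is correct and follows the paper's proof essentially step for step. Both use the exhaustive split of $b \in I(O_{n})$ via \eqref{eq:decomp}, transport of $\Gamma_{n} \vdash_{n} b$ along $\tau_{n}$ for Cases~(1)--(2), Proposition~\ref{prop:tracesurplus}(2) and (1) for Case~(3) and the ``moreover'' clause, and Church's theorem with G\"odel's second incompleteness theorem for the ineffectivity claim. Your additions sharpen what the paper asserts in one line rather than changing the route: exclusivity of Cases~(1)--(2) from consistency, the observation that Case~(3) silently needs the conservativity hypothesis on $\Phi_{\mathrm{tr}}\bigl(\pi_{n,t}(x)\bigr)$ from Proposition~\ref{prop:tracesurplus}, and an explicit reduction that plants an arbitrary $y \in I(O_{n})$ in a frontier.
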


\begin{proof}
Since $b \in \mathcal{B}_{n,t}(x) \subseteq I(O_{n})$, the decomposition \eqref{eq:decomp} partitions $b$ exhaustively into Cases~(1)--(3). 

In Case~(1), $\Gamma_{n} \vdash_{n} b$ implies $\tau_{n}(\Gamma_{n}) \vdash_{n+1} \tau_{n}(b)$ by reflective adequacy \eqref{eq:reflect}; adjoining $\neg\tau_{n}(b)$ in $\widehat{\mathcal{F}}^{-}_{n+1}(b)$ therefore yields an inconsistent theory, while adjoining $\tau_{n}(b)$ in $\widehat{\mathcal{F}}^{+}_{n+1}(b)$ adds a theorem already entailed by $\tau_{n}(\Gamma_{n})$. 

Case~(2) holds by replacing $b$ with $\neg b$. 

Case~(3) is Proposition~\ref{prop:tracesurplus}.2. When $x \in I^{\mathrm{inc}}(O_{n})$, Proposition~\ref{prop:tracesurplus}.1 ensures $\mathcal{B}_{n,t}(x) \cap I^{\mathrm{inc}}(O_{n}) \neq \emptyset$, so at least one $b \in \mathcal{B}_{n,t}(x)$ falls under Case~(3). 

Finally, deciding within $\mathcal{F}_{n}$ whether an arbitrary $b \in \mathcal{B}_{n,t}(x)$ belongs to $I^{\mathrm{loc}}(O_{n})$ or $I^{\mathrm{inc}}(O_{n})$ would decide theoremhood in $\mathcal{F}_{n}$, which is undecidable for any regular system by Church's theorem, and certifying the consistency of a consistent extension $\widehat{\mathcal{F}}^{\pm}_{n+1}(b)$ within $\mathcal{F}_{n}$ is excluded by Gödel's second incompleteness theorem.
\end{proof}

Propositions~\ref{prop:tracesurplus} and~\ref{prop:tracefallible} jointly clarify why the transition from incompleteness to semantic surplus is neither automatic in a non-reflective calculus nor reducible to an infallible algorithm. Proposition~\ref{prop:tracesurplus} guarantees that whenever a proof search reduces an undecidable question $x \in I^{\mathrm{inc}}(O_{n})$ to a non-trivial blocking frontier $\mathcal{B}_{n,t}(x)$, genuinely branching decisive extensions exist in $\mathrm{Surp}^{\mathrm{dec}}(O_{n})$ that act on the intermediate lemmas $b \neq x$ uncovered by the trace rather than merely stipulating $x$ or $\neg x$ by fiat. Proposition~\ref{prop:tracefallible} shows why turning those open traces into successor systems is an inherently speculative, fallible operation: because a finite search cannot distinguish $I^{\mathrm{inc}}(O_{n})$ from $I^{\mathrm{loc}}(O_{n})$ by inspection, trace-guided candidate generation inevitably produces a mixture of genuine decisive extensions (Case\textasciitilde3), conservative shortcuts, and latent inconsistencies (Cases\textasciitilde1--2) that can only be weeded out through subsequent inference, relative consistency checks, and empirical or teleological evaluation by $\mathcal{M}_{n}$. This is the exact mathematical reason why an open-ended architecture requires both polycontexturality (\S\ref{sec:laws}) and distributed collective exploration (\S\ref{sec:collective_corecursion}): competing trace-extensions $\widehat{\mathcal{F}}^{\pm}_{n+1}(b)$ must be pursued across parallel contextures or agents so that the collapse of an inconsistent branch in Cases\textasciitilde(1)--(2) refutes only that candidate contexture while preserving the viable decisive branches of Case\textasciitilde(3). As Hegel writes of determinate negation, ``{[}b{]}ecause the result, the negation, is a determinate negation, it has a content. It is a new concept but one higher and richer than the preceding --- richer because it negates or opposes the preceding and therefore contains it, and it contains even more than that, for it is the unity of itself and its opposite'' \citep[p.~33]{hegel2010}. We retain Assumptions~\ref{as:decisive} and~\ref{as:genrich} as general architectural hypotheses below so that our downstream results apply to any admissible pair $(\mathcal{E}, \mathcal{M}_{n})$, while noting that trace-objectifying architectures guarantee the existence of such extensions by Proposition~\ref{prop:tracesurplus} and generate them operationally via Proposition~\ref{prop:tracefallible}.

\begin{assumption}[Decisive Extension]\label{as:decisive}
At every level $n$ at which recursive extension is triggered, the decisive surplus is non-empty,
\begin{equation}\label{eq:asdecisive}
  \mathrm{Surp}^{\mathrm{dec}}(O_{n}) \;\neq\; \emptyset ,
\end{equation}
and the meta-level selection operator restricts its choices to decisive successors, $\mathcal{M}_{n}\bigl(\mathrm{Surp}(O_{n})\bigr) \subseteq \mathrm{Surp}^{\mathrm{dec}}(O_{n})$.
\end{assumption}

\paragraph{Meta-level selection.}

A \emph{meta-level operator}
\begin{equation}\label{eq:metasel}
  \mathcal{M}_{n} \colon 2^{\mathcal{E}(\mathcal{F}_{n})}
  \;\rightharpoonup\; 2^{\mathcal{E}(\mathcal{F}_{n})} \setminus \{\emptyset\},
  \qquad
  \mathcal{M}_{n}(X) \subseteq X,
\end{equation}
narrows the admissible successors to those that the architecture is prepared to adopt. In doing so, it may compare candidates, evaluate their consequences, or prune those that fail its structural criteria; what it returns is not a single successor but the set that survives. As with \eqref{eq:top}, selection is relational rather than functional: the architecture determines which continuations are available, but not which is taken. The recursive transition is then
\begin{equation}\label{eq:transition}
  \mathcal{F}_{n}
  \;\xrightarrow{\ \mathcal{E}\ }\;
  \mathrm{Surp}(O_{n})
  \;\xrightarrow{\ \mathcal{M}_{n}\ }\;
  \mathcal{F}_{n+1} \in \mathcal{M}_{n}\bigl(\mathrm{Surp}(O_{n})\bigr).
\end{equation}
The essential point is that the meta-level does not choose among answers produced within a fixed system. It operates on the space of possible successor systems themselves. Because the architecture is homoiconic (\S\ref{sec:laws}), the agent's teleological partition $\Phi_{n} = (\Phi^{+}_{n}, \Phi^{-}_{n}, \Phi^{\circ}_{n})$ of \eqref{eq:partition} is itself represented within $L_{n}$, making $\mathcal{M}_{n}$ the formal realization of the teleological transformation operator $\mathcal{T}$ of \eqref{eq:top}. Under Assumption~\ref{as:decisive}, $\mathcal{M}_{n}$ selects a decisive successor $\mathcal{F}_{n+1} \in \mathrm{Surp}^{\mathrm{dec}}(O_{n})$ whose extended vocabulary $L_{n+1} \setminus \tau_{n}(L_{n})$ can articulate a successor purpose $\phi_{n+1}$ with a non-contained semantic domain $D_{\phi_{n+1}} \not\subseteq D_{H_{n}}$ in \eqref{eq:succ}, without the Boolean closure constraint \eqref{eq:gcond} that bounded $\mathcal{T}$ over a fixed language.

Where the teleological model of \S\ref{sec:recursive_teleology} expresses novelty set-theoretically via $D_{\phi_{n+1}} \not\subseteq D_{H_{n}}$ in \eqref{eq:succ}, the formal model requires an algorithmic measure of the information introduced when $\mathcal{M}_{n}$ commits to $\mathcal{F}_{n+1}$ over competing branches in $\mathrm{Surp}(O_{n})$. That stronger requirement is stated separately as Assumption~\ref{as:novelty} in \S\ref{sec:independent_semantic_extension}. The two conditions play the same architectural role: each ensures that successive extensions introduce genuinely new discriminations rather than relabelled repetitions of prior structure.

The decomposition $I(O_{n}) = I^{\mathrm{loc}}(O_{n}) \;\dot\cup\; I^{\mathrm{inc}}(O_{n})$ together with \eqref{eq:formcontent} and \eqref{eq:transition} marks the exact boundary between deductively certified self-improvement and co-recursive self-extension. In a Gödel machine \citep{schmidhuber2006goedel}, the proof searcher cannot know in advance which candidate code transformations have provable utility; it must run an object-level search that generates a derivation trace $\pi_{0,t}(x)$, but it is licensed to execute a rewrite only when that trace closes within $I^{\mathrm{loc}}(O_{0})$ into a theorem derived from the initial axiom scheme $\mathcal{F}_{0}$. In his discussion of the limitations of Gödel machines, Schmidhuber notes that ``the fundamental limitations are closely related to those first identified by Gödel's celebrated paper on self-referential formulae'' \citep[p.~6]{schmidhuber2006goedel}, because any formal system encompassing arithmetic ``is either flawed or allows for unprovable but true statements. Hence, even a Gödel machine with unlimited computational resources must ignore those self-improvements whose effectiveness it cannot prove, e.g., for lack of sufficiently powerful axioms in A'' \citep[p.~6]{schmidhuber2006goedel}. Confronted with a candidate modification whose utility lies in $I^{\mathrm{inc}}(O_{0})$, the machine ``cannot profit from a hypothetical useful self-improvement whose utility is undecidable, and will therefore simply ignore it'' \citep[p.~24]{schmidhuber2006goedel}---discarding the unclosed derivation trace $\pi_{0,t}(x)$ because it yields no object-level theorem. In the present framework, that limitation is inverted. By Propositions~\ref{prop:tracesurplus} and~\ref{prop:tracefallible}, an unclosed proper reduction trace on $I^{\mathrm{inc}}(O_{n})$ is not an empty failure at which self-modification halts, but a determinate negation whose operational form $\pi_{n,t}(x)$ is reflectively reified via \eqref{eq:formcontent} into the semantic content of candidate extensions \citep[pp.~2, 51]{hegel1977phenomenology}, at least one pair of which is guaranteed to lie in $\mathrm{Surp}^{\mathrm{dec}}(O_{n})$. Because no finite search within $\mathcal{F}_{n}$ can decide in advance which open premise $b \in \mathcal{B}_{n,t}(x)$ lies in $I^{\mathrm{inc}}(O_{n})$ (Proposition~\ref{prop:tracefallible}) and no element of $\mathrm{Surp}^{\mathrm{dec}}(O_{n})$ can be certified as superior by a proof within $\mathcal{F}_{n}$ without collapsing $\mathcal{F}_{n+1}$ back into the deductive closure of $\Gamma_{n}$, the operator $\mathcal{M}_{n}$ cannot be reduced to theorem proving in $\mathcal{F}_{n}$; it mediates across discontextural regimes \citep[p.~15]{gunther1979life} by filtering out inconsistent candidate branches and making a non-deductive commitment to a genuinely new formal contexture.

\paragraph{Successor incompleteness.}

Because admissibility requires regularity, \S\ref{sec:formal_system_cognitive_state_and_incompleteness} applies verbatim to $\mathcal{F}_{n+1}$. There is therefore a sentence $G_{n+1} \in L_{n+1}$ with
\begin{equation}\label{eq:godelsucc}
  \mathcal{F}_{n+1} \nvdash G_{n+1}
  \qquad\text{and}\qquad
  \mathcal{F}_{n+1} \nvdash \neg G_{n+1},
\end{equation}
and hence $I^{\mathrm{inc}}(O_{n+1}) \neq \emptyset$.

Resolving an ambiguity at one level does not eliminate incompleteness; it just relocates it. Schematically, the architecture moves from an undecidable sentence through a successor system to a fresh undecidable sentence, and this is the fundamental recursive structure of the theory: a sufficiently expressive system can construct a stronger system that transcends some of its limitations, which in turn is subject to its own limitations. The consequences of iterating this construction are developed in \S\ref{sec:successor_incompleteness}.

The central question is therefore no longer whether a single formal system can become complete. It is whether a cognitive architecture can sustain the construction, preservation, evaluation, and selection of successor systems without reducing the process to a fixed formal space---the question \S\ref{sec:representational_limits_openended_cognitive_architectures}answers negatively for fixed architectures.

\subsection{Temporal Recursive Dynamics}\label{sec:temporal_recursive_dynamics}

The recursive structure introduced above unfolds in time. Let $\mathcal{F}_{n,t}$ denote the inferential system available at recursive level $n$ and temporal step $t$, and $O_{n,t}$ the corresponding cognitive state. Two kinds of transition must be distinguished.

A \emph{local transformation} operates within the current formal system,
\begin{equation}\label{eq:local}
  \mathcal{F}_{n,t} \longrightarrow \mathcal{F}_{n,t+1},
  \qquad L_{n,t+1} = L_{n,t}, \quad \Gamma_{n,t+1} = \Gamma_{n,t},
\end{equation}
changing only the cognitive state. By \eqref{eq:sound}, such a transition can only extend $\operatorname{dom}(O_{n,t})$ within the decidable fragment, so
\begin{equation}\label{eq:localeffect}
  I^{\mathrm{loc}}(O_{n,t+1}) \subseteq I^{\mathrm{loc}}(O_{n,t}),
  \qquad
  I^{\mathrm{inc}}(O_{n,t+1}) = I^{\mathrm{inc}}(O_{n,t}).
\end{equation}
Local work consumes the first part of the unresolved region and cannot touch the second. This is the formal content of the distinction drawn at \eqref{eq:decomp}.

A \emph{recursive extension} constructs a new inferential system,
\begin{equation}\label{eq:extend}
  \mathcal{F}_{n,t} \longrightarrow \mathcal{F}_{n+1,0},
\end{equation}
via \eqref{eq:transition}. Here, $L$ and $\Gamma$ themselves change, and \eqref{eq:localeffect} no longer applies: the successor's unresolved region is a region of a different language, related to its predecessor's only through $\tau_{n}$.

The temporal trajectory of a system is therefore
\begin{equation}\label{eq:traj}
  \mathrm{Traj}(\mathcal{F}_{0}) =
  (\mathcal{F}_{0,0}, \mathcal{F}_{0,1}, \dots,
   \mathcal{F}_{1,0}, \mathcal{F}_{1,1}, \dots,
   \mathcal{F}_{2,0}, \dots),
\end{equation}
alternating runs of local transformation with extension steps.

\paragraph{Generative richness.}

For general transformation repertoires $\mathcal{E}(\mathcal{F}_{n,t})$ that do not explicitly require trace-objectification \eqref{eq:formcontent}, we state the branching requirement proved for trace-reifying systems in Proposition~\ref{prop:tracesurplus} as a general architectural condition.

\begin{assumption}[Generative Richness]\label{as:genrich}
At every level $n$ and step $t$ at which extension is attempted,
\begin{equation}\label{eq:genrich}
  \bigl|\mathrm{Surp}(O_{n,t})\bigr| \;\geq\; 2 .
\end{equation}
\end{assumption}

\noindent
When $\mathcal{E}(\mathcal{F}_{n,t})$ is left unrestricted, \eqref{eq:genrich} is not implied by \eqref{eq:irreduce} alone, since a system without reflective trace reification could face an unresolved proposition while possessing at most one admissible successor; when $\mathcal{E}(\mathcal{F}_{n,t})$ includes \eqref{eq:formcontent}, Proposition~\ref{prop:tracesurplus} guarantees \eqref{eq:genrich} constructively from any unclosed proof search on $I^{\mathrm{inc}}(O_{n,t})$. It is what \S\ref{sec:trajectory_complexity} needs for the space of trajectories to carry a binary-tree structure, and what \S\ref{sec:from_understanding_to_speculative_intelligence} restates at the level of an individual agent within a collective.

Recursive extension does not imply success. A successor may resolve an ambiguity present in $\mathcal{F}_{n}$, leave it unresolved, reinterpret the relevant structure, or introduce new ambiguities of its own. Recursion, therefore, produces exploration rather than certainty, and the trajectory \eqref{eq:traj} should not be read as monotone progress toward a final system.

\subsection{Local Stability and Global Recursive Extension}\label{sec:local_stability_and_global_recursive_extension}

The preceding distinction can be sharpened by assigning a state space to each inferential regime. Let
\begin{equation}\label{eq:statespace}
  (X_{n}, d_{n}),
  \qquad
  X_{n} = \{\, O \colon L_{n} \rightharpoonup \mathcal{V}_{n}
    \mid O \text{ satisfies } \eqref{eq:sound} \,\},
\end{equation}
be a metric space, where $X_{n}$ is the set of cognitive states admissible within regime $\mathcal{F}_{n}$ and
\begin{equation}\label{eq:metric}
  d_{n} \colon X_{n} \times X_{n} \to \mathbb{R}_{\geq 0}
\end{equation}
measures the semantic distance between states. Note that $X_{n}$ is non-trivial precisely because \eqref{eq:sound} was not strengthened to a biconditional: i.e., had commitment been required wherever proof is available, $X_{n}$ would contain a single point and the dynamics below would be empty.

A local transformation is a map $f^{(n)}_{i} \colon X_{n} \to X_{n}$. For some applications, it is useful to require these to be contractive.

\begin{assumption}[Local Contractivity]\label{as:contract}
For each relevant transformation $f^{(n)}_{i}$ there exists $0 \leq c_{n} < 1$ with
\begin{equation}\label{eq:contract}
  d_{n}\bigl(f^{(n)}_{i}(x), f^{(n)}_{i}(x')\bigr)
  \;\leq\; c_{n}\, d_{n}(x, x')
  \qquad \forall\, x, x' \in X_{n}.
\end{equation}
\end{assumption}

\noindent
This expresses a local engineering principle: i.e., within a fixed semantic regime, reasoning reduces uncertainty and converges toward a stable interpretation. If the family
\begin{equation}\label{eq:ifs}
  \mathcal{H}_{n} = \{f^{(n)}_{1}, \dots, f^{(n)}_{k_{n}}\}
\end{equation}
consists of contractive maps on a complete metric space, the corresponding iterated function system admits a unique compact attractor $\Lambda_{n} \subseteq X_{n}$ under the standard conditions of Hutchinson's theorem.

The significance of this result for the present theory is limited but important. It establishes that local stability and open-ended global evolution are not contradictory requirements. Local convergence has the form
\begin{equation}\label{eq:converge}
  X_{n} \longrightarrow \Lambda_{n},
\end{equation}
while recursive extension has the form
\begin{equation}\label{eq:regimechange}
  X_{n} \longrightarrow X_{n+1}.
\end{equation}
The two operate at different structural levels. Convergence to $\Lambda_{n}$ substantially exhausts $I^{\mathrm{loc}}$ but, by \eqref{eq:localeffect}, leaves $I^{\mathrm{inc}}$ intact---so reaching the attractor is the condition under which \eqref{eq:transition} becomes the only remaining move. In this sense, the attractor is the formal counterpart of the $\varepsilon$-saturation criterion \eqref{eq:sat} of \S\ref{sec:recursive_teleology}, with the difference that its residue is guaranteed non-empty by \eqref{eq:irreduce} rather than assumed.

\subsection{Recursive Extension}\label{sec:recursive_extension}

We now state the successor operation in general form.

\begin{definition}[Recursive Extension Operator]\label{def:rext}
A \emph{recursive extension} at level $n$ is a pair
\begin{equation}\label{eq:rext}
  \mathcal{R}_{n} = (\mathcal{F}_{n+1}, \tau_{n}),
  \qquad
  \mathcal{F}_{n+1} \in \mathcal{E}(\mathcal{F}_{n}),
  \qquad
  \tau_{n} \colon L_{n} \to L_{n+1},
\end{equation}
selected by \eqref{eq:transition}, where $\mathcal{F}_{n+1}$ satisfies regularity and $\tau_{n}$ satisfies reflective adequacy in the sense of \S\ref{sec:reflective_ext_and_semantic_surplus}. The operator may modify the language, axioms, inference rules, representational structures, or meta-level resources of the system.
\end{definition}

\noindent
We write
\begin{equation}\label{eq:rarrow}
  \mathcal{F}_{n} \xrightarrow{\ \mathcal{R}_{n}\ } \mathcal{F}_{n+1}
\end{equation}
for the resulting transition. $\mathcal{R}_{n}$ carries selected structures forward while potentially transforming their interpretation. What reflective adequacy secures is that $L_{n+1}$ can express statements about $\mathcal{F}_{n}$, so that the predecessor becomes an \emph{object of reasoning} for the successor rather than merely a stage it has passed through. This is the formal counterpart of recursive reflection, and the property on which the memory requirements of \S\ref{sec:collective_memory_extended_cognitive_space} depend.

\subsection{Successor Incompleteness}\label{sec:successor_incompleteness}

The significance of recursive extension follows directly from \S\ref{sec:formal_system_cognitive_state_and_incompleteness}. Suppose $\mathcal{F}_{n}$ is regular and $\mathcal{R}_{n}$ constructs a successor $\mathcal{F}_{n+1}$; admissibility requires $\mathcal{F}_{n+1}$ to be regular in turn. Then by \eqref{eq:godelsucc} there is $G_{n+1} \in L_{n+1}$ with
\begin{equation}\label{eq:godeln1}
  \mathcal{F}_{n+1} \nvdash G_{n+1}
  \qquad\text{and}\qquad
  \mathcal{F}_{n+1} \nvdash \neg G_{n+1}.
\end{equation}
Iterating, the architecture generates
\begin{equation}\label{eq:chain}
  \mathcal{F}_{0}
  \xrightarrow{\ \mathcal{R}_{0}\ } \mathcal{F}_{1}
  \xrightarrow{\ \mathcal{R}_{1}\ } \mathcal{F}_{2}
  \xrightarrow{\ \mathcal{R}_{2}\ } \cdots
\end{equation}
with, at every level,
\begin{equation}\label{eq:persist}
  G_{n} \in I^{\mathrm{inc}}(O_{n})
  \qquad\text{and}\qquad
  I^{\mathrm{inc}}(O_{n}) \neq \emptyset .
\end{equation}
Recursive extension, therefore, does not eliminate incompleteness. It produces a new level at which incompleteness arises again. This is not a defect of the construction but of its engine: i.e., by \eqref{eq:persist}, the recursive pressure of \S\ref{sec:formal_system_cognitive_state_and_incompleteness} is renewed at every level, so the architecture never reaches a state in which extension becomes unmotivated.

Two limits of this result should be stated plainly. First, \eqref{eq:persist} guarantees that each level has \emph{something} undecidable, whereas ensuring that the successor decides what its predecessor could not relies on the decisive surplus \eqref{eq:decisive}, which requires reflective trace reification (Proposition~\ref{prop:tracesurplus}, stated generally in Assumption~\ref{as:decisive}) rather than bare extensional incompleteness alone. Second, \eqref{eq:chain} establishes that the sequence can be continued indefinitely, not that it must be, and not that its members differ in any way a reader would call progress. The stronger condition that successive extensions carry irreducible novelty is introduced separately as Assumption~\ref{as:novelty} in \S\ref{sec:independent_semantic_extension}. The essential property of the architecture is accordingly not that it reaches a complete state, but that it can construct successor systems in which unresolved questions can be reconsidered from a different representational position.

\subsection{Local Convergence and Global Open-Endedness}\label{sec:local_convergence_and_global_openendedness}

The distinction between local dynamics and recursive extension allows open-endedness to be defined without requiring unbounded expansion inside any single semantic space.

At each level, the system may operate within a bounded or locally contractive regime, converging by \eqref{eq:converge} to an attractor $\Lambda_{n}$. Yet the architecture may continue to generate successor regimes,
\begin{equation}\label{eq:regimeseq}
  (X_{0}, d_{0}) \longrightarrow (X_{1}, d_{1})
  \longrightarrow (X_{2}, d_{2}) \longrightarrow \cdots
\end{equation}
Open-endedness, therefore, does not require that any individual semantic space be unbounded. It requires that the process of constructing successor spaces not be restricted to a fixed finite hierarchy---the condition invoked at \eqref{eq:phigrow} in the teleological model, now without the closure constraint \eqref{eq:gcond} that bounded it there.

This yields the central distinction of the section:
\begin{equation}\label{eq:localglobal}
  \text{local convergence} \;\neq\; \text{global termination}.
\end{equation}
The inequality is not merely definitional. Local convergence means $I^{\mathrm{loc}}(O_{n}) = \emptyset$, attained at the attractor $\Lambda_{n}$. Global termination would additionally require $I^{\mathrm{inc}}(O_{n}) = \emptyset$, which \eqref{eq:irreduce} forbids for any regular system. A system can therefore be stable with respect to its current regime while remaining capable of constructing new ones, and the stability is not evidence that the process has ended.

Relation~\eqref{eq:localglobal} directly resolves the ``stopping problem'' identified by \citet[p.~166]{he2026cuv} for $(C, U, V)$ agents. In their formalization, an agent's evolution converges, $\lim_{t \to \infty} (C_{t}, U_{t}, V_{t}) = (C^{\ast}, U^{\ast}, V^{\ast})$, once the system attains ``cognitive completeness''---where the parse-graph parameters $\theta_{\mathrm{PG}}$ ``can express all reasonable human cognition and possible actions for the given scene'' \citep[p.~174]{he2026cuv}---and ``task completeness''---where the agent has ``the capability to find all possible target PGs that can increase the values in its $V$-dimension, as well as the capability to determine a path from the agent's current PG to the future target PG'' \citep[p.~174]{he2026cuv}. At that point, ``when an agent's cognitive architecture determines that the agent's current skills have reached the upper limit of what the agent can master, or that the agent's current value functions have reached the optimal level the agent can achieve, the cognitive architecture cannot further orchestrate the learning process, leading to the convergence of learning'' \citep[p.~175]{he2026cuv}. In the terms of \S\ref{sec:formal_system_cognitive_state_and_incompleteness}--\S\ref{sec:local_stability_and_global_recursive_extension}, cognitive and task completeness describe the exhaustion of the decidable unresolved region $I^{\mathrm{loc}}(O_{n}) \to \emptyset$ as the trajectory approaches the local attractor $\Lambda_{n} \subseteq X_{n}$ in \eqref{eq:converge}. That convergence becomes a terminal stopping problem in $(C, U, V)$ only because cognitive completeness is defined relative to a fixed function class $\theta_{\mathrm{PG}}$ and a fixed value specification $V_{\mathrm{select}}$. Within any sufficiently expressive regime $\mathcal{F}_{n}$, however, exhausting $I^{\mathrm{loc}}(O_{n})$ at $\Lambda_{n}$ leaves the undecidable region $I^{\mathrm{inc}}(O_{n})$ intact by \eqref{eq:localeffect}. What \citet{he2026cuv} characterize as the terminal boundary of learning within a given $(C, U, V)$ space is here the trigger for the co-recursive transition $X_{n} \to X_{n+1}$ in \eqref{eq:regimechange}, provided the architecture is governed by a meta-teleology that treats saturation at $\Lambda_{n}$ as the signal to extend the representational and teleological framework itself.

We can make this coupling between local convergence and global co-recursion explicit by formalizing the \textit{a priori} meta-teleological trigger introduced in \S\ref{sec:laws} and \S\ref{sec:recursive_teleology}.

\begin{definition}[A Priori Meta-Teleological Trigger]\label{def:metatelos}
Let $\bigl((X_{n}, d_{n}), \mathcal{H}_{n}\bigr)$ be a locally contractive cognitive regime satisfying Assumption~\ref{as:contract} with compact attractor $\Lambda_{n} \subseteq X_{n}$ driven by the active first-order teleology $\Phi^{+}_{n}$. Given a saturation threshold $\varepsilon > 0$, an \emph{a priori meta-teleological trigger} is a second-order transition rule that monitors the metric proximity $d_{n}(O_{n,t}, \Lambda_{n}) := \inf_{y \in \Lambda_{n}} d_{n}(O_{n,t}, y)$ and, at the first step $t_{n}^{\ast}$ for which
\begin{equation}\label{eq:metatrigger}
  d_{n}\bigl(O_{n,t_{n}^{\ast}}, \Lambda_{n}\bigr) \;<\; \varepsilon ,
\end{equation}
invokes the meta-level selection operator $\mathcal{M}_{n}$ on the candidate extensions \eqref{eq:formcontent} generated from the unclosed proper reduction traces $\pi_{n,t_{n}^{\ast}}(x)$ of $x \in I(O_{n,t_{n}^{\ast}})$---which by \eqref{eq:irreduce}, \eqref{eq:localeffect}, and Proposition~\ref{prop:tracefallible} are guaranteed to contain at least one genuine decisive pair in $\mathrm{Surp}^{\mathrm{dec}}(O_{n,t_{n}^{\ast}})$ alongside any latent local branches---to select a decisive extension $\mathcal{F}_{n+1,0} \in \mathcal{M}_{n}\bigl(\mathrm{Surp}^{\mathrm{dec}}(O_{n,t_{n}^{\ast}})\bigr)$ under Assumption~\ref{as:decisive} (or Propositions~\ref{prop:tracesurplus}--\ref{prop:tracefallible}), initiating the regime transition $(X_{n}, d_{n}) \to (X_{n+1}, d_{n+1})$.
\end{definition}

\begin{proposition}[Escape from Local Convergence]\label{prop:stopping}
Let $\mathcal{A}$ be a cognitive architecture whose local dynamics on each regime $(X_{n}, d_{n})$ satisfy Inferential Soundness (Assumption~\ref{as:sound}) and Local Contractivity (Assumption~\ref{as:contract}) with contraction factor $0 \leq c_{n} < 1$.
\begin{enumerate}
  \item \emph{(Absorption under intra-regime selection.)} Any architecture whose self-modification rule requires deductive certification within $\mathcal{F}_{n}$ (acting only on $I^{\mathrm{loc}}(O_{n,t})$, as in a Gödel machine \citep{schmidhuber2006goedel}) or only reweights objectives within the fixed space $X_{n}$ (as in $(C, U, V)$ selector convergence \citep{he2026cuv}) satisfies $d_{n}(O_{n,t}, \Lambda_{n}) \leq c_{n}^{t}\, d_{n}(O_{n,0}, \Lambda_{n}) \to 0$ as $t \to \infty$ and remains permanently confined to $X_{n}$.
  \item \emph{(Unbounded regime succession under meta-teleological triggering.)} If $\mathcal{A}$ satisfies Decisive Extension (Assumption~\ref{as:decisive}) and is equipped with the \emph{a priori} meta-teleological trigger of Definition~\ref{def:metatelos} for some $\varepsilon > 0$, then at every level $n \geq 0$ the trigger condition \eqref{eq:metatrigger} is reached in at most
  \begin{equation}\label{eq:triggerbound}
    t_{n}^{\ast} \;\leq\; \max\Bigl\{0,\; \Bigl\lfloor \frac{\ln\bigl(d_{n}(O_{n,0}, \Lambda_{n}) / \varepsilon\bigr)}{\ln(1/c_{n})} \Bigr\rfloor + 1 \Bigr\}
  \end{equation}
  local steps, at which point $I^{\mathrm{inc}}(O_{n,t_{n}^{\ast}}) \neq \emptyset$ by \eqref{eq:irreduce} and \eqref{eq:localeffect}, inducing an infinite sequence of distinct regimes $(X_{0}, d_{0}) \to (X_{1}, d_{1}) \to (X_{2}, d_{2}) \to \cdots$ in which no local attractor $\Lambda_{n}$ is a global halting state.
\end{enumerate}
\end{proposition}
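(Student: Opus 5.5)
Part (1) follows from the Banach fixed-point structure of the local regime, and part (2) from solving the contraction estimate for $t$ and then inducting over levels.

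\textbf{Part (1).} I will first note that the attractor $\Lambda_{n}$ is invariant under every $f^{(n)}_{i} \in \mathcal{H}_{n}$. This holds at least in the form $f^{(n)}_{i}(\Lambda_{n}) \subseteq \Lambda_{n}$, which is immediate from Hutchinson's fixed-point equation $\Lambda_{n} = \bigcup_{i} f^{(n)}_{i}(\Lambda_{n})$. Then, for any $y \in \Lambda_{n}$, I will use $f^{(n)}_{i}(y) \in \Lambda_{n}$ and Assumption~\ref{as:contract}:
\[
d_{n}\bigl(f^{(n)}_{i}(O),\Lambda_{n}\bigr) \le d_{n}\bigl(f^{(n)}_{i}(O), f^{(n)}_{i}(y)\bigr) \le c_{n}\, d_{n}(O,y).
\]
Taking the infimum over $y$ gives a one-step contraction of the point-to-set distance. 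Iterating over $t$ steps gives $d_{n}(O_{n,t},\Lambda_{n}) \le c_{n}^{t}\, d_{n}(O_{n,0},\Lambda_{n})$, which tends to $0$ because $c_{n}<1$.

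Confinement to $X_{n}$ comes from the type of the allowed moves. A rule that rewrites only on deductive certification in $\mathcal{F}_{n}$ acts on $I^{\mathrm{loc}}$. By \eqref{eq:local} it preserves $L_{n}$ and $\Gamma_{n}$, so each step is a map $X_{n}\to X_{n}$. Likewise, reweighting objectives inside $X_{n}$ is by hypothesis a self-map of $X_{n}$. Thus no step of type \eqref{eq:regimechange} is ever produced. Inferential Soundness ensures that the iterates stay in $X_{n}$ as defined in \eqref{eq:statespace}.

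\textbf{Part (2), bound on $t_{n}^{\ast}$.} I will use the estimate from part (1) and solve $c_{n}^{t} D < \varepsilon$, where $D := d_{n}(O_{n,0},\Lambda_{n})$.
\begin{itemize}
\item If $D<\varepsilon$, then $t_{n}^{\ast}=0$.
\item If $D\ge\varepsilon$ and $c_{n}>0$, it suffices that $t > \ln(D/\varepsilon)/\ln(1/c_{n})$. The smallest such integer is at most $\lfloor \ln(D/\varepsilon)/\ln(1/c_{n})\rfloor + 1$, which gives \eqref{eq:triggerbound}.
\item If $c_{n}=0$, one step suffices, and the expression is read with $\ln(1/c_{n})=\infty$.
\end{itemize}
Since $t_{n}^{\ast}$ is the \emph{first} step satisfying \eqref{eq:metatrigger}, it is bounded by this value.

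\textbf{Part (2), the regime chain.} At $t_{n}^{\ast}$, \eqref{eq:irreduce} gives $I^{\mathrm{inc}}(O_{n,0})\neq\emptyset$, and \eqref{eq:localeffect} carries this unchanged to $I^{\mathrm{inc}}(O_{n,t_{n}^{\ast}})\neq\emptyset$. Assumption~\ref{as:decisive} then supplies a nonempty decisive surplus together with a selection $\mathcal{F}_{n+1,0}\in\mathcal{M}_{n}(\mathrm{Surp}^{\mathrm{dec}})$. Regularity of admissible successors lets the whole argument re-apply at level $n+1$ via \eqref{eq:godelsucc}, so induction on $n$ produces the infinite chain.

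\textbf{Main obstacle: distinctness of the regimes.} My approach is to use decisiveness. $\mathcal{F}_{n+1}$ decides some $\tau_{n}(x)$ with $x\in I^{\mathrm{inc}}(O_{n})$, which $\mathcal{F}_{n}$ does not decide. Hence $\mathcal{F}_{n+1}$ is not a mere relabelling of $\mathcal{F}_{n}$, and $X_{n+1}\neq X_{n}$. For non-adjacent levels, reflective adequacy lets me compose the translations $\tau$, so a sentence left undecided at level $m$ is decided at level $m+1$.

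I expect that proving full distinctness without a novelty hypothesis may need care. I would settle for distinctness of consecutive regimes, which is enough for ``no $\Lambda_{n}$ is a global halting state.'' That conclusion then follows because at each attractor a further transition is always triggered, consistent with \eqref{eq:localglobal}.
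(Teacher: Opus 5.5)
Your proof of part (1), and of part (2) up to the induction on $n$, is the paper's argument. It uses contraction toward $\Lambda_{n}$ via Assumption~\ref{as:contract} and Hutchinson's theorem, and confinement because local moves fix $L_{n}$ and $\Gamma_{n}$ by \eqref{eq:local}. It then solves $c_{n}^{t}D<\varepsilon$ and renews the hypotheses at level $n+1$ via \eqref{eq:irreduce}, \eqref{eq:localeffect}, Assumption~\ref{as:decisive} and regularity of the successor. You are more explicit than the paper, which only cites Hutchinson: the invariance $f^{(n)}_{i}(\Lambda_{n})\subseteq\Lambda_{n}$ is exactly what turns pointwise contraction into the point-to-set estimate, and your cases $D<\varepsilon$ and $c_{n}=0$ verify \eqref{eq:triggerbound} correctly.

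The step that fails is your distinctness argument. Decisiveness \eqref{eq:decisive} is measured through $\tau_{n}$, and by \eqref{eq:noinclusion} $\tau_{n}$ need not preserve interpretation, let alone reflect decidability. So ``$\mathcal{F}_{n+1}$ decides $\tau_{n}(x)$ while $\mathcal{F}_{n}$ does not decide $x$'' shows only that $\tau_{n}$ fails to reflect decidability at $x$. It does not give $\mathcal{F}_{n+1}\neq\mathcal{F}_{n}$, let alone $X_{n+1}\neq X_{n}$.

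For a concrete case, take $\mathcal{F}_{n+1}=\mathcal{F}_{n}=\mathrm{PA}$ and let $\tau_{n}$ be a relative interpretation of $\mathrm{PA}+\neg\mathrm{Con}(\mathrm{PA})$ in $\mathrm{PA}$, which exists by a classical result of Feferman. Then $\mathcal{F}_{n+1}\vdash\neg\tau_{n}(\mathrm{Con}(\mathrm{PA}))$ while $\mathrm{Con}(\mathrm{PA})\in I^{\mathrm{inc}}(O_{n})$. This successor is regular, reflectively adequate and decisive, yet it is the predecessor itself.

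Your non-adjacent argument has the same defect. In addition, nothing decided at level $m+1$ need remain decided later, since general admissible successors need not contain the translated theorems of their predecessors. Distinctness, even of consecutive regimes, therefore needs an extra hypothesis. One example: require each $\tau_{n}$ to be a provability-preserving inclusion $L_{n}\subseteq L_{n+1}$. Then decisiveness makes the sets of decided sentences grow strictly, and all regimes are pairwise distinct.

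The paper's own proof never argues distinctness. It only shows that the trigger fires again at every level, so ``distinct'' is delivered only in the sense of a new index $n$. That is also all you need for ``no $\Lambda_{n}$ is a global halting state''. That conclusion does not rest on distinctness, consecutive or otherwise, so you should drop that dependence rather than settle for a weakened form of it.
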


\begin{proof}
For (1), every local transformation $f^{(n)}_{i} \in \mathcal{H}_{n}$ preserves $L_{n}$ and $\Gamma_{n}$ by \eqref{eq:local} and acts solely on $I^{\mathrm{loc}}(O_{n,t})$ by \eqref{eq:localeffect}, while leaving $I^{\mathrm{inc}}(O_{n,t})$ invariant. By Assumption~\ref{as:contract} and Hutchinson's theorem, the induced trajectory on $(X_{n}, d_{n})$ contracts toward $\Lambda_{n}$ at rate $c_{n}^{t}$, and because no transition rule is licensed to adopt an element of $\mathrm{Surp}^{\mathrm{dec}}(O_{n,t})$ whose validity lies in $I^{\mathrm{inc}}(O_{n,t})$, the state remains in $X_{n}$ for all $t \geq 0$. 

For (2), contractivity implies $d_{n}(O_{n,t}, \Lambda_{n}) \leq c_{n}^{t}\, d_{n}(O_{n,0}, \Lambda_{n})$, which falls below $\varepsilon$ no later than the bound $t_{n}^{\ast}$ in \eqref{eq:triggerbound}. At step $t_{n}^{\ast}$, regularity of $\mathcal{F}_{n}$ guarantees $I^{\mathrm{inc}}(O_{n,t_{n}^{\ast}}) = I^{\mathrm{inc}}(O_{n,0}) \neq \emptyset$ by \eqref{eq:irreduce} and \eqref{eq:localeffect}, and Assumption~\ref{as:decisive} guarantees $\mathrm{Surp}^{\mathrm{dec}}(O_{n,t_{n}^{\ast}}) \neq \emptyset$. Definition~\ref{def:metatelos} then executes $\mathcal{F}_{n,t_{n}^{\ast}} \to \mathcal{F}_{n+1,0}$, which by regularity of $\mathcal{F}_{n+1,0}$ and \eqref{eq:persist} renews the hypotheses at level $n+1$, continuing the sequence indefinitely by induction on $n$.
\end{proof}

The mathematical question is consequently shifted. It is not whether a single state space can contain every future cognitive state, but whether an architecture can sustain an indefinitely extensible sequence of representational systems---and, as \S\ref{sec:representational_limits_openended_cognitive_architectures}shows, whether any \emph{fixed} architecture can do so at all.

\section{Representational Limits of Open-Ended Cognitive Architectures}\label{sec:representational_limits_openended_cognitive_architectures}

The preceding section establishes a recursive architecture in which a formal system constructs successor systems when its current representational resources prove insufficient. We now ask what this process costs in representational terms.

The question is whether a fixed internal description and a fixed transition mechanism can completely represent an indefinitely self-extending architecture. This is distinct from the question of computational scale. A system may possess arbitrarily large computational resources while remaining within a fixed representational framework; conversely, a system may recursively extend its framework without any individual regime being unbounded. The issue is not how much computation is available at a given level, but whether the information required to specify the successive regimes can remain bounded.

\subsection{Semantic Description Complexity}

Let $K(x)$ denote the Kolmogorov complexity of an object $x$ with respect to a fixed universal prefix-free Turing machine, and $K(x \mid y)$ the corresponding conditional complexity. We assume throughout that formal systems are encoded as finite binary strings under a fixed effective encoding; this is possible because each $\mathcal{F}_{n}$ is effectively axiomatized by \S\ref{sec:formal_system_cognitive_state_and_incompleteness}.

For the sequence of systems generated by \eqref{eq:chain}, define the \emph{accumulated history}
\begin{equation}\label{eq:history}
  H_{n} = (\mathcal{F}_{0}, \mathcal{F}_{1}, \dots, \mathcal{F}_{n}).
\end{equation}
The quantity $K(H_{n})$ measures the minimum algorithmic information required to specify the complete semantic history up to level $n$.

The distinction between $K(H_{n})$ and $K(\mathcal{F}_{n})$ matters. The complexity of an individual successor need not exceed that of its predecessor: i.e., a successor may contain a highly compressed representation of information previously distributed across several structures, so that $K(\mathcal{F}_{n+1}) < K(\mathcal{F}_{n})$ is entirely possible. What matters for open-ended evolution is whether the \emph{accumulated} history can remain algorithmically bounded.

\subsection{Independent Semantic Extension}\label{sec:independent_semantic_extension}

Let $\gamma > 0$ be a constant for which the chain rule for prefix Kolmogorov
complexity holds, for all $n \geq 2$, in the form
\begin{equation}\label{eq:chainconst}
  K(x, y) \;\geq\; K(x) + K(y \mid x) - \gamma \log_{2} n ,
\end{equation}
where $n$ bounds the lengths of the objects involved; the additive constant of
the usual statement is absorbed into $\gamma \log_{2} n$.

Before stating the algorithmic novelty condition, we note how the architectural postulates of \S\ref{sec:recursive_teleology} and \S\ref{sec:reflective_ext_and_semantic_surplus} give rise to conditional description complexity at each extension step. When the meta-level operator selects a successor $\mathcal{F}_{n+1} \in \mathcal{M}_{n}\bigl(\mathrm{Surp}^{\mathrm{dec}}(O_{n})\bigr)$, three structural features distinguish $\mathcal{F}_{n+1}$ from a deductive continuation of $H_{n}$. First, by Decisive Extension (Assumption~\ref{as:decisive}), $\mathcal{F}_{n+1}$ settles at least one frontier proposition $x_{n} \in I^{\mathrm{inc}}(O_{n})$ that is formally independent of $\Gamma_{n}$ and therefore not deductively determined by $H_{n}$. Second, by Generative Richness (Assumption~\ref{as:genrich}), $\mathrm{Surp}(O_{n})$ offers multiple competing continuations at each level. Third, by the non-monotone teleological succession condition \eqref{eq:succ}, the extended vocabulary $L_{n+1} \setminus \tau_{n}(L_{n})$ is selected in response to newly discriminated environmental signals $D_{\phi_{n+1}} \setminus D_{H_{n}} \neq \emptyset$ (or, in the collective setting of \S\ref{sec:distributed_semantic_surplus}, cross-agent polycontextural inputs) that were excluded from every prior domain in $H_{n}$. In a closed deductive rewriter such as a Gödel machine \citep{schmidhuber2006goedel}, where axioms are replaced only by ``theorems derivable from the original axioms'' \citep[p.~17]{schmidhuber2006goedel}, the successor at proof-search step $t_{n}$ is a computable function of $\mathcal{F}_{0}$ and $t_{n}$, so that $K(\mathcal{F}_{n+1} \mid H_{n}) \leq O(\log n)$. By contrast, whenever the commitment $\mathcal{F}_{n+1} \in \mathcal{M}_{n}\bigl(\mathrm{Surp}^{\mathrm{dec}}(O_{n})\bigr)$ encodes genuinely new semantic discrimination that no fixed finite program can compute from $H_{n}$ alone, its conditional complexity $K(\mathcal{F}_{n+1} \mid H_{n})$ exceeds the logarithmic prefix-coding overhead $\gamma \log_{2} n$ of \eqref{eq:chainconst} by a positive margin $c > 0$.

\begin{assumption}[Persistent Algorithmic Novelty]\label{as:novelty}
There is a constant $c > 0$ such that, for every level $n$,
\begin{equation}\label{eq:novelty}
  K(\mathcal{F}_{n+1} \mid H_{n}) \;\geq\; c + \gamma \log_{2} n .
\end{equation}
\end{assumption}

\noindent
Applying \eqref{eq:chainconst} to the history and its successor,
\begin{equation}\label{eq:chainrule}
  K(H_{n+1}) \;=\; K(H_{n}, \mathcal{F}_{n+1})
  \;\geq\; K(H_{n}) + K(\mathcal{F}_{n+1} \mid H_{n}) - \gamma \log_{2} n ,
\end{equation}
so that under Assumption~\ref{as:novelty} the logarithmic terms cancel and
\begin{equation}\label{eq:growth}
  K(H_{n+1}) \;\geq\; K(H_{n}) + c ,
\end{equation}
giving, by induction,
\begin{equation}\label{eq:linear}
  K(H_{n}) \;\geq\; K(H_{0}) + cn
  \qquad\text{and}\qquad
  \boxed{\lim_{n \to \infty} K(H_{n}) = \infty} .
\end{equation}
The conclusion is not
\begin{equation}\label{eq:notthis}
  K(\mathcal{F}_{n+1}) > K(\mathcal{F}_{n}),
\end{equation}
which need not hold at any level, but the divergence in \eqref{eq:linear}: \textit{the accumulated history grows at least linearly in the number of extensions.}

\begin{remark}[Step-Wise vs.\ Amortized Algorithmic Novelty]
The additive term $\gamma \log_{2} n$ in \eqref{eq:novelty} is included solely so that the prefix chain rule \eqref{eq:chainconst} yields the clean step-wise increment $K(H_{n+1}) - K(H_{n}) \geq c$ at every single transition in \eqref{eq:growth}. For the impossibility theorem of \S\ref{sec:fixed_architecture_limitation}, however, neither step-wise uniformity nor the chain-rule constant $\gamma$ is needed. What Theorem~\ref{thm:fixed} actually requires is only that the accumulated history complexity $K(H_{n})$ eventually exceed the logarithmic self-delimiting index bound $2\log_{2} n$ of a fixed simulator. In particular, Theorem~\ref{thm:fixed} holds unchanged under the much weaker \emph{amortized novelty condition} (positive lower asymptotic density)
\begin{equation}\label{eq:amortized}
  \liminf_{n \to \infty} \frac{1}{n} K(H_{n}) \;\geq\; c \;>\; 0,
\end{equation}
and indeed under any super-logarithmic divergence $\lim_{n \to \infty} \bigl(K(H_{n}) - 2\log_{2} n\bigr) = \infty$. Under \eqref{eq:amortized}, arbitrarily long runs of recursive extensions may consolidate or compress prior structure with $K(\mathcal{F}_{n+1} \mid H_{n}) = O(1)$, provided that genuinely incompressible semantic discriminations enter intermittently with positive asymptotic density $c > 0$.
\end{remark}

\subsection{Trajectory Complexity}\label{sec:trajectory_complexity}

Two distinct sources of complexity should be kept apart.

First, the space of \emph{possible} trajectories. By Assumption~\ref{as:genrich}, every extension step offers at least two admissible continuations. If branching occurs at infinitely many levels and the choices are independent, the set of trajectories through \eqref{eq:traj} contains a subset order-isomorphic to the infinite binary tree, and hence a subset in bijection with Cantor space $\{0,1\}^{\mathbb{N}}$. Its cardinality is therefore $2^{\aleph_{0}}$.

Second, the complexity of a \emph{single realized} trajectory. By \eqref{eq:linear}, one such trajectory has finite prefixes of unbounded algorithmic complexity.

These are different claims, and neither implies the other. Uncountability concerns how many trajectories the architecture could take; unbounded complexity concerns how much information one of them contains. A system could have uncountably many possible trajectories, all of which are algorithmically trivial, or a unique trajectory of unbounded complexity.

\subsection{The Fixed-Architecture Limitation}\label{sec:fixed_architecture_limitation}

\begin{theorem}[Unbounded Representational Complexity of Open-Ended Trajectories]
\label{thm:fixed}
Let
\begin{equation}\label{eq:arch}
  \mathcal{A} = (\Sigma, \Theta, \sigma_{0})
\end{equation}
be a cognitive architecture with a fixed finite state description $\Sigma$, a
fixed computable transition mechanism $\Theta$, and a fixed initial condition
$\sigma_{0} \in \Sigma$. Then $\mathcal{A}$ cannot generate, from $\sigma_{0}$
alone, a semantic history $H_{0}, H_{1}, H_{2}, \dots$ satisfying
Assumption~\ref{as:novelty}.
\end{theorem}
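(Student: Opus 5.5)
We argue by contradiction, comparing an upper bound on $K(H_n)$ that holds for any history produced by a fixed architecture with the linear lower bound \eqref{eq:linear} that Assumption~\ref{as:novelty} forces.

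\textbf{Step 1: the simulator program.} Suppose $\mathcal{A} = (\Sigma, \Theta, \sigma_{0})$ generates $H_{0}, H_{1}, \dots$ from $\sigma_{0}$ alone. "From $\sigma_0$ alone" means no external input: the sequence of states $\sigma_{0}, \Theta(\sigma_{0}), \Theta^{2}(\sigma_{0}), \dots$ is fully determined. The history is read off by a fixed effective encoding of formal systems. Because $\Theta$ is computable and $\Sigma, \sigma_0$ are finite objects, there is one fixed program $p_{\mathcal{A}}$ for the reference universal prefix machine with the following behaviour. Given a self-delimiting code for $n$, it runs $\Theta$ from $\sigma_0$ until level $n$ has been produced and outputs the encoding of $H_n$.

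One subtlety needs care here. A level index must be recognizable from the run, so we take "generates the history" to include that extension steps are effectively marked in the state sequence. Otherwise we could not locate $\mathcal{F}_{n+1,0}$ in the trajectory \eqref{eq:traj}. I would state this as part of the meaning of generation. If the architecture were nondeterministic, the branch choices would be external information and so would violate the "from $\sigma_0$ alone" hypothesis. That is where the branching of Assumption~\ref{as:genrich} and the selection $\mathcal{M}_n$ escape the theorem.

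\textbf{Step 2: the upper bound.} Using a standard self-delimiting code for $n$ of length at most $\log_2 n + 2\log_2\log_2 n + O(1) \le 2\log_2 n + O(1)$, Step~1 gives
\begin{equation*}
  K(H_{n}) \;\le\; |p_{\mathcal{A}}| + 2\log_{2} n + O(1) \;=\; 2\log_{2} n + O_{\mathcal{A}}(1),
\end{equation*}
where the constant depends only on $\mathcal{A}$, the encoding, and the universal machine, and not on $n$.

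\textbf{Step 3: the contradiction.} Assumption~\ref{as:novelty}, through the chain rule \eqref{eq:chainconst} and the derivation \eqref{eq:chainrule}--\eqref{eq:linear}, gives $K(H_n) \ge K(H_0) + cn$. Since $cn - 2\log_2 n \to \infty$, for large $n$ this exceeds the bound of Step~2. So no such $\mathcal{A}$ exists.

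The same argument proves the Remark's strengthening. The amortized condition \eqref{eq:amortized}, or any divergence of $K(H_n) - 2\log_2 n$, already contradicts Step~2. This also shows that the per-step constant $\gamma$ plays no role in the contradiction.

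\textbf{Main obstacle.} The delicate step is Step~1, turning "fixed finite state description, fixed computable transition, fixed initial condition" into the single program $p_{\mathcal{A}}$. Two points need care. First, $K$ of a fixed finite $\Sigma$ together with a computable $\Theta$ is a constant only if $\Theta$ is given as an actual program. If $\Theta$ were merely an abstract function, we would have no such bound, so "computable" must be used here. Second, the index $n$ must be recoverable from the run. I would address both by defining generation as effective enumeration of the levels, and by noting that $\Sigma$ being finite is not actually needed: only the finiteness of the descriptions of $\Theta$ and $\sigma_0$ enters.

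Once Step~1 is in place, the remaining steps are standard Kolmogorov bookkeeping.
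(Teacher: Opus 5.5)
Your proposal is correct and is essentially the paper's proof: simulate $\Theta$ from $\sigma_{0}$ to obtain $K(H_{n}) \le K(\Theta, \sigma_{0}) + 2\log_{2} n + O(1)$, and collide this with the linear lower bound \eqref{eq:linear} that Assumption~\ref{as:novelty} forces. Your extra remarks spell out points the paper's proof leaves implicit: level boundaries must be effectively recognizable in the run, branch choices would count as external input, and finiteness of $\Sigma$ is never used.
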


\begin{proof}
Suppose it could. Since $\Theta$ and $\sigma_{0}$ are fixed, the state sequence
$\sigma_{t+1} = \Theta(\sigma_{t})$ is determined by the finite description
$(\Theta, \sigma_{0})$ together with the index $n$. A program that reads this
description, iterates $\Theta$ from $\sigma_{0}$, and outputs the first $n+1$
systems therefore specifies $H_{n}$ completely, so that
\begin{equation}\label{eq:bounded}
  K(H_{n}) \;\leq\; K(\Theta, \sigma_{0}) + 2\log_{2} n + O(1),
\end{equation}
where $2\log_{2} n$ encodes $n$ self-delimitingly and the additive constant
covers the fixed simulation program. Apart from that logarithmic term, the bound
is independent of $n$.

By Assumption~\ref{as:novelty} and \eqref{eq:linear}, however,
\begin{equation}\label{eq:unbounded}
  K(H_{n}) \;\geq\; K(H_{0}) + cn
\end{equation}
with $c > 0$. Combining \eqref{eq:bounded} and \eqref{eq:unbounded},
\begin{equation}\label{eq:collision}
  K(H_{0}) + cn \;\leq\; K(\Theta, \sigma_{0}) + 2\log_{2} n + O(1),
\end{equation}
whose left-hand side grows linearly in $n$ while its right-hand side grows
logarithmically. The inequality therefore fails for all sufficiently large $n$,
and no such $\mathcal{A}$ exists.
\end{proof}

\noindent
A fixed closed transition system cannot, from a fixed finite initial description
alone, generate an indefinitely self-extending semantic trajectory carrying
persistent algorithmic novelty. Moreover, inspection of \eqref{eq:collision} confirms that the contradiction depends only on asymptotic growth exceeding $2\log_{2} n$, and therefore holds verbatim when the step-wise condition of Assumption~\ref{as:novelty} is relaxed to the amortized density condition \eqref{eq:amortized}.

\subsection{What the Result Does and Does Not Establish}\label{sec:what_the_result_does_and_does_not_establish}

Theorem~\ref{thm:fixed} does not say that a fixed computer cannot represent an infinite or uncountable state space, nor does it deny that a program of fixed Kolmogorov complexity can generate open-ended complexity. In his work on algorithmic theories of everything, Schmidhuber \citep{schmidhuber2000ate} shows that a compact, fixed program can produce boundless apparent complexity in two distinct ways, and examining why neither yields open-ended intelligence clarifies the exact force of Assumption~\ref{as:novelty}.

First, a single history $x = (x_{1}, x_{2}, \dots)$ can be generated by ``a finite algorithm A that computes $x_{k+1}$ ($k \ge 1$) from $x_{k}$ and additional information $\text{noise}_{k}$'' \citep[p.~4]{schmidhuber2000ate} where $\text{noise}_{k}$ is ``calculated by invoking a finite pseudorandom generator subroutine'' \citep[p.~4]{schmidhuber2000ate}. To an observer inside such a process, states may appear perpetually novel. Yet as Schmidhuber notes of histories ``deterministically computed by a short algorithm,'' ``the algorithmic entropy (compare {[}98{]}) of the universe does not increase over time, because a finite program conveying a finite amount of information is responsible for everything'' \citep[p.~37]{schmidhuber2000ate}. In the terms of Theorem~\ref{thm:fixed}, such a generator is a fixed closed architecture $\mathcal{A} = (\Sigma, \Theta, \sigma_{0})$ whose history satisfies \eqref{eq:bounded}: its complexity is fixed from the outset, and no extension step introduces irreducible algorithmic information.

Second, Schmidhuber considers universal dovetailing algorithms such as \texttt{FAST}, which ``simultaneously compute all computable universes in an interleaving fashion that outputs each individual universe as quickly as its fastest algorithm running just by itself, save for a constant factor independent of the universe's size'' \citep[p.~5]{schmidhuber2000ate}. Because ``computing all universes with all possible types of physical laws tends to be much cheaper in terms of information requirements than computing just one particular, arbitrarily chosen one, because there is an extremely short algorithm that systematically enumerates and runs all computable universes, while most individual universes have very long shortest descriptions'' \citep[p.~40]{schmidhuber2000ate}, a program of fixed $O(1)$ complexity will eventually enumerate every finite prefix $H_{n}$ of every computable semantic trajectory alongside all other strings. Yet enumerating the entire branching tree of \S\ref{sec:trajectory_complexity} is not open-ended intelligence. As Schmidhuber himself remarks when noting that \texttt{FAST} outputs prefixes of Chaitin's incompressible $\Omega_{n}$ within $O(n2^{n})$ steps, it does so ``only because it also outputs many other strings besides $\Omega_{n}$ --- there is still no fast way of identifying $\Omega_{n}$ among all the outputs'' \citep[p.~30]{schmidhuber2000ate}. By the discrimination principle of \S\ref{sec:intelligence_begins}, a procedure that admits every continuation without exclusion ($\Phi^{-}_{n} = \emptyset$ in \eqref{eq:data-alt}) makes no distinction at all. Open-ended intelligence is not the unselective enumeration of all possible branches, which carries negligible algorithmic information precisely because nothing is rejected; it is the realization of a \emph{single selective trajectory} $H_{n} = (\mathcal{F}_{0}, \dots, \mathcal{F}_{n})$ through the branching surplus via successive commitments $\mathcal{F}_{k+1} = \mathcal{M}_{k}(\mathrm{Surp}(O_{k}))$. Identifying one such trajectory that satisfies Assumption~\ref{as:novelty} among all strings printed by a dovetailer requires specifying the sequence of selections at each level, and by \eqref{eq:linear} that specification carries at least $cn$ bits of algorithmic information that the fixed dovetailer does not possess.

What Theorem~\ref{thm:fixed} rules out is therefore specific: \textit{a fixed closed architecture cannot by itself select and realize a semantic history satisfying Assumption~\ref{as:novelty}}. The conflict is between \eqref{eq:linear} and \eqref{eq:bounded}, and it arises whenever each extension step commits to a distinction whose algorithmic information cannot be compressed into the prior history.

To sustain such a process, at least one of the following must therefore change over time:

\begin{enumerate}
  \item the representational language of the system;
  \item the transition rules governing it;
  \item the external information available to it;
  \item its own meta-level description.
\end{enumerate}

Recursive self-extension supplies precisely such a mechanism. Rather than requiring a single fixed architecture to encode every future semantic distinction, the architecture constructs successors \eqref{eq:chain}, distributing the representational burden across a sequence of formal systems rather than imposing it on a single fixed state space. Item (3) is the route taken in \S7: a bounded agent coupled to other agents receives information not derivable from its own history.

\subsection{Compression and Semantic Growth}

The result clarifies the relation between compression and semantic extension.

Compression seeks the smallest description of an existing semantic domain,
\begin{equation}\label{eq:compress}
  K(\text{representation}) \ \downarrow
\end{equation}
subject to preserving the relevant information or predictive behavior. Recursive semantic extension addresses a different problem: when the current system cannot adequately represent or resolve a relevant ambiguity, the architecture constructs a successor \eqref{eq:rarrow}, and the objective is not to minimize the complexity of the current representation but to enlarge the space of representable distinctions.

These are compatible. Under Assumption~\ref{as:novelty} the accumulated history diverges by \eqref{eq:linear}, while each individual $K(\mathcal{F}_{n})$ may be small and may even decrease---a system may compress aggressively within a regime while the sequence of regimes accumulates irreducible information. This yields the central distinction of the section:
\begin{equation}\label{eq:compvsext}
  \text{compression within a semantic space}
  \;\neq\;
  \text{extension of the semantic space itself}.
\end{equation}

\subsection{Relation to Open-Ended Intelligence}\label{sec:relation_to_openended_intelligence}

The result provides a formal reason why open-ended intelligence cannot be reduced to optimization within a fixed representational space. A system may become increasingly capable within a fixed language---improving predictions, compressing observations, optimizing parameters---without thereby acquiring the ability to construct a new regime. That ability requires \eqref{eq:rarrow} with a successor introducing distinctions not derivable from its predecessor, which is \eqref{eq:decisive} on the logical side and Assumption~\ref{as:novelty} on the algorithmic side.

The significance of recursive teleology becomes clearer here. The problem is not that an intelligent system must compute more. It must determine when its current regime is insufficient, generate a successor, and continue reasoning within it---the three operations that \S\ref{sec:recursive_teleology} assigned to saturation detection, $\mathcal{E}_{\Phi}$, and $\mathcal{T}$ respectively, and that \S\ref{sec:reflective_ext_and_semantic_surplus} reassigns to \eqref{eq:irreduce}, $\mathcal{E}$, and $\mathcal{M}_{n}$.

Theorem~\ref{thm:fixed} also sets up the difficulty \S\ref{sec:collective_intelligence_and_distributed_semantic_extension} addresses. Every physical system is finite, so no individual architecture can be the unbounded source of novelty that Assumption~\ref{as:novelty} requires. Either the assumption fails in real systems, or the novelty comes from outside the individual, which is the possibility that \S\ref{sec:collective_intelligence_and_distributed_semantic_extension} develops.

\section{Collective Intelligence and Distributed Semantic Extension}\label{sec:collective_intelligence_and_distributed_semantic_extension}

Theorem~\ref{thm:fixed} establishes that no fixed closed architecture generates a semantic history satisfying Assumption~\ref{as:novelty}. Every physical cognitive system is fixed in this sense: its memory, processing capacity, energy, and temporal horizon are bounded, and its transition mechanism is finite. The theorem, therefore, applies to any individual agent.

This limitation is not specific to artificial systems. No individual human possesses unbounded memory or representational capacity, yet human knowledge accumulates across generations through language, writing, institutions, and external records---a semantic structure exceeding the state space of any individual participant.

We therefore ask which of the escape routes listed in \S\ref{sec:what_the_result_does_and_does_not_establish} a collective takes. It is route (3): external information. A bounded agent embedded in a collective receives representations not derivable from its own history, and a persistent external memory accumulates what no agent retains. This section makes that precise. It does not evade Theorem~\ref{thm:fixed}; a collective of finitely many fixed architectures is itself a fixed architecture and falls under the same result.

\subsection{Collective Cognitive System}\label{sec:collective_cognitive_system}

Let $C = \{A^{1}, \dots, A^{N}\}$ be a finite collection of cognitive agents. Each $A^{k}$ carries its own inferential system $\mathcal{F}^{k}_{n,t}$, cognitive state $O^{k}_{n,t}$, and state space $X^{k}_{n}$ in the sense of \eqref{eq:statespace}. Agents need not share a language.

Communication requires translation rather than identity of representation. For agents $A^{i}, A^{j}$ let
\begin{equation}\label{eq:sharedlang}
  \sigma^{i \to j}_{n} \colon L^{i}_{n} \rightharpoonup L^{j}_{n}
\end{equation}
be a partial translation, defined on the fragment of $A^{i}$'s language that $A^{j}$ can interpret. Its domain
\begin{equation}\label{eq:common}
  L^{i,j}_{n} = \operatorname{dom}(\sigma^{i \to j}_{n}) \subseteq L^{i}_{n}
\end{equation}
is the \emph{common ground} between the two agents. Partiality is essential: $L^{i,j}_{n} = L^{i}_{n}$ would make the agents interchangeable, while $L^{i,j}_{n} = \emptyset$ would make them incommunicable. Neither extreme supports collective extension, as \S\ref{sec:diversity_and_common_ground} shows.

The collective state is not the union of individual states but the individual states together with the relations coupling them,
\begin{equation}\label{eq:collstate}
  \mathbb{X}_{n,t} =
  \bigl( \{X^{k}_{n}\}_{k=1}^{N}, \ \{\sigma^{i \to j}_{n}\}_{i \neq j} \bigr).
\end{equation}

\subsection{Distributed Semantic Surplus}\label{sec:distributed_semantic_surplus}

Each agent has a semantic surplus $\mathrm{Surp}(O^{k}_{n,t})$ in the sense of \eqref{eq:surplus}. The collective surplus is
\begin{equation}\label{eq:collsurplus}
  \mathrm{Surp}_{C}(n,t) = \bigcup_{k=1}^{N} \mathrm{Surp}(O^{k}_{n,t}).
\end{equation}
The cardinality of this union is not what makes a collective more than the sum of its agents; that is measured in \S\ref{sec:collective_generative_capacity}. The structurally important property is that one agent's surplus can \emph{become} another's unresolved region. If $A^{i}$ constructs a successor $\mathcal{F}' \in \mathrm{Surp}(O^{i}_{n,t})$ and communicates it, the translated content $\sigma^{i \to j}_{n}(\mathcal{F}')$ may contain propositions that $A^{j}$ can express but not decide,
\begin{equation}\label{eq:crossagent}
  \sigma^{i \to j}_{n}\bigl(\mathrm{Surp}(O^{i}_{n,t})\bigr)
  \cap I^{\mathrm{inc}}(O^{j}_{n,t}) \neq \emptyset,
\end{equation}
so that what one agent has resolved becomes a source of recursive pressure for another. By \eqref{eq:irreduce}, this pressure can then drive $A^{j}$ to construct successors of its own, which may in turn return to $A^{i}$ in transformed form.

\subsection{Collective Co-Recursion}\label{sec:collective_corecursion}

\emph{Collective co-recursion} is the process in which semantic transformations cross the boundaries of individual systems and generate successor states in others. For agents $A^{i}, A^{j}$ a collective transition has the form
\begin{equation}\label{eq:colltrans}
  I^{\mathrm{inc}}(O^{i}_{n,t})
  \ \xrightarrow{\ \mathcal{E}\ }\ \mathrm{Surp}(O^{i}_{n,t})
  \ \xrightarrow{\ \sigma^{i \to j}_{n}\ }\ I^{\mathrm{inc}}(O^{j}_{n,t})
  \ \xrightarrow{\ \mathcal{E}\ }\ \mathrm{Surp}(O^{j}_{n,t+1}),
\end{equation}
and a collective trajectory is a sequence
\begin{equation}\label{eq:colltraj}
  O^{k_{0}}_{n_{0},t_{0}} \rightsquigarrow O^{k_{1}}_{n_{1},t_{1}}
  \rightsquigarrow O^{k_{2}}_{n_{2},t_{2}} \rightsquigarrow \cdots
\end{equation}
in which the agent index may change at each step. The semantic process thereby becomes independent of the physical identity of any single agent. In other words, a transformation need not remain within the system that generated it.

\paragraph{What interaction alone does not achieve.}

Interaction between finitely many bounded agents does not by itself escape Theorem~\ref{thm:fixed}. If each $A^{k}$ has a finite state space $B^{k}$ and no state persists outside the agents, the joint state space $\prod_{k=1}^{N} B^{k}$ is finite, the collective is a fixed architecture in the sense of \eqref{eq:arch}, and \eqref{eq:bounded} applies to it unchanged. The collective trajectory \eqref{eq:colltraj} must then eventually revisit a joint state and cycle. Coupling redistributes cognitive work; it does not create algorithmic novelty.

What changes the situation is persistent external memory.

\begin{proposition}[Bounded Agents with Persistent Memory]\label{prop:collective}
Let $C$ be a collective in which
\begin{enumerate}
  \item each agent $A^{k}$ has bounded state space $B^{k}$;
  \item the translations \eqref{eq:sharedlang} are non-trivial and partial;
  \item there is a memory $\mathrm{Mem}_{C}$, external to every agent, whose contents persist independently of the agents' states, and whose size is not bounded in advance;
  \item agents may read from and write to $\mathrm{Mem}_{C}$.
\end{enumerate}
Then the collective semantic history is not bounded by the state space of any
individual agent, nor by $\prod_{k} B^{k}$.
\end{proposition}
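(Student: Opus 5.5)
The plan is to prove the claim by a counting argument on the reachable joint configurations, contrasted with the finiteness argument given just before the proposition. I would first formalize the collective with memory as a transition system on pairs $(\beta_t, m_t)$. Here $\beta_t \in \prod_{k} B^{k}$ is the joint agent state and $m_t \in \{0,1\}^{*}$ is the content of $\mathrm{Mem}_{C}$ at step $t$. I would then define the collective semantic history at step $t$ as the sequence of successor systems recorded in $\mathrm{Mem}_{C}$ up to $t$, each stored together with its historical coordinates in the sense of the sixth law. The proposition then reads: the set of distinct histories the collective can realize is not bounded by $|B^{k}|$ for any $k$, nor by $\prod_{k}|B^{k}|$. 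The same holds for the description length of a single realized history.

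The key steps, in order, are as follows.

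\textbf{Step 1.} I would show that memory persistence (hypothesis 3) makes $m_t$ evolve only through writes. Together with hypothesis 4, agents can therefore append a record of their current output at every step. This is the one place where persistence matters. Without it, $m_t$ would itself be a function of $\beta_t$, and we would fall back into the finite product space and its cycling argument.

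\textbf{Step 2.} I would use hypothesis 2 to show that writes are not trivial. Because $\sigma^{i\to j}_{n}$ is partial and non-trivial, the cross-agent transfer \eqref{eq:crossagent} supplies content that one agent's successor $\mathcal{F}'$ makes available and that another agent cannot decide. A read of that record therefore changes the reader's subsequent behaviour. Each agent remains finite, but the read-write loop lets the pair $(\beta_t,m_t)$ avoid revisiting an earlier configuration, since $m_t$ strictly grows.

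\textbf{Step 3.} I would count. The number of distinct memory contents reachable after $t$ writes grows unboundedly with $t$, so it eventually exceeds $\prod_{k}|B^{k}|$. This is at least $t$ distinct prefixes along a single run, and up to $2^{t}$ across runs when the branching of Assumption~\ref{as:genrich} is realized through the selections $\mathcal{M}_{n}$. Hence the history cannot be encoded in any bounded agent state, nor in the joint state.

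\textbf{Step 4.} I would relate this to Theorem~\ref{thm:fixed} without contradicting it. The bound \eqref{eq:bounded} still applies if nothing external enters. The proposition, however, only claims unboundedness relative to the agent state spaces, not algorithmic novelty. If external inputs (route (3)) are written to $\mathrm{Mem}_{C}$, then $K(H_n)$ can grow as in \eqref{eq:linear}. Otherwise the history merely grows in length, while its complexity stays at $O(\log n)$.

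I expect Step 2 to be the main obstacle, because I need a memory trajectory that is genuinely non-repeating rather than eventually periodic. A finite controller writing to an unbounded tape could, in principle, write a constant pattern. The first thing I would try is the trivial witness: each agent appends a timestamped record (its historical coordinate) at every step. The recorded history then has length $t$, so it has at least $t$ distinct prefixes, and boundedness fails by a pigeonhole comparison with $\prod_k |B^k|$. If that is judged too weak, I would strengthen it by requiring the written content to depend on reads via \eqref{eq:crossagent}. The collective then simulates a finite-control machine with unbounded tape, which is not bounded by its control states.
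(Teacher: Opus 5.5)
Your proposal is correct and follows essentially the paper's route. The collective state is the pair of joint agent state and memory content, and the unbounded persistent $\mathrm{Mem}_{C}$ removes the finite product $\prod_{k} B^{k}$ on which the cycling argument rested. Content one agent writes and another reads through the partial translations then persists after its author has changed state. Your pigeonhole count makes explicit what the paper's brief proof leaves informal, and so does your Step~4 caveat: a closed collective with unbounded memory still satisfies \eqref{eq:bounded}, so what it gains is unbounded state-space size, not algorithmic novelty, unless external input enters.
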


\begin{proof}
The collective's state at time $t$ is the pair $\bigl((O^{1}_{t},\dots,O^{N}_{t}),  \mathrm{Mem}_{C}(t)\bigr)$. By Condition (3), the second component is not drawn from a fixed finite set, so the collective state space is not $\prod_{k} B^{k}$, and the cycling argument above does not apply. By Condition (2) and \eqref{eq:crossagent}, a successor constructed by $A^{i}$ and written to $\mathrm{Mem}_{C}$ may be read by $A^{j}$ and interpreted under a different inferential system, yielding by \eqref{eq:transition} successors unavailable within $\mathcal{F}^{i}$. By Condition (3), this content persists after $A^{i}$ has moved to another state, so the accumulated history is not bounded by any $B^{k}$.
\end{proof}

\noindent
While Proposition~\ref{prop:collective} lifts the state-space bound at the level of the collective through $\mathrm{Mem}_{C}$---realizing route (3) of \S\ref{sec:what_the_result_does_and_does_not_establish}---the coupling of heterogeneous agents also explains how an individual bounded agent $A^{j}$ can experience a local semantic trajectory $H^{j}_{n} = (\mathcal{F}^{j}_{0}, \dots, \mathcal{F}^{j}_{n})$ satisfying Assumption~\ref{as:novelty} relative to its own internal description, and why teleological diversity across agents is necessary to sustain that novelty.

\begin{corollary}[Cross-Agent Algorithmic Novelty and Teleological Divergence]\label{cor:crossnovelty}
Let $C = \{A^{1}, \dots, A^{N}\}$ be a collective satisfying the hypotheses of Proposition~\ref{prop:collective}, in which each bounded agent $A^{k}$ operates with internal transition rule $\Theta^{k}$, local history $H^{k}_{n}$, and active teleological partition $\Phi^{k}_{n} = (\Phi^{k,+}_{n}, \Phi^{k,-}_{n}, \Phi^{k,\circ}_{n})$ as in \eqref{eq:partition}.
\begin{enumerate}
  \item \emph{(Realization of local novelty via heterogeneous discrimination.)} Suppose $A^{i}$ and $A^{j}$ hold distinct teleological partitions $\Phi^{i}_{n} \neq \Phi^{j}_{n}$ such that $A^{i}$ admits environmental signals $D^{i}_{n} \setminus D^{j}_{n} \neq \emptyset$ excluded by $A^{j}$ under \eqref{eq:data-alt}. When $A^{i}$ constructs a decisive extension $\mathcal{F}' \in \mathrm{Surp}^{\mathrm{dec}}(O^{i}_{n,t})$ encoding distinctions from $D^{i}_{n} \setminus D^{j}_{n}$ and transmits it across $\sigma^{i \to j}_{n}$ in \eqref{eq:crossagent} (or deposits it in $\mathrm{Mem}_{C}$), any successor $\mathcal{F}^{j}_{n+1}$ constructed by $A^{j}$ that incorporates $\sigma^{i \to j}_{n}(\mathcal{F}') \cap I^{\mathrm{inc}}(O^{j}_{n,t})$ satisfies
  \begin{equation}\label{eq:crossnovelty}
    K\bigl(\mathcal{F}^{j}_{n+1} \;\bigm|\; H^{j}_{n}\bigr)
    \;\geq\;
    K\bigl(\sigma^{i \to j}_{n}(\mathcal{F}') \;\bigm|\; H^{j}_{n}\bigr) - O(1).
  \end{equation}
  In particular, whenever the stream of cross-agent and memory-mediated distinctions drawn from $\bigcup_{i \neq j} (D^{i}_{n} \setminus D^{j}_{n})$ carries positive asymptotic algorithmic density $c > 0$ conditioned on $(\Theta^{j}, \sigma^{j}_{0})$, $A^{j}$'s realized trajectory $H^{j}_{n}$ satisfies the amortized novelty condition \eqref{eq:amortized} (and, when uniform, Assumption~\ref{as:novelty}), despite $A^{j}$ having a finite internal state space $B^{j}$.
  \item \emph{(Collapse under teleological homogenization.)} Conversely, if all agents share an identical initial formal system $\mathcal{F}_{0}$ and deterministic proof searcher (as in an ensemble of identical Gödel machines \citep{schmidhuber2006goedel}), or if multi-agent alignment drives every agent's value selector toward a common stationary distribution $P^{\ast}_{V}$ at which "the value system of the entire intelligent population may converge" \citep[p.~178]{he2026cuv}, then the symmetric difference of admitted domains vanishes ($D^{i}_{n}\; \triangle\; D^{j}_{n} \to \emptyset$), the cross-agent conditional complexity collapses to $K(\mathcal{F}^{i}_{n,t} \mid H^{j}_{n}) \leq O(\log n)$, and inter-agent coupling can no longer supply the irreducible information required by Assumption~\ref{as:novelty} or \eqref{eq:amortized}.
\end{enumerate}
\end{corollary}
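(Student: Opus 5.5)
Part (1) is essentially an information-monotonicity statement, and we would prove it from the standard fact that computable post-processing cannot increase conditional complexity. The key step is to make ``incorporates'' precise. We read it as the existence of a fixed (agent-independent) extraction program $e$ with $U(e,\mathcal{F}^{j}_{n+1},H^{j}_{n}) = \sigma^{i\to j}_{n}(\mathcal{F}')$. Concretely, $e$ reads off the axioms of $\mathcal{F}^{j}_{n+1}$ that lie outside $\tau^{j}_{n}(\Gamma^{j}_{n})$, using $H^{j}_{n}$ to recover $\Gamma^{j}_{n}$. Given this, the standard inequality $K(g(y)\mid z)\le K(y\mid z)+K(g)+O(1)$ yields \eqref{eq:crossnovelty} at once, with the $O(1)$ absorbing $K(e)$.

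For the ``in particular'' clause, we would sum along the realized trajectory. The chain-rule decomposition \eqref{eq:chainrule} gives $K(H^{j}_{n})\ge \sum_{k<n}K(\mathcal{F}^{j}_{k+1}\mid H^{j}_{k}) - O(n\log n)$ only crudely. We would therefore work instead with the joint string of imported contents $Y_{n}=(\sigma(\mathcal{F}'_{0}),\dots,\sigma(\mathcal{F}'_{n-1}))$. The argument has three steps:
\begin{enumerate}
\item By the extraction map applied levelwise, $K(Y_{n}\mid \Theta^{j},\sigma^{j}_{0})\le K(H^{j}_{n}\mid\Theta^{j},\sigma^{j}_{0})+O(\log n)$.
\item The density hypothesis states $\liminf_{n} \frac1n K(Y_{n}\mid\Theta^{j},\sigma^{j}_{0})\ge c$.
\item Dropping the condition costs only $K(\Theta^{j},\sigma^{j}_{0})=O(1)$, which gives $\liminf_{n} \frac1n K(H^{j}_{n})\ge c$, i.e.\ \eqref{eq:amortized}.
\end{enumerate}
The uniform version is obtained by imposing the stepwise bound on each imported block and invoking \eqref{eq:crossnovelty} directly with the $\gamma\log_{2}n$ margin. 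We would stress that this does not contradict Theorem~\ref{thm:fixed}: $A^{j}$ is not closed, since its transition consumes inputs from $\mathrm{Mem}_{C}$ (Proposition~\ref{prop:collective}), so \eqref{eq:bounded} fails for $H^{j}_{n}$.

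For part (2), we would split into the two hypotheses. In the identical Gödel-machine case, every $\mathcal{F}^{i}_{n,t}$ is a computable function of $(\mathcal{F}_{0},t,n)$, exactly as noted before Assumption~\ref{as:novelty}. Since $H^{j}_{n}$ contains $\mathcal{F}_{0}$, and $t$ is $O(\log n)$-encodable under a polynomial budget assumption, we get $K(\mathcal{F}^{i}_{n,t}\mid H^{j}_{n})\le O(\log n)$. Identical partitions then give $D^{i}_{n}=D^{j}_{n}$ by \eqref{eq:data-alt}.

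In the alignment case, we would argue in the limit. Convergence of selectors to $P^{\ast}_{V}$ makes the partitions eventually equal, so by \eqref{eq:data-alt} $D^{i}_{n}\triangle D^{j}_{n}=\emptyset$ from some stage on. The stream drawn from $\bigcup_{i\ne j}(D^{i}_{n}\setminus D^{j}_{n})$ is then empty beyond a finite prefix, whose complexity is a constant. Summing shows $K(H^{j}_{n})\le K(\Theta^{j},\sigma^{j}_{0})+O(1)+O(\log n)$ along that tail, which violates \eqref{eq:amortized}.

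The main obstacle is that the alignment case lacks the computability premise: equality of domains does not by itself make $\mathcal{F}^{i}$ computable from $H^{j}_{n}$. We would first try to add the explicit hypothesis that the agents' construction rules are identical and computable given the shared partition, so that they reduce to the first case. Failing that, we would weaken the conclusion to the statement that the domain-difference channel supplies no novelty.
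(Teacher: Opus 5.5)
Your route matches the paper's. For (1), a fixed program recovers the imported content from $(\mathcal{F}^{j}_{n+1}, H^{j}_{n})$ and subadditivity gives \eqref{eq:crossnovelty}. For (2), one shows that $\mathcal{F}^{i}_{n,t}$ is computable from $H^{j}_{n}$ and the step indices. Two of your additions improve on the paper. The joint string $Y_{n}$ replaces the paper's one-line appeal to ``accumulating those inputs across $n$ extensions''; summing \eqref{eq:crossnovelty} through \eqref{eq:chainrule} would indeed incur an error term of order $n\log n$. The polynomial budget is needed because the paper's argument only yields $O(\log(n+t))$.

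The genuine gap is the one you flag yourself, in the alignment branch of (2), and your write-up of that branch contains steps that fail:
\begin{itemize}
\item Your summing step does not follow. An eventually empty domain-difference stream says nothing about what $A^{j}$ still receives through $\mathrm{Mem}_{C}$ or from its own signals in $D^{j}_{n}$. Bounding $K(H^{j}_{n})$ would need the converse of your step 1, $K(H^{j}_{n} \mid \Theta^{j}, \sigma^{j}_{0}, Y_{n}) = O(\log n)$, which is not assumed.
\item Convergence of the selectors to $P^{\ast}_{V}$ gives only the limiting statement $D^{i}_{n} \triangle D^{j}_{n} \to \emptyset$, not eventual equality of partitions.
\item The step targets $K(H^{j}_{n})$ rather than the claimed bound on $K(\mathcal{F}^{i}_{n,t} \mid H^{j}_{n})$.
\end{itemize}
Your closing diagnosis is correct, and it applies equally to the paper. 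Equal admitted domains do not make $\mathcal{F}^{i}_{n,t}$ computable from $H^{j}_{n}$. In the paper's own framework $\mathcal{M}^{i}_{n}$ leaves open which decisive successor $A^{i}$ adopts, and that sequence of choices can be incompressible relative to $H^{j}_{n}$ however the domains align. The paper's proof closes this branch only by asserting, together with the G\"odel-machine case, that with $D^{i}_{n} = D^{j}_{n}$ the state $\mathcal{F}^{i}_{n,t}$ is computable from $H^{j}_{n}$ and $(n,t)$. That is exactly the extra hypothesis you propose: identical computable construction and selection rules given the shared configuration. Adopt it explicitly and your reduction to the first branch is the paper's proof. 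Without it, only your weakened conclusion is available.

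A second, smaller point concerns (1). The hypothesis has $A^{j}$ incorporate only $\sigma^{i\to j}_{n}(\mathcal{F}') \cap I^{\mathrm{inc}}(O^{j}_{n,t})$, whereas the right side of \eqref{eq:crossnovelty} is the complexity of all of $\sigma^{i\to j}_{n}(\mathcal{F}')$. The paper reconstructs only the intersection and then states the bound for the whole, which does not follow. An incompressible selection of sentences that $A^{j}$ already decides can make the right side arbitrarily large while $\mathcal{F}^{j}_{n+1}$ stays simple relative to $H^{j}_{n}$. Your definition of incorporation as recoverability of all of $\sigma^{i\to j}_{n}(\mathcal{F}')$ by a fixed program repairs this, but it strengthens the hypothesis. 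Your concrete $e$ also does not implement it. Reading off the axioms of $\mathcal{F}^{j}_{n+1}$ outside $\tau^{j}_{n}(\Gamma^{j}_{n})$ returns whatever $A^{j}$ newly adopted. It cannot recover unadopted content or separate imported axioms from $A^{j}$'s own. It also needs $\tau^{j}_{n}$, which is not recorded in $H^{j}_{n}$ as defined in \eqref{eq:history}. State recoverability as an explicit hypothesis rather than presenting it as a consequence of the construction. Your step 1 needs it at every level, with $e$ returning the empty string at levels without an import.
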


\begin{proof}
For (1), since $A^{j}$ reconstructs $\sigma^{i \to j}_{n}(\mathcal{F}') \cap I^{\mathrm{inc}}(O^{j}_{n,t})$ from $\mathcal{F}^{j}_{n+1}$ and $H^{j}_{n}$ via the partial translation \eqref{eq:sharedlang}, standard subadditivity of prefix complexity gives \eqref{eq:crossnovelty}. Because the signals in $D^{i}_{n} \setminus D^{j}_{n}$ were excluded from $A^{j}$'s own experience domain $D^{j}_{n}$ by \eqref{eq:data-alt}, they are not computable from $(\Theta^{j}, \sigma^{j}_{0})$ alone; accumulating those inputs across $n$ extensions therefore drives $\liminf_{n \to \infty} K(H^{j}_{n})/n \geq c > 0$. For (2), when all agents execute the same deterministic transition rule from identical initial axioms $\mathcal{F}_{0}$ or converge to a shared value specification $P^{\ast}_{V}$ with $D^{i}_{n} = D^{j}_{n}$, the state $\mathcal{F}^{i}_{n,t}$ is computable from $H^{j}_{n}$ together with the step index $(n,t)$, bounding $K(\mathcal{F}^{i}_{n,t} \mid H^{j}_{n})$ by $O(\log(n+t))$ and reducing the coupled ensemble to the single closed regime of Theorem~\ref{thm:fixed}.
\end{proof}

\subsection{Collective Memory as Extended Cognitive Space}\label{sec:collective_memory_extended_cognitive_space}

Proposition~\ref{prop:collective} makes memory constitutive rather than convenient. $\mathrm{Mem}_{C}$ holds pairs
\begin{equation}\label{eq:memelement}
  (x, \xi) \in \mathrm{Mem}_{C},
\end{equation}
where $x$ is a content representation and $\xi$ its position in the collective semantic structure: the originating agent, the inferential system under which it was produced, the transformations preceding it, and those that subsequently depended upon it.

Both components are necessary. Content alone is insufficient because, by \eqref{eq:noinclusion}, the meaning of a representation is not preserved across recursive extension: $\tau_{n}$ may transform what it carries forward. A representation retrieved without its coordinates cannot be correctly reinterpreted, since there is no way to determine what it meant where it was produced. Collective memory, therefore, preserves both what was generated and where it belongs, and it is reflective adequacy (\S\ref{sec:reflective_ext_and_semantic_surplus}) that makes the second component expressible at all.

\subsection{Collective Teleology}\label{sec:collective_teleology}

Agents may hold distinct purposes. Writing $\phi^{k}_{t}$ for the active purpose of $A^{k}$ in the sense of \S\ref{sec:recursive_teleology}, the collective teleological configuration is
\begin{equation}\label{eq:collteleo}
  \Psi_{t} = \{\phi^{1}_{t}, \dots, \phi^{N}_{t}\}.
\end{equation}
Collective intelligence does not require these to coincide; identity would eliminate a source of semantic diversity, as \S\ref{sec:diversity_and_common_ground} shows. What is required is that the collective be able to evaluate relations among purposes and, where necessary, generate higher-order purposes governing their interaction:
\begin{equation}\label{eq:collteleorec}
  \{\phi^{k}_{t}\}_{k} \ \longrightarrow\ \Psi_{t}
  \ \longrightarrow\ \Psi_{t+1}.
\end{equation}
A collective teleology thereby becomes an object of meta-reasoning in the same way an individual one does, introducing a further level of recursive structure.

\subsection{Open-Endedness Beyond the Individual}\label{sec:openendedness_beyond_the_individual}

The individual/collective distinction resolves an apparent physical contradiction. No finite system contains infinite information, so open-ended intelligence cannot mean that a machine holds an infinite state space at any finite time. It means instead that the system can continue constructing successor states without a fixed upper bound on the semantic structures that may become relevant over its lifetime.

At the individual level, this is realized through recursion and co-recursion; at the collective level, through the transfer of surplus between bounded agents and its accumulation in $\mathrm{Mem}_{C}$. The architecture is therefore a hierarchy of bounded systems participating in an unbounded process:
\begin{equation}\label{eq:hierarchy}
\begin{gathered}
  \text{bounded agents} \to \text{semantic coupling} \to
  \text{collective memory}\\ \to \text{recursive transformation} \to
  \text{new semantic structure}.
\end{gathered}
\end{equation}
The physical substrate remains finite at every moment; the semantic process need not terminate.

\subsection{A Hegelian Interpretation}\label{sec:hegelian_interpretation}

At the level of the collective trajectory, the trace-objectifying mechanism formalized in \S\ref{sec:reflective_ext_and_semantic_surplus} expands into the broader triadic movement of Hegel's speculative logic. As shown in \S\ref{sec:laws} and \S\ref{sec:reflective_ext_and_semantic_surplus}, a concept or formal regime cannot be understood as a static identity: determination produces internal difference at the undecidable frontier, and the failed attempt to resolve that difference within the existing regime forces a more comprehensive determination.

Within the architecture, a state $O_{n,t}$ carries an unresolved region $I(O_{n,t})$ marking the limits of its organization. A recursive extension $\mathcal{R}_{n}$ constructs a successor that, by reflective adequacy, retains the ability to represent its predecessor while adding resources to address what the predecessor could not,
\begin{equation}\label{eq:aufhebung}
  X_{0} \hookrightarrow X_{1} \hookrightarrow X_{2} \hookrightarrow \cdots
\end{equation}
This is a formal analogue of \emph{Aufhebung}: a determination that is simultaneously preserved, transformed, and integrated into a more comprehensive structure. Specifically, as shown in \eqref{eq:formcontent} and Proposition~\ref{prop:tracesurplus}, when an object-level proof search on $I^{\mathrm{inc}}(O_{n,t})$ fails to close, its derivation trace $\pi_{n,t}(x)$ does not vanish into an empty nothingness \citep[p.~51]{hegel1977phenomenology}; reflective translation $\tau_{n}$ converts that failed operational form into determinate propositional content at level $n+1$, so that ``the new object shows itself to have come about through a reversal of consciousness itself'' \citep[p.~55]{hegel1977phenomenology} and yields a concept ``higher and richer than the preceding --- richer because it negates or opposes the preceding and therefore contains it'' \citep[p.~33]{hegel2010}. The hooked arrows in \eqref{eq:aufhebung} represent reflective adequacy rather than set-theoretic inclusion, which \eqref{eq:noinclusion} explicitly denies.

\subsection{From Understanding to Speculative Intelligence}\label{sec:from_understanding_to_speculative_intelligence}

The formalism suggests three qualitatively different regimes.

\emph{Identity: local determination.} Within a stable regime, cognition proceeds through established representations and contractive transformations (Assumption~\ref{as:contract}), converging to $\Lambda_{n}$. The objective is a reliable determination. This corresponds to what Hegel calls \emph{Verstand}.

\emph{Difference: generation of surplus.} Where the current representation meets unresolved ambiguity, more than one admissible transformation may be available. This is Assumption~\ref{as:genrich}, $|\mathrm{Surp}(O_{n,t})| \geq 2$, applied here to the individual agent within a collective. As noted in \eqref{eq:genrich}, it does not follow from bare incompleteness alone, but holds constructively by Proposition~\ref{prop:tracesurplus} whenever the agent reflectively reifies its unclosed derivation traces via \eqref{eq:formcontent}, ensuring that ambiguity yields genuine branching.

\emph{Totality: recursive integration.} A higher-order transformation may incorporate several previously distinct structures into a new context \eqref{eq:regimechange}, constructing a framework in which previously separate distinctions can be related. This corresponds to speculative intelligence in the limited sense relevant here: cognition becomes able to treat its own distinctions, ambiguities, and purposes as objects of further determination.

\subsection{Proxy Teleology and Autonomous Intelligence}\label{sec:proxy_teleology_and_autonomous_intelligence}

In contemporary machine-learning systems, the teleological structure is supplied externally: a human specifies a task, objective function, reward signal, or prompt, and the system transforms within the resulting problem space. We call this \emph{proxy teleology}. Such a system may reason with great sophistication while the criterion determining what counts as successful reasoning remains outside it.

The present architecture requires more. The teleological principle must itself be accessible to recursive revision, which is what \eqref{eq:top} formalizes: a successor purpose is selected by the system, not supplied to it. Note that \eqref{eq:top} is relational, so the architecture determines which successor purposes are admissible without determining which is adopted---autonomy here is underdetermination, not arbitrariness.

This does not imply that every goal must be continuously replaced. It requires that the teleological structure remain available as an object of higher-order reasoning. The distinction is therefore
\begin{equation}\label{eq:proxy}
  \text{fixed telos} \quad\text{vs.}\quad \text{recursively accessible telos},
\end{equation}
and the latter is the condition for the architecture developed here.

Even architectures that make value functions an explicit system component remain within proxy teleology whenever those values are fixed from outside: \citet{he2026cuv}, for instance, propose to remedy the conflicting values embedded in statistical training corpora by ``predefining clear values and principles to guide the behavior of intelligent agents'' \citep[p.~179]{he2026cuv}, yet a predefined value space $V$ still withholds recursive access to the teleology itself and eliminates at the outset the inter-perspective tension from which collective semantic surplus (\S\ref{sec:distributed_semantic_surplus}) is generated.

\subsection{Diversity and Common Ground}\label{sec:diversity_and_common_ground}

Collective recursion imposes a structural requirement on the relation between agents. If all agents generate identical surpluses, interaction adds nothing to what an individual already has; if they share no representational structure, their transformations cannot be combined. Productive interaction requires
\begin{equation}\label{eq:diversity}
  \emptyset \subsetneq
  \mathrm{Surp}(O^{i}_{n,t}) \cap \mathrm{Surp}(O^{j}_{n,t}) \subsetneq
  \mathrm{Surp}(O^{i}_{n,t}) \cup \mathrm{Surp}(O^{j}_{n,t}),
\end{equation}
with both inclusions strict. The intersection provides the substrate through which representations can be translated and compared---it is the surplus-level counterpart of the common ground \eqref{eq:common}. The difference provides the diversity from which transformations arise that neither agent would reach independently.

Redundancy is therefore not the opposite of diversity but its precondition: too much yields convergence without extension, too little prevents interaction altogether. \S\ref{sec:collective_generative_capacity} develops the consequences of the degenerate case.

\subsection{Civilizational Co-Recursion}\label{sec:civilizational_corecursion}

Human civilization instantiates this architecture. Individual minds remain physically bounded while their states are coupled through language, artifacts, institutions, and persistent records, so that a transformation generated by one individual becomes an unresolved problem for another, whose response becomes the starting point for a third. The resulting history persists beyond the lifetime of any participant---condition (3) of Proposition~\ref{prop:collective} realized in the form of writing.

The limiting collective semantic space may be represented as a direct limit, with two qualifications that the notation alone does not convey. Take the limit over the agent state spaces rather than over the collective states \eqref{eq:collstate}. Each agent's recursive extensions induce total maps
\begin{equation}\label{eq:agentmap}
  \iota^{k}_{n} \colon X^{k}_{n} \longrightarrow X^{k}_{n+1},
\end{equation}
so that $\bigl(X^{k}_{n}, \iota^{k}_{n}\bigr)_{n \geq 0}$ is a directed system and
\begin{equation}\label{eq:directlimit}
  X^{k}_{\infty} = \varinjlim_{n} \bigl(X^{k}_{n}, \iota^{k}_{n}\bigr)
\end{equation}
is well defined. The inter-agent translations $\sigma^{i \to j}_{n}$ are partial by \eqref{eq:common} and are therefore carried alongside the limit as structure on the family $\{X^{k}_{\infty}\}_{k}$, rather than being included in the directed system itself. The collective limit is the family together with that structure,
\begin{equation}\label{eq:colllimit}
  \mathbb{X}_{\infty} =
  \bigl( \{X^{k}_{\infty}\}_{k=1}^{N},
         \{\sigma^{i \to j}_{\infty}\}_{i \neq j} \bigr),
\end{equation}
in the sense of \eqref{eq:collstate}.

The maps \eqref{eq:agentmap} are not canonical. A recursive extension supplies a translation $\tau_{n}$ which, by \eqref{eq:noinclusion}, need not preserve interpretation, and the successor adopted is left underdetermined by $\mathcal{M}_{n}$. Different selections, therefore, yield different directed systems. $\mathbb{X}_{\infty}$ is accordingly the limit of \emph{one realized trajectory}, not an invariant of the architecture, and by \S\ref{sec:trajectory_complexity} there are continuum-many trajectories to choose among.

Subject to these qualifications, a civilization provides a mechanism by which finite cognitive systems participate in a semantic process whose representational history is not bounded by the state of any one component.

\section{Result}

The formal development supports the following general conclusion:

\begin{quote}
\emph{Open-ended intelligence is a property of a recursively extended cognitive process, not of an individual physical substrate. A finite cognitive system can participate in open-ended intelligence when its semantic representations, memory, transformations, and teleological structures can be recursively coupled to those of other bounded systems.
}
\end{quote}

Individual recursion provides the mechanism of self-extension. Memory preserves the resulting semantic history. Polycontexturality permits multiple interpretations to coexist. Teleology selects which transformations to pursue. Collective coupling allows these processes to cross the boundaries of individual agents.

The resulting architecture replaces the notion of an isolated, infinitely capable intelligence with a more general principle: intelligence can remain physically finite while its semantic process remains open-ended.

\section{Biological Inspirations and Architectural Motifs}

The formal architecture developed in the preceding sections is not intended as a neuroscientific theory. Mammalian cognition nevertheless provides a rich source of biological analogues for several of the architectural principles proposed here. These analogies should not be interpreted as empirical validation of the formal theorems. Rather, they indicate that several of the structural requirements of the present framework have plausible counterparts in biological cognitive systems.

In particular, biological intelligence combines statistical inference with hierarchical organization, distributed representations, structured reference frames, persistent memory, and mechanisms for learning without complete destruction of previously acquired structure. These properties provide useful analogues for the distinction developed in this paper between local semantic stabilization and open-ended semantic extension.

\subsection{Neural Analogues of Semantic Surplus}

Independent of the Free Energy Principle discussed below, several behavioral and neural findings are consistent with cognitive processes resembling those described in the formal sections. The correspondences below are analogical. They are offered as evidence that the architectural distinctions drawn in \S\ref{sec:formalization} have plausible biological counterparts, not as empirical support for the formal claims, which are mathematical and neither require nor admit such support.

\paragraph{Retention of unintegrated information.}

\citet{nagy2025} argue that semantic memory continually compresses regularities in the environment while episodic memory preserves surprising or poorly compressed experiences for later reinterpretation. Functionally, this keeps information that does not yet fit the current semantic organization available until a later stage of processing.

The relevant formal counterpart is $I^{\mathrm{loc}}$ of \eqref{eq:iloc}: the
part of the unresolved region that the system has not yet integrated, as distinct from $I^{\mathrm{inc}}$, which no amount of local work can reach. The distinction matters for the analogy. What the episodic system retains is material the organism could in principle assimilate but has not, which is precisely the character of $I^{\mathrm{loc}}$; nothing in these findings suggests a biological counterpart of formal undecidability, and we do not claim one. What the analogy does support is the architectural point of \eqref{eq:localeffect}: a system that discards unintegrated information as noise loses the material from which later reorganization would be built.

\paragraph{Generation of remote relations.}

\citet{herault2024} report that judging remote concepts as related recruits the default-mode and semantic-control networks, and that this tendency is associated with measures of real-world creative achievement. This is consistent with
creative cognition involving the generation of semantic relations extending beyond established associations.

This is closer to semantic surplus in the sense of \eqref{eq:surplus} than the preceding finding is. Surplus is not unresolved content, but the availability of several admissible ways of continuing, and the capacity to relate concepts that the current organization treats as unrelated is a capacity to generate such continuations. The connection is to Proposition~\ref{prop:tracesurplus} and Assumption~\ref{as:genrich}, which require that unresolved structure be reflectively transformed into at least two competing continuations rather than left as a static boundary.

\paragraph{Differentiation and its costs.}

\citet{vives2023} report that individuals more averse to uncertainty tend to differentiate and expand their semantic representations, trading reduced semantic interference against poorer generalization.

This bears on the trade-off developed in \eqref{eq:diversity} rather than on individual-level surplus, and we note it here because it concerns the same representational machinery. Finer differentiation increases the distinctions a system can draw while reducing what it can carry across contexts---the individual-level counterpart of the observation in \S\ref{sec:diversity_and_common_ground} that too much semantic difference prevents interaction while too little prevents generation. That the trade-off appears within a single cognitive system, and varies with a dispositional measure, suggests it is a general property of representational economies rather than an artifact of the collective model.

\subsection{Statistical Processing with Structured Scaffolds}

Modern neuroscience provides extensive evidence that the brain performs statistical and probabilistic inference. Predictive processing, probabilistic population coding, and distributed neural representations all contribute to the
construction and continuous revision of internal models.

The brain is therefore neither adequately described as a purely symbolic recursive machine nor as a flat statistical classifier. A more useful architectural analogy is a system in which statistical processes operate within hierarchical, distributed, and structurally constrained representations.

This distinction is important for the present framework. Statistical inference can provide the computational substrate for evaluating possible states, predictions, and actions. The recursive architecture, by contrast, concerns how the representational structures within which such computations occur can themselves become objects of transformation.

An influential, although controversial, example is the Thousand Brains Theory \citep{hawkins2021thousand}. The theory posits that the neocortex contains numerous semi-autonomous cortical columns, each constructing predictive models from its own sensorimotor perspective. Communication between columns is proposed to support a distributed consensus process rather than a single centralized representation.

This provides a useful biological analogue for the distinction between local semantic states and meta-level selection. Individual columns can be interpreted, at a highly abstract level, as local predictive models, while their interaction provides an analogue of the comparison and integration of multiple locally generated hypotheses.

\subsection{Reference Frames as Semantic Coordinates}

A particularly relevant aspect of the Thousand Brains framework is its emphasis on reference frames. In this view, knowledge is organized not merely as a collection of features but relative to structured coordinate systems.

Grid cells and place cells are well established in the hippocampal-entorhinal system as mechanisms involved in spatial navigation, and grid-like activity has been reported in several neocortical regions. The stronger hypothesis that grid-cell-like mechanisms are universally present throughout the neocortex remains controversial and should not be treated as an established consensus. We use reference frames as a biological analogue, not as evidence for a literal implementation of the semantic organization described in \S\ref{sec:collective_memory_extended_cognitive_space}.

The architectural idea is nevertheless important, and the formal sections give it a precise motivation. Information stored in memory need not consist solely of content; it may also require information about its relational position within a larger structure---where it originated, which other structures it relates to, and where the current cognitive state lies relative to it. This is the pair \eqref{eq:memelement},
\[
  (x, \xi) \in \mathrm{Mem}_{C},
\]
whose second component \S\ref{sec:collective_memory_extended_cognitive_space} identifies as a requirement rather than a convenience. The reason is \eqref{eq:noinclusion}. A recursive extension supplies a translation $\tau_{n}$ that need not preserve the interpretation of what it carries forward, so the meaning of a stored representation is not invariant across the sequence \eqref{eq:chain}. A representation retrieved without its coordinates is therefore not merely harder to use; it is ambiguous, since there is no way to recover what it meant in the regime that produced it. Content-only storage is adequate for a system that never changes regime and inadequate for one that does.

The biological analogy is consequently relevant beyond spatial memory. If cognition organizes knowledge relative to reference frames rather than as unstructured collections of features, it exhibits the kind of storage that recursive extension requires. Within the Thousand Brains framework, displacement-like representations are further proposed to encode relations \emph{between} reference frames---which, as an engineering analogy, suggests that relations between structures could themselves serve as coordinates for higher-order representations, and so provides a possible analogue of the nested organization described in \S\ref{sec:recursive_extension}.

We note one connection to teleology without developing it. Semantic coordinates describe where a system's knowledge is situated; they do not by themselves determine which transformations are worth pursuing. The selection operator \eqref{eq:metasel} requires a criterion, and coordinates supply at most part of what such a criterion would range over. What the analogy suggests is an engineering principle about representation: \textit{open-ended intelligence requires storage that encodes both semantic content and the relational structure needed to interpret it across successive regimes}. Whether reference frames also provide the substrate on which selection operates is a further question this framework does not settle.

\subsection{Geometric and Multi-Scale Organization}

The mathematical framework developed above uses geometric concepts such as semantic manifolds, local contraction, attractors, and successor spaces. Biological systems exhibit several forms of geometric and multi-scale organization that serve as useful analogues, although these phenomena should not be conflated.

Three distinct classes are particularly important:

\begin{enumerate}
\item periodic geometric structures used for coordinate encoding;
\item statistical self-similarity and scale-free dynamics; and
\item morphological branching structures found in neural anatomy.
\end{enumerate}

These phenomena have different mathematical properties and should not all be described as fractality.

\paragraph{Periodic Coordinate Systems.}

Grid cells in the medial entorhinal cortex provide a prominent example of a biological coordinate system. Individual grid cells exhibit spatially periodic activity, with firing locations forming an approximately triangular lattice. These structures are periodic and geometric rather than fractal. Their relevance to the present framework lies in illustrating how biological cognition can organize information relative to structured coordinate systems rather than relying exclusively on undifferentiated feature representations. Within the Thousand Brains framework, displacement-like representations are further proposed to encode relations between different reference frames. As an engineering analogy, such representations suggest that relations between objects could themselves serve as coordinates for higher-order representations. This provides a possible biological analogue of nested semantic representations, in which the output of one representational level becomes part of the coordinate system of another.

\paragraph{Morphological Fractality.}

Fractal structure is also widespread in neural anatomy. Dendritic trees exhibit recursive branching geometries that increase receptive surface area while constraining physical volume. Similarly, cortical folding exhibits statistical scaling properties that contribute to the efficient packing of cortical tissue. These phenomena should not, however, be interpreted as evidence that cognition itself operates through fractal semantic spaces. They are primarily morphological adaptations to physical and developmental constraints. Their relevance to the present framework is therefore limited to a more general observation: biological systems repeatedly employ recursive organization to construct complex structures from bounded local components.

\paragraph{Scale-Free Neural Dynamics.}

Neural activity also exhibits statistical scale invariance. Electrical recordings such as EEG often display approximately scale-free or $(1/f)$-like spectral dynamics across temporal scales. Such dynamics are compatible with a system capable of operating across multiple temporal regimes and levels of organization. They do not, however, establish deterministic fractal computation or the specific Iterated Function Systems used in the formal model. The appropriate correspondence is therefore again analogical: biological systems exhibit multi-scale organization, while the formal model provides a mathematical abstraction of how local and global levels of organization might be coupled.

\subsection{Hierarchical Constraints in Language}

Language provides another potentially relevant biological analogue. Human language processing combines distributed statistical representations with hierarchical and constituent-based organization.

\citet{cai2026} report evidence from single-neuron recordings in the human frontotemporal cortex that different neural populations participate in representing grammatical roles, phrase transitions, and larger syntactic structures during speech production. Such findings are consistent with multiple levels of linguistic organization rather than a purely flat sequence of independent word predictions. Similarly, \citet{zou2026} report evidence that neural responses to linguistic surprisal are modulated by grammatical constituent boundaries. This suggests that prediction in the human brain is constrained by hierarchical linguistic structure.

The key point is that symbolic grammar does not replace statistical learning; instead, biological language processing serves as an analogy for a system in which statistical prediction operates within a structured, recursively organized framework. This concept is closely tied to the distinction made in this work between object-level processing and meta-level transformation. Statistical mechanisms can operate within a specific representational language, while higher-order mechanisms can work on the structures that organize these computations.

\subsection{Biological Safeguards for Recursive Learning}

The brain also offers helpful analogies for the challenge of maintaining previously acquired knowledge while engaging in continuous learning.

The developmental metaphor of a \emph{tabula plena} emphasizes that biological learning does not begin from a completely unstructured state. Rather, early neural circuits may contain dense connectivity that is subsequently refined through developmental processes such as selective pruning. \citet{vargasbarroso2026} provide an example in hippocampal CA3 networks in which early circuitry exhibits dense connectivity that is subsequently reorganized.

In our proposed framework, we emphasize that a robust initial structure can coexist with subsequent specialization. A recursively evolving system need not build every representational element from scratch. Instead, it can start with a pre-existing structural foundation and gradually reorganize it over time.

A second important mechanism relates to the protection of consolidated information during online learning. \citet{gonzalez2026} describe a mechanism in the hippocampal-retrosplenial axis that involves distinct activity patterns within the same physical neuronal populations. These findings suggest that partially overlapping physical substrates can support functionally distinct representational subspaces. This concept serves as a useful analogy in engineering for recursive self-modification. A system should be able to modify its representations without necessarily overwriting structures that remain valuable for future recursion. In this framework, this means preserving semantic history while allowing new semantic manifolds to emerge.

The biological analogy, therefore, supports a broader architectural principle: recursive learning requires both plasticity and protection. A completely rigid system cannot generate successor structures, while a system that continuously overwrites its own representational foundations cannot preserve the semantic history required for meaningful recursion.

\subsection{Relationship to the Free Energy Principle}

The Free Energy Principle (FEP) provides a related but distinct theoretical perspective on the organization of biological intelligence \citep{friston2010,friston2006,parr2022active}.

Under FEP, a cognitive system minimizes variational free energy, which provides an upper bound on sensory surprise, by maintaining internal probabilistic models of hidden causes of observations. Let $o$ denote observations and $z$ latent causes. The system maintains an approximate posterior $q(z)$ over latent states that explains the observed data. Because latent variables are not directly contained in the sensory signal, the construction of $q(z)$ already introduces internal structure beyond the immediately observed data. Hierarchical predictive processing extends this principle across multiple levels of abstraction, while active inference extends it to action by allowing the system to alter its environment in ways that reduce expected free energy \citep{parr2022active}.

FEP, therefore, provides a plausible biological account of how a bounded system can construct internal structure that goes beyond its immediate sensory input. There is, however, an important distinction between FEP and our framework. FEP primarily characterizes principles of inference and action within a generative model. The present framework concerns the conditions under which the representational structure itself can become an object of recursive modification. The distinction can be expressed schematically as
\[
\text{FEP:}
\qquad
\text{reduce expected prediction error within a generative model},
\]
whereas our framework additionally requires
\[
\text{Epistemic Drive:}
\qquad
\text{identify productive unresolved structure and transform the model itself}.
\]
These objectives are not necessarily contradictory. A system may reduce prediction error within a current semantic manifold while simultaneously using persistent prediction errors, model failures, or unexplained observations as signals to extend its representation.

The critical distinction is therefore between \emph{local inference} and \emph{representational evolution}. Within a fixed semantic manifold $X_n$, prediction and inference may be strongly contractive. At the meta-level, persistent ambiguity can become an input to the construction of a successor manifold $X_{n+1}$. This suggests a testable empirical hypothesis. During periods of genuine conceptual innovation, neural processes associated with stabilization and compression of existing representations should coexist with processes associated with the generation of new semantic associations and the maintenance of unresolved alternatives. The relevant empirical question is therefore not whether the brain minimizes prediction error, but how prediction-error reduction is coupled to the formation and transformation of representational structures.

\subsection{Implications for the Present Architecture}

Taken together, these biological observations suggest several architectural motifs relevant to the formal theory:

\begin{enumerate}
\item \textbf{Local stabilization.}
Biological systems construct stable predictive representations that permit efficient inference within a local regime.

\item \textbf{Distributed perspectives.}
Multiple partially autonomous representations can coexist and interact rather than being immediately reduced to a single centralized state.

\item \textbf{Reference frames.}
Information can be organized according to structured coordinates, creating an analogy of semantic navigation through relational state spaces.

\item \textbf{Semantic preservation.}
Surprising or unresolved information can remain available for later reinterpretation rather than being immediately discarded.

\item \textbf{Recursive hierarchy.}
Representations can participate in multiple levels of organization, allowing local structures to become components of higher-order representations.

\item \textbf{Plasticity with protection.}
Learning can modify internal structure while preserving previously consolidated information.

\item \textbf{Compression with extension.}
Efficient cognition requires the compression of established structure, while open-ended cognition requires mechanisms that extend the representational system when the existing structure becomes insufficient.

\end{enumerate}

These observations do not prove the mathematical theory. They suggest that the main difference in this work---between local thinking that is limited and a flexible, ongoing change in how we understand things---has several likely examples in biology. The strongest implication is therefore not that the brain literally implements semantic manifolds, direct limits, or co-recursive operators. Rather, biological intelligence demonstrates that a finite physical substrate can support apparently open-ended cognitive development through the combination of bounded local mechanisms, hierarchical organization, persistent memory, structured reference frames, and distributed interaction.

This distinction is especially important for AGI. The physical structure of each artificial agent, like that of the human brain, is inherently finite. Thus, open-endedness does not necessarily imply an infinite internal state space. Instead, it can emerge from the recursive construction, preservation, and interaction of finite semantic structures over time and, ultimately, across multiple agents. In this regard, biological intelligence serves as a useful proof of concept for the architectural principle developed in this paper: finite cognitive systems can engage in an open-ended process when their semantic structures can be preserved, recursively transformed, and connected to successor structures through shared representational coordinates.

\section{Discussion: The Problem of Epistemic Convergence}\label{sec:discussion}

The formal results suggest that the central challenge of open-ended intelligence is not the accumulation or compression of knowledge but the preservation of the capacity to generate new distinctions.

Three processes should be distinguished. \emph{Accumulation} is the acquisition and retention of empirical information. \emph{Compression} is the reduction of representational and search costs by organizing existing information into stable structures: formally, local transformation within a regime, converging by \eqref{eq:converge} to $\Lambda_{n}$. \emph{Generation} is the construction of new semantic structures when the current system can no longer resolve its ambiguities, which is \eqref{eq:transition}.

Contemporary connectionist systems are effective at the first two. The theoretical question is whether a system optimized primarily for prediction and compression can sustain the third.

\subsection{Collective Generative Capacity}\label{sec:collective_generative_capacity}

Let $C = \{A^{1}, \dots, A^{N}\}$ be a collective in the sense of \S\ref{sec:collective_cognitive_system}. Not every admissible successor an agent could construct is one it will pursue; selection by \eqref{eq:metasel} narrows the surplus to those the agent's teleological organization admits. Write
\begin{equation}\label{eq:relsurplus}
  \mathrm{Surp}^{k}_{n,t}
  = \mathcal{M}^{k}_{n}\bigl(\mathrm{Surp}(O^{k}_{n,t})\bigr)
\end{equation}
for the \emph{pursued surplus} of agent $A^{k}$: the continuations that survive its own selection. The collective pursued surplus is
\begin{equation}\label{eq:collpursued}
  \mathrm{Surp}_{C}(n,t) = \bigcup_{k=1}^{N} \mathrm{Surp}^{k}_{n,t},
\end{equation}
and we write
\begin{equation}\label{eq:genpower}
  \mathrm{Gen}_{C}(n,t) = \bigl| \mathrm{Surp}_{C}(n,t) \bigr|
\end{equation}
for the number of distinct continuations available to the collective.

\paragraph{The degenerate case.}

If all agents pursue the same continuations,
\begin{equation}\label{eq:identical}
  \mathrm{Surp}^{i}_{n,t} = \mathrm{Surp}^{j}_{n,t}
  \qquad \forall\, i,j \in \{1,\dots,N\},
\end{equation}
then \eqref{eq:collpursued} collapses and
\begin{equation}\label{eq:collapse}
  \mathrm{Gen}_{C}(n,t) = \bigl| \mathrm{Surp}^{i}_{n,t} \bigr|
  \qquad \text{for any } i.
\end{equation}
Adding agents does not expand the number of distinct semantic possibilities. This statement is an identity rather than a conclusion, and we mention it solely to establish the scale's endpoint; total identity is an idealized limit rather than a condition that any real collective achieves.

\paragraph{The interesting case is partial correlation.}

What matters is not whether agents are identical but how much their pursued surpluses overlap. Two quantities move in opposite directions as overlap increases. Generative capacity $\mathrm{Gen}_{C}(n,t)$ falls, since the union of highly similar sets is little larger than any member. But the translatability on which \S\ref{sec:collective_corecursion} depends rises, since by \eqref{eq:common} an extension generated by $A^{i}$ becomes available to $A^{j}$ only within the fragment $L^{i,j}_{n}$ that $A^{j}$ can interpret.

The condition in \eqref{eq:diversity} of \S\ref{sec:diversity_and_common_ground} requires strict inclusions, meaning we need sufficient overlap for transfer and distinctiveness for generation. Both extremes are unproductive. The term $\mathrm{Gen}_{C}$ does not fully capture this balance, as it is maximized by complete divergence, which, as stated in \eqref{eq:common}, would hinder the collective from combining effectively. Thus, we should view generative capacity as part of a trade-off rather than as a quantity to maximize. This section aims to provide a qualitative understanding rather than proposing a scalar value that combines the two aspects.

\subsection{AI and the Convergence of Semantic Space}\label{sec:ai_and_the_convergence_of_semantic_space}

This creates a structural concern for increasingly centralized information systems. If many agents rely on similar models, datasets, optimization criteria, and feedback mechanisms, their pursued surpluses may become increasingly correlated---not identical, but overlapping enough that $\mathrm{Gen}_{C}(n,t)$ approaches $|\mathrm{Surp}^{i}_{n,t}|$ for a typical $i$. The relevant quantity is not the amount of information available to the agents but the degree of independence among the continuations they pursue.

Two distinct mechanisms could produce this. Agents may generate similar surpluses because they share architecture and training data, so that $\mathrm{Surp}(O^{k}_{n,t})$ is already similar across $k$. Or they may generate diverse surpluses but select similarly, so that the $\mathcal{M}^{k}_{n}$ of \eqref{eq:relsurplus} converge even where the underlying spaces do not. The second is the more likely route under shared evaluation criteria and reward models, and is harder to detect, since diversity remains in principle while disappearing from what is pursued.

This does not make consensus undesirable. Shared representation is indispensable for communication, coordination, and accumulation---it is the intersection in \eqref{eq:diversity}, without which nothing transfers. The concern is narrower: that compression may eliminate the differences on which further synthesis depends. Schematically,
\begin{equation}\label{eq:convergence}
  \text{compression} \to \text{convergence} \to
  \text{reduced diversity} \to \text{reduced generative capacity}.
\end{equation}
We do not claim that contemporary systems inevitably produce this outcome. The claim is that an information system that consistently rewards convergence lacks a mechanism to preserve the diversity that collective generation requires, and that this is a structural property of the reward, not a contingent feature of any implementation.

\subsection{Epistemic Convergence and Open-Endedness}\label{sec:epistemic_convergence_and_openendedness}

The concern connects directly to the definition of open-endedness. An open-ended system must generate successor structures when the current one proves insufficient. If all agents converge on the same regime and the same admissible transformations, collective interaction does not by itself produce a new successor: by \eqref{eq:collapse} the collective's options are a single agent's options, and Proposition~\ref{prop:collective} then adds nothing that Theorem~\ref{thm:fixed} does not already exclude.

The distinction here lies between performing more computations within a given space and expanding that space itself. A system can become very efficient at solving problems that can be expressed in $X_{n}$ without necessarily gaining the ability to develop $X_{n+1}$. This development requires that unresolved structures remain available as resources for transformation. This introduces a second role for surplus: at the individual level, it provides alternatives for local reasoning, while at the collective level, its distribution among agents creates the variability necessary for new structures to emerge. Therefore, collective open-endedness relies not on the size of any individual agent's state space but on the nature of interactions among them.

A parallel phenomenon appears in the multi-agent mean-field analysis of \citet{he2026cuv}, where a population's skill, value, and cognitive distributions converge to stationary limits $(P^{\ast}_{U}, P^{\ast}_{V}, P^{\ast}_{C})$ once ``the value system of the entire intelligent population may converge, leading to the belief that there is no need to continue learning'' \citep[p.~178]{he2026cuv}. Equations~\eqref{eq:collapse} and \eqref{eq:convergence} show why such collective value convergence is a structural failure mode rather than a desirable alignment equilibrium: without an epistemic drive (\S\ref{sec:epistemic_drive}) that continually generates new teleological distinctions across agents, shared optimization collapses $\mathrm{Gen}_{C}(n,t)$ to the ceiling of a single closed system.

\subsection{The Epistemic Drive}\label{sec:epistemic_drive}

Within this framework, an intelligent system's task is not exhausted by resolving ambiguity. It must also determine which ambiguities are worth pursuing, and by \eqref{eq:decomp}, that choice ranges over two regions of quite different character. A proposition in $I^{\mathrm{loc}}(O_{n})$ can be settled by local work within the current regime; one in $I^{\mathrm{inc}}(O_{n})$ can be addressed only by constructing a successor. A system that attends exclusively to the first never extends its regime, while one that attends exclusively to the second never consolidates what it has.

The resulting distinction is between two forms of optimization,
\begin{align}
  \text{exploitation} &\longrightarrow \text{resolution within } X_{n},
    \label{eq:exploit}\\
  \text{exploration} &\longrightarrow \text{transformation toward } X_{n+1},
    \label{eq:explore}
\end{align}
and neither replaces the other. Exploitation provides the local stability of \eqref{eq:converge}; exploration enables semantic growth. An open-ended system must maintain a recursive relation between them. While the contingent first-order content of $\Phi_{0}$ remains open to the particular agent or collective, the structural coupling between the two modes is fixed by the architecture: the \textit{a priori} meta-teleological trigger (Definition~\ref{def:metatelos} and Proposition~\ref{prop:stopping} in \S\ref{sec:local_convergence_and_global_openendedness}) governs the switch from local exploitation within $X_{n}$ to co-recursive exploration toward $X_{n+1}$ whenever local decidable structure saturates near $\Lambda_{n}$. This also separates structural exploration from expected-utility exploration within a fixed formalism: a Gödel machine, for instance, ``embodies a principled way of dealing with the exploration vs exploitation problem'' \citep[p.~17]{schmidhuber2006goedel} only when its proof searcher can derive from the initial axioms that ``the expected utility \ldots{} of conducting some experiment may exceed the one of not conducting it'' \citep[p.~17]{schmidhuber2006goedel}, whereas transformation toward $X_{n+1}$ in response to $I^{\mathrm{inc}}(O_{n})$ cannot be deductively certified within $\mathcal{F}_{n}$ prior to adopting the successor regime.

This marks a fundamental inversion in the role of computation on unanswerable problems. In contemporary AI systems---and even in deductively gated self-improvers such as the Gödel machine---expending compute on a question that cannot be answered within the current representational framework is purely a waste of resources, to be suppressed by early stopping, timeouts, or reward penalties once search fails to return an object-level solution. In the present architecture, by contrast, spending compute on the unsolvable region $I^{\mathrm{inc}}(O_{n})$ is the primary generative engine of the system. Because a non-omniscient agent cannot know prior to trying whether a question belongs to $I^{\mathrm{loc}}(O_{n})$ or $I^{\mathrm{inc}}(O_{n})$, it must invest computation in attempting a derivation; and by Propositions~\ref{prop:tracesurplus} and~\ref{prop:tracefallible}, when that derivation stalls against $I^{\mathrm{inc}}(O_{n})$, the resulting proper reduction trace $\pi_{n,t}(x)$ exposes non-trivial blocking premises $b \in \mathcal{B}_{n,t}(x) \setminus \{x, \neg x\}$ whose reflective translation $\tau_{n}$ converts the failed operational form into the semantic content of candidate successor regimes $X_{n+1}$. Without expending compute on problems that resist resolution in $X_{n}$, no non-trivial blocking frontier $\mathcal{B}_{n,t}(x)$ is articulated and the co-recursive transition $X_{n} \to X_{n+1}$ cannot take place.

\subsection{Implications for AGI}\label{sec:implications_for_agi}

This yields a more precise account of general intelligence. It does not require eliminating ambiguity. It requires the capacity to recognize, preserve, and transform it---not merely to answer questions but to construct frameworks in which previously unanswerable questions become tractable. The pivotal sequence is therefore
\begin{equation}\label{eq:pivotal}
  X_{n} \longrightarrow X_{n+1} \longrightarrow X_{n+2} \longrightarrow \cdots
\end{equation}
rather than indefinite optimization within a fixed $X_{n}$.

Each component of the architecture has a defined role in sustaining this sequence. Teleology determines which transformations are pursued
\eqref{eq:metasel}. Memory preserves the semantic history and the coordinates that make it interpretable \eqref{eq:memelement}. Polycontexturality maintains multiple partially independent interpretations. Surplus supplies the alternatives from which transformations are constructed \eqref{eq:surplus}. Recursion connects
these into a single process of self-extension.

Because open-ended intelligence is driven by the productive failure of computation at the undecidable frontier, the quality of an AI system can no longer be defined or measured as a static snapshot, such as a score on a fixed benchmark. Any static benchmark evaluates a system within a single frozen regime $(X_{n}, d_{n})$, measuring how closely its current state $O_{n,t}$ approximates the local attractor $\Lambda_{n}$ over decidable problems $I^{\mathrm{loc}}(O_{n})$ under an externally fixed purpose $\phi_{n}$. By Theorem~\ref{thm:fixed} and Proposition~\ref{prop:stopping}, however, evaluating a single snapshot of $X_{n}$ mistakes local contraction toward $\Lambda_{n}$ for general intelligence while treating the non-terminating exploration on $I^{\mathrm{inc}}(O_{n})$ that generates $\mathrm{Surp}^{\mathrm{dec}}(O_{n,t})$ as unproductive overhead. Just as Hegel insists that the truth of cognition is never a bare static result severed from its becoming, but ``the result together with the process through which it came about'' \citep[p.~2]{hegel1977phenomenology}, only the movement itself---the trajectory $\mathrm{Traj}(\mathcal{F}_{0})$ across recursive regime transitions $X_{n} \to X_{n+1}$ in \eqref{eq:pivotal} and the conditional algorithmic novelty $K(\mathcal{F}_{n+1} \mid H_{n})$ it sustains---can serve as the unit of evaluation.

This conclusion speaks directly to the contemporary debate over AGI evaluation and to the Tong Test proposed by \citet{peng2023tong}. \citet{peng2023tong} rightly diagnose why neither the classical Turing test nor finite task-oriented benchmarks can capture general intelligence, noting that whereas narrow AI focuses on data generalization, ``the generality of AGI emphasizes task generalization. Here, task generalization means that AGI must be able to adapt and behave well in a dynamic environment in which it may encounter an infinite number of unexpected scenarios'' \citep[p.~12]{peng2023tong}, and that ``{[}i{]}f $N$ tasks do not constitute `general' intelligence, then neither do $N + 1$ tasks'' \citep[p.~14]{peng2023tong}. By grounding evaluation in ``dynamic embodied physical and social interactions (DEPSI)'' \citep[p.~12]{peng2023tong}, ``shifting from task-oriented to ability- and value-oriented evaluations of AGI'' \citep[p.~16]{peng2023tong}, and using open-ended scenarios---``{[}b{]}y designing testing scenarios that do not specify clear task goals, the values of AGIs can be observed from their autonomous behaviors'' \citep[p.~18]{peng2023tong}---so that an agent must ``go beyond human-defined tasks and be self-driven (i.e., initiating actions driven by its own `purpose')'' \citep[p.~14]{peng2023tong}, the Tong Test moves in the right direction away from static input--output snapshots. Yet the Tong Test still realizes its infinite task generation ``by sampling configurations (e.g., combinations of different objects, physical and social fluent states, and actions) in the continuous space of DEPSI environments'' \citep[p.~17]{peng2023tong} over a predefined ``parse graph with fluent space'' \citep[p.~17]{peng2023tong}, and still condenses the evaluation into a static profile in which ``the AGI agent yields a set of scores representing the estimated ability and value levels across individual dimensions'' \citep[p.~18]{peng2023tong} across five predefined ability and value milestones. In the terms of \S\ref{sec:recursive_teleology} and \S\ref{sec:formalization}, combinatorially sampling tasks from a fixed parse-graph fluent space and scoring agents against a pre-established five-level value hierarchy still measures coverage within a single predetermined representational regime $X_{n}$, remaining subject to the closure bound of \eqref{eq:gcond}. Evaluating open-ended intelligence requires taking the dynamic impulse of the Tong Test one step further: rather than scoring where an agent settles on a fixed capability-and-value radar plot after solving combinatorially sampled tasks, evaluation must observe the recursive movement $X_{n} \to X_{n+1}$ itself---testing whether, when confronted with problems that cannot be solved within its current parse graph and value coordinates, the system expends compute on that unsolvable frontier and reifies the resulting failed derivation traces into genuinely new representational primitives and purposes.

The central implication is accordingly:

\begin{quote}
\fbox{\parbox{0.9\textwidth}{
\centering
\textit{Open-ended intelligence involves recursively transforming ambiguity into a new semantic space rather than eliminating it.}
}}
\end{quote}

The objective of such a system is not to reduce the range of possible meanings as far as possible, but to create and maintain the conditions under which productive meanings can continue to emerge.

\bibliographystyle{plainnat}
\bibliography{references}

\end{document}